\theoremstyle{plain}
\newtheorem{theorem}{Theorem}[section]
\newtheorem{lemma}[theorem]{Lemma}
\newtheorem{corollary}[theorem]{Corollary}
\theoremstyle{definition}
\newtheorem{assumption}[theorem]{Assumption}
\theoremstyle{remark}
\def\1{\bm{1}}
\def\rvu{{\mathbf{i}}}
\def\rvu{{\mathbf{u}}}
\def\rvx{{\mathbf{x}}}
\def\rmI{{\mathbf{I}}}
\def\vzero{{\bm{0}}}
\def\vtheta{{\bm{\theta}}}
\def\va{{\bm{a}}}
\def\vb{{\bm{b}}}
\def\ve{{\bm{e}}}
\def\vg{{\bm{g}}}
\def\vm{{\bm{m}}}
\def\vu{{\bm{u}}}
\def\vv{{\bm{v}}}
\def\vx{{\bm{x}}}
\DeclareMathAlphabet{\mathsfit}{\encodingdefault}{\sfdefault}{m}{sl}
\SetMathAlphabet{\mathsfit}{bold}{\encodingdefault}{\sfdefault}{bx}{n}
\def\gF{{\mathcal{F}}}
\def\gH{{\mathcal{H}}}
\def\gN{{\mathcal{N}}}
\def\gO{{\mathcal{O}}}
\def\gR{{\mathcal{R}}}
\def\sB{{\mathbb{B}}}
\def\sR{{\mathbb{R}}}
\def\sS{{\mathbb{S}}}
\newcommand{\E}{\mathbb{E}}
\newcommand{\half}{\frac{1}{2}}
\newcommand{\circled}[1]{\raisebox{.5pt}{\textcircled{\raisebox{-.9pt} {#1}}}}
\newcommand{\aqty}[1]{\left\langle #1 \right\rangle}
\newcommand{\norm}[1]{\left\| #1 \right\|}
\NewDocumentCommand\qty{ t\big t\Big t\bigg t\Bigg g o d() d|| d\|\|}
{ 
	\IfBooleanTF{#1}{\let\ltag\bigl \let\rtag\bigr}{
		\IfBooleanTF{#2}{\let\ltag\Bigl \let\rtag\Bigr}{
			\IfBooleanTF{#3}{\let\ltag\biggl \let\rtag\biggr}{
				\IfBooleanTF{#4}
				{\let\ltag\Biggl \let\rtag\Biggr}
				{\let\ltag\left \let\rtag\right}
			}
		}
	}
	\IfNoValueTF{#5}{
		\IfNoValueTF{#6}{
			\IfNoValueTF{#7}{
				\IfNoValueTF{#8}
				{()}
				{\ltag\lvert{#8}\rtag\rvert}
			}
			{\ltag(#7\rtag) \IfNoValueTF{#8}{}{|#8|}}
		}
		{\ltag[#6\rtag] \IfNoValueTF{#7}{}{(#7)} \IfNoValueTF{#8}{}{|#8|}}
	}
	{\ltag\lbrace#5\rtag\rbrace  \IfNoValueTF{#6}{}{[#6]} \IfNoValueTF{#7}{}{(#7)} \IfNoValueTF{#8}{}{|#8|}}
}
\newcommand{\ours}{{\fontfamily{qpl}\selectfont ZoAR}}
\newcommand{\rein}{{\fontfamily{qpl}\selectfont REINFORCE}}
\newcommand{\Var}{\operatorname{Var}}
\icmltitlerunning{Zeroth-Order Optimization is Secretly Single-Step Policy Optimization}
\begin{document}

\twocolumn[
\icmltitle{Zeroth-Order Optimization is Secretly Single-Step Policy Optimization}



\icmlsetsymbol{equal}{*}

\begin{icmlauthorlist}
\icmlauthor{Junbin Qiu}{hkust-gz}
\icmlauthor{Zhengpeng Xie}{hkust-gz}
\icmlauthor{Xiangda Yan}{xiaomi}
\icmlauthor{Yongjie Yang}{xiaomi}
\icmlauthor{Yao Shu}{hkust-gz}
\end{icmlauthorlist}

\icmlaffiliation{hkust-gz}{Hong Kong University of Science and Technology (Guangzhou)}
\icmlaffiliation{xiaomi}{Xiaomi Inc}

\icmlcorrespondingauthor{Yao Shu}{yaoshu@hkust-gz.edu.cn}

\icmlkeywords{Machine Learning, ICML}

\vskip 0.3in
]



\printAffiliationsAndNotice{}  

\renewcommand{\textit}[1]{{%
  \fontfamily{ppl}\itshape\selectfont #1%
}}
\renewcommand{\textbf}[1]{{%
  \fontfamily{ppl}\bfseries\selectfont #1%
}}

\begin{abstract}
Zeroth-Order Optimization (ZOO) provides powerful tools for optimizing functions where explicit gradients are unavailable or expensive to compute. However, the underlying mechanisms of popular ZOO methods, particularly those employing randomized finite differences, and their connection to other optimization paradigms like Reinforcement Learning (RL) are not fully elucidated. 
This paper establishes a fundamental and previously unrecognized connection: ZOO with finite differences is equivalent to a specific instance of single-step Policy Optimization (PO). We formally unveil that the implicitly smoothed objective function optimized by common ZOO algorithms is identical to a single-step PO objective. Furthermore, we show that widely used ZOO gradient estimators, are mathematically equivalent to the \rein{} gradient estimator with a specific baseline function, revealing the variance-reducing mechanism in ZOO from a PO perspective.Built on this unified framework, we propose \ours{} (\textit{Zeroth-Order Optimization with Averaged Baseline and Query Reuse}), a novel ZOO algorithm incorporating PO-inspired variance reduction techniques: an averaged baseline from recent evaluations and query reuse analogous to experience replay. 
Our theoretical analysis further substantiates these techniques reduce variance and enhance convergence.
Extensive empirical studies 
validate our 
theory
and demonstrate that \ours{} significantly outperforms other methods in terms of convergence speed and final performance.
Overall, our work provides a new theoretical lens for understanding ZOO and offers practical algorithmic improvements derived from its connection to PO.

\end{abstract}


\section{Introduction}
\label{sec:intro}

Zeroth-Order Optimization (ZOO) addresses the task of optimizing objectives $F(\vtheta) = \mathbb{E}_{\xi} [f(\vtheta; \xi)]$ using only function evaluations, bypassing the need for explicit gradients \citep{Nesterov2017, GhadimiL13a}. This paradigm is essential in numerous domains where gradients are intractable, computationally prohibitive, or simply unavailable, such as hyperparameter optimization \citep{gu2021optimizing}, derivative-free reinforcement learning \citep{SalimansHCS17}, communication-efficient federated learning \citep{fzoos}, black-box adversarial attacks \citep{zord, radazo}, prompt optimization \citep{zopo, unlocking}, and memory-efficient finetuning for large language models (LLMs) \citep{mezo, revisiting}. A dominant strategy within ZOO involves estimating gradients via randomized finite differences, which implicitly optimize a smoothed surrogate $F_{\mu}(\vtheta)$ of the original objective $F(\vtheta)$ \citep{Nesterov2017, radazo}. A thorough discussion on the most related works of ZOO is in Appx.~\ref{sec:related}. While foundational, these methods often suffer from high variance in their gradient estimates, potentially impeding convergence speed and solution quality. Furthermore, a deep theoretical understanding connecting these ZOO techniques to established principles in related fields like Reinforcement Learning (RL) remains underdeveloped. In parallel, Policy Optimization (PO) forms the bedrock of modern RL, seeking policy parameters $\vtheta$ to maximize expected cumulative rewards $J(\vtheta)$ \citep{sutton1999policy, sutton2018reinforcement}. Policy Gradient (PG) algorithms like REINFORCE \citep{williams1992simple} estimate $\nabla J(\vtheta)$ from trajectory rollouts. A crucial technique for stabilizing PG methods is baseline subtraction, which provably reduces gradient estimate variance and thereby accelerates learning \citep{sutton2018reinforcement}.

As the \textbf{first} primary contribution, this paper establishes a fundamental and \textit{previously unrecognized} connection: \textit{smoothed Zeroth-Order Optimization (ZOO) with finite differences is formally equivalent to a specific instance of single-step Policy Optimization (PO)}. We bridge these two fields, providing theoretical clarification for ZOO mechanisms (Sec.~\ref{sec:po-framework}): First, we {formally unveil} that the smoothed objective $F_{\mu}(\vtheta)$ implicitly targeted by common ZOO methods is \textit{identical} to a single-step PO objective $J(\vtheta)$ under a specific reward definition (Thm.~\ref{thm:obj-equiv}). Second, we {prove for the first time} that the standard Gaussian-smoothed ZOO gradient estimator is mathematically \textit{equivalent} to the single-step REINFORCE estimator using the function value $f(\vtheta; \xi)$ as a baseline (Thm.~\ref{thm:gs-equiv}). This {novel interpretation} recasts the standard ZOO baseline subtraction not merely as a finite-difference artifact, but as a principled variance reduction technique rooted in PO theory, {revealing the variance-reducing mechanism in ZOO from a PO lens}. Third, we {further extend this foundational equivalence} using importance sampling (Thm.~\ref{thm:zoo-equiv-is}), clarifying how ZOO estimators with alternative sampling distributions relate to weighted REINFORCE and optimize distinct smoothed objectives.

Building upon this {newly established unified PO framework}, our \textbf{second} primary contribution is \ours{} (\textit{Zeroth-Order Optimization with Averaged Baseline and Query Reuse}) proposed in Sec.~\ref{sec:zoar}. \ours{} {is the first to integrate} two PO-inspired variance reduction techniques directly into conventional ZOO (see Sec.~\ref{sec:algo-design}): 
\textit{(a) Averaged Baseline:} Instead of the high-variance single-point estimate $f(\vtheta; \xi)$, \ours{} introduces an averaged baseline from recent function evaluations in a history buffer. This novel ZOO adaptation of the value function estimation in PO provides a more stable Monte Carlo estimate of the smoothed objective $F_{\mu}(\vtheta)$.
\textit{(b) Query Reuse:} \ours{} computes gradient estimates using all samples in the history buffer (analogous to the experience replay in PO), effectively increasing the batch size for gradient estimation without new queries per iteration, thus enhancing sample efficiency and mitigating variance.
We further provide rigorous theoretical analysis in Appx.~\ref{appx:theory} to support the variance reduction effect of these two {newly introduced} PO-inspired techniques from the lens of ZOO theory and show the potentially improved convergence of \ours{} when variance dominates.

Our \textbf{third} contribution lies in comprehensive empirical validation (Sec.~\ref{sec:main-exp}). 
We benchmark \ours{} against other ZOO baselines, across standard synthetic functions, a black-box adversarial attack task, and memory-efficient finetuning of LLMs. 
The results consistently show that \ours{} achieves significant improvements in convergence rate and final performance, validating the practical efficacy of leveraging {these newly connected} PO techniques for ZOO. Notably, substantial gains are observed even with {our novel averaged baseline alone}, highlighting its distinct effectiveness.


\section{Preliminaries}\label{sec:prelimilaries}
This section introduces the necessary background on Zeroth-Order Optimization (ZOO) and Policy Optimization (PO) in Reinforcement Learning (RL), establishing the notation and core concepts used throughout the paper.

\textbf{Problem Setup.} We focus on the problem of minimizing a potentially non-convex objective function $F(\vtheta)$ defined as an expectation over a random variable $\xi$:
\begin{equation}
\min_{\vtheta \in \sR^d} F(\vtheta) \triangleq \mathbb{E}_{\xi}\left[f(\vtheta; \xi)\right] \label{eq:zo-obj} \ .
\end{equation}
Here, $\vtheta \in \sR^d$ represents the $d$-dimensional parameter vector we aim to optimize, $f(\vtheta; \xi)$ is a scalar-valued loss function whose evaluation depends on both the parameters $\vtheta$ and a random variable $\xi$. The defining characteristic of the Zeroth-Order (ZO) setting is the constraint that we can only access stochastic evaluations of the function value, $f(\vtheta; \xi)$, through a black-box oracle. Importantly, direct access to the gradient $\nabla_{\vtheta} f(\vtheta; \xi)$ is assumed to be unavailable or computationally prohibitive. Throughout this paper, we use $\nabla$ to denote the gradient with respect to the parameters $\vtheta$, i.e., $\nabla \equiv \nabla_{\vtheta}$.

\textbf{Zeroth-Order Optimization.}
To optimize \eqref{eq:zo-obj} without explicit gradients, ZOO algorithms employ gradient estimators constructed solely from function evaluations. A prevalent technique is randomized finite differences. A common form of such an estimator, averaged over $K$ directions is:
\begin{equation}\label{eq:fd}
\hat{\nabla} F(\vtheta) \triangleq \frac{1}{K}\sum_{k=1}^K\frac{f(\vtheta + \mu \rvu_k; \xi) - f(\vtheta; \xi)}{\mu} \rvu_k \ .
\end{equation}
where $\{\rvu_k\}_{k=1}^K$ are i.i.d. random direction vectors, $\mu > 0$ is a small smoothing radius parameter, and $K \ge 1$ dictates the number of function queries used per gradient estimate (beyond the baseline evaluation $f(\vtheta; \xi)$). Standard choices for the distribution of $\rvu_k$ include:
\begin{itemize}
[topsep=0pt,leftmargin=10mm,itemsep=0pt]
    \item[(I)] The standard multivariate Gaussian distribution $\rvu_k \sim \mathcal{N}(\vzero, \rmI_d)$ \citep{Nesterov2017}.
    \item[(II)] The uniform distribution over the unit sphere $\rvu_k \sim \mathrm{Unif}(\sS^{d-1})$ \citep{bsg}.
    \item[(III)] The uniform distribution over the standard basis vectors $\rvu_k \sim \mathrm{Unif}(\{\ve_1, \dots, \ve_d\})$ \citep{coordinate}.
\end{itemize}

It is well-established that \eqref{eq:fd} is an unbiased gradient estimation of a smoothed approximation $F_{\mu}$ (defined as below) for the original objective $F(\vtheta)$ \citep{Nesterov2017, radazo}. This means that ZOO with estimator \eqref{eq:fd} is in fact implicitly optimizing the smoothed objective  $F_{\mu}$.
\begin{equation}\label{eq:fu}
F_{\mu}(\vtheta) \triangleq \mathbb{E}_{\rvu}\left[F(\vtheta + \mu \rvu)\right]  = \mathbb{E}_{\rvu}\left[\mathbb{E}_{\xi}\left[f(\vtheta + \mu \rvu; \xi)\right]\right] \ .
\end{equation}

\textbf{Policy Optimization and \rein{}.} 
In policy optimization, the objective is typically to find the parameters $\vtheta$ of a stochastic policy $\pi_{\vtheta}(a|s)$ that maximize the expected cumulative reward. Let us consider the standard episodic setting. The objective function, $J(\vtheta)$, is the expected total discounted reward obtained by executing the policy $\pi_{\vtheta}$ starting from an initial state distribution $\rho_0(s_0)$:
\begin{equation}
\begin{aligned}
    &J(\vtheta) \triangleq \mathbb{E}_{\tau \sim p_{\vtheta}(\tau)}\left[\sum_{t=0}^{T-1} \gamma^{t} R(S_t, A_t)\right] \\
    =& \mathbb{E}_{S_0 \sim \rho_0, A_t \sim \pi_{\vtheta}(\cdot|S_t), S_{t+1} \sim P(\cdot|S_t, A_t)}\left[\sum_{t=0}^{T-1} \gamma^{t} R(S_t, A_t)\right]  \ .
\end{aligned}
\label{eq:pg-obj}
\end{equation}
Here, 
$\tau = (S_0, A_0, R_0, \dots, S_{T-1}, A_{T-1}, R_{T-1}, S_T)$ 
represents a trajectory (or episode) of states $S_t$, actions $A_t$, and rewards $R_t = R(S_t, A_t)$. The trajectory distribution $p_{\vtheta}(\tau)$ is induced by the policy $\pi_{\vtheta}$ and the transition dynamics $P(S_{t+1}|S_t, A_t)$ of environment. $\gamma \in [0, 1]$ is the discount factor, and $T$ is the episode horizon (which can be finite or infinite). Note that while policy optimization typically involves maximization, we can frame it as minimization by considering the negative reward (cost), i.e., minimizing $-J(\vtheta)$, to align with the optimization setup in \eqref{eq:zo-obj}.

Policy Gradient methods are a class of algorithms designed to optimize $J(\vtheta)$ by estimating its gradient $\nabla J(\vtheta)$ and performing gradient ascent (or descent on $-J(\vtheta)$). The Policy Gradient Theorem \citep{sutton1999policy} provides the analytical form of this gradient and a widely used policy gradient is derived from the \rein{} (w/ baseline) algorithm \citep{williams1992simple}:
\begin{equation}\label{eq:reinforce-baseline}
\nabla J(\vtheta) = \mathbb{E}_{\tau \sim p_{\vtheta}(\tau)}\left[ \sum_{t=0}^{T-1} \nabla \ln \pi_{\vtheta}(A_{t} | S_{t}) \left( G_t - b(S_t) \right) \right]
\end{equation}
where $G_t = \sum_{t'=t}^{T-1}\gamma^{t'-t} R(S_{t'}, A_{t'})$ represents the discounted return-to-go from time step $t$ and the state-dependent baseline $b(S_t)$ is applied for variance reduction. 

\section{A Policy Optimization Framework for Zeroth-Order Optimization}\label{sec:po-framework}
Building on the preliminaries in Sec.~\ref{sec:prelimilaries}, this section formally establishes the connection between Zeroth-Order Optimization (ZOO) and Policy Optimization (PO). We demonstrate that the ZOO problem
can be precisely framed as a single-step PO problem (Sec.~\ref{sec:equiv-obj}). Furthermore, we show that common ZOO gradient estimators are equivalent to specific instances of the \rein{} algorithm with a baseline (Sec.~\ref{sec:gs-rein} \& Sec.~\ref{sec:gen-rein}).

\subsection{Equivalence of Objectives in ZOO and PO}\label{sec:equiv-obj}
We begin by demonstrating the equivalence between the objective function implicitly optimized by many ZOO methods, i.e., $F_{\mu}(\vtheta)$ in \eqref{eq:fu}, and a specific instance of the PO objective. Formally, consider the standard PO objective from \eqref{eq:pg-obj} in a simplified, single-step episodic setting (i.e., $T=1$, $\gamma=0$). In this scenario, the agent takes a single action $\rvx$ sampled from a policy $\pi_{\vtheta}(\rvx)$, and receives a reward based on this action. To align with the minimization problem \eqref{eq:zo-obj}, we define the reward as the negative function value, $R_0 = -F(\rvx)$. The PO objective is then to minimize the expected negative reward:
\begin{equation}
J(\vtheta) \triangleq \mathbb{E}_{\rvx \sim \pi_{\vtheta}(\rvx)}\left[F(\rvx)\right] = \mathbb{E}_{\rvx \sim \pi_{\vtheta}(\rvx)}\left[\mathbb{E}_{\xi}\left[f(\vtheta; \xi)\right]\right] \ . \label{eq:pg-zo-obj}
\end{equation}

The connection between the ZOO smoothed objective $F_{\mu}(\vtheta)$ defined in \eqref{eq:fu} and this single-step PO objective $J(\vtheta)$ defined in \eqref{eq:pg-zo-obj} is formalized below (proof in Appx.~\ref{appx:proof-of-obj-equiv}).
\begin{theorem}[Objective Equivalence]\label{thm:obj-equiv}
\textit{\fontfamily{ppl}\selectfont
Let the policy $\pi_{\vtheta}(\rvx)$ be defined via the reparameterization $\rvx = \vtheta + \mu \rvu$, where $\rvu$ is a random vector drawn from a distribution $p(\rvu)$ independent of $\vtheta$. Then, the single-step PO objective $J(\vtheta)$ defined in \eqref{eq:pg-zo-obj} is identical to the ZOO smoothed objective $F_{\mu}(\vtheta)$ defined in \eqref{eq:fu} using the same distribution $p(\rvu)$, i.e., 
\begin{equation*}
J(\vtheta) = F_{\mu}(\vtheta) \ .
\end{equation*}
}
\end{theorem}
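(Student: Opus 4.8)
The plan is to prove the identity by a direct change of variables (the reparameterization trick), exploiting the fact that the policy $\pi_{\vtheta}(\rvx)$ is defined only implicitly through the deterministic affine map $\rvx = \vtheta + \mu\rvu$ with $\rvu \sim p(\rvu)$. The whole argument reduces to recognizing that an expectation over the action $\rvx$ can be rewritten as an expectation over the base noise $\rvu$.

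First, I would start from the definition of the single-step PO objective in \eqref{eq:pg-zo-obj}, namely $J(\vtheta) = \mathbb{E}_{\rvx \sim \pi_{\vtheta}(\rvx)}[F(\rvx)]$, using the unambiguous first equality there (with $F(\rvx) = \mathbb{E}_{\xi}[f(\rvx; \xi)]$). The one point that must be interpreted carefully is the meaning of ``$\rvx \sim \pi_{\vtheta}$'': since the policy is specified via reparameterization rather than via an explicit density, the random action $\rvx$ is the pushforward of $\rvu \sim p(\rvu)$ under the map $\rvu \mapsto \vtheta + \mu\rvu$. Stating this explicitly is what makes the subsequent change of variables legitimate.

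Second, I would invoke the law of the unconscious statistician: for any integrable $F$, the expectation of $F(\rvx)$ over $\rvx \sim \pi_{\vtheta}$ equals the expectation of $F(\vtheta + \mu\rvu)$ over $\rvu \sim p(\rvu)$. This step needs no Jacobian or density computation precisely because $\rvx$ is a deterministic function of $\rvu$ for fixed $\vtheta$. It immediately yields $J(\vtheta) = \mathbb{E}_{\rvu \sim p(\rvu)}[F(\vtheta + \mu\rvu)]$, which is exactly the definition of $F_{\mu}(\vtheta)$ in \eqref{eq:fu}; unfolding $F(\vtheta + \mu\rvu) = \mathbb{E}_{\xi}[f(\vtheta + \mu\rvu; \xi)]$ makes the match literal and closes the proof.

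There is no serious obstacle here, as the result is essentially definitional once the reparameterization is recognized. The only thing requiring a word of care is the interpretation of the policy distribution: one must declare that $\pi_{\vtheta}$ is the pushforward of $p(\rvu)$ under $\rvu \mapsto \vtheta + \mu\rvu$, and (implicitly) assume $F$ is integrable so both expectations are well-defined. An alternative, more laborious route would argue through the explicit density $\pi_{\vtheta}(\rvx) = \mu^{-d}\, p\!\left((\rvx - \vtheta)/\mu\right)$ and change variables inside the integral, recovering the same Jacobian factors; but the \emph{law of the unconscious statistician} sidesteps that bookkeeping entirely, which is why I would favor it.
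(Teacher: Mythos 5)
Your proposal is correct and follows essentially the same route as the paper's own proof, which also obtains $J(\vtheta) = \mathbb{E}_{\rvu \sim p(\rvu)}[F(\vtheta + \mu\rvu)] = F_{\mu}(\vtheta)$ by substituting the reparameterization $\rvx = \vtheta + \mu\rvu$ into the defining expectation; your explicit remarks about the pushforward interpretation and integrability merely make precise what the paper leaves implicit.
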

\textbf{Remark.} Thm. \ref{thm:obj-equiv} establishes that optimizing the smoothed function $F_{\mu}(\vtheta)$, a standard practice in ZOO theory, is equivalent to optimizing a single-step RL objective $J(\vtheta)$ where the policy samples perturbations around the current parameters $\vtheta$. This equivalence allows us to leverage concepts and algorithms from PO to understand and potentially improve ZOO methods (see Sec.~\ref{sec:zoar}). The choice of the smoothing distribution $p(\rvu)$ in ZOO corresponds to the choice of the exploration strategy (policy structure) in this PO context. To the best of our knowledge, this is the first to explicitly interpret the ZOO smoothed objective through this specific PO lens.


\subsection{Gaussian Smoothing as Single-Step \rein{} w/ Baseline}
\label{sec:gs-rein}

We now demonstrate that the widely used Gaussian-smoothed ZOO gradient estimator is equivalent to a specific instance of the \rein{} w/ baseline algorithm. Let the smoothing distribution be the standard multivariate Gaussian, $p(\rvu) = \mathcal{N}(\vzero, \rmI_d)$. The corresponding policy $\pi_{\vtheta}(\rvx)$ samples $\rvx = \vtheta + \mu \rvu$, which means $\rvx \sim \mathcal{N}(\vtheta, \mu^2\rmI_d)$. To minimize $F_{\mu}(\vtheta) = J(\vtheta)$, We apply the \rein{} w/ baseline algorithm using the policy gradient theorem \eqref{eq:reinforce-baseline}. For our single-step case ($T=1$), the policy gradient gives:
\begin{equation}\label{eq:single-pg}
\nabla J(\vtheta) = \mathbb{E}_{\rvx \sim \pi_{\vtheta}(\rvx)}\left[ \nabla \ln \pi_{\vtheta}(\rvx) \left( \mathbb{E}_{\xi}[f(\rvx; \xi)] - b \right) \right] \ ,
\end{equation}
where $b$ is a baseline that is independent of the specific sample $\rvx$. Particularly, for the Gaussian policy $\pi_{\vtheta}(\rvx) = \mathcal{N}(\vtheta, \mu^2\rmI_d)$, we have $\nabla \ln \pi_{\vtheta}(\rvx) = \frac{\rvx - \vtheta}{\mu^2}$. Substituting this into \eqref{eq:single-pg} gives:
\begin{equation}
\nabla J(\vtheta) = \mathbb{E}_{\rvx \sim \pi_{\vtheta}(\rvx)}\left[ \frac{\rvx - \vtheta}{\mu^2} \left( \mathbb{E}_{\xi}[f(\rvx; \xi)] - b \right) \right] \ .
\end{equation}
In practice, the expectations are approximated using Monte Carlo sampling. Let $ b = \E_{\xi}\left[b(\xi)\right]$, we sample $\rvx_k$ from $\pi_{\vtheta}(\rvx) $ to estimate the outer expectation and $\xi$ to estimate the inner expectation. A common stochastic gradient estimator based on $K$ samples is then:
\begin{equation}
\hat{\nabla}_{\text{GS}} J(\vtheta) \triangleq \frac{1}{K}\sum_{k=1}^{K} \frac{\rvx_k - \vtheta}{\mu^2} \left(f(\rvx_k; \xi) - b(\xi)\right) \ . \label{eq:reinforce-grad-refined}
\end{equation}

The connection between the standard Gaussian-smoothed ZOO gradient estimator from \eqref{eq:fd} and the \rein{} gradient estimator \eqref{eq:reinforce-grad-refined} is formalized below (proof in Appx.~\ref{appx:proof-of-gs-equiv}).
\begin{theorem}[Gradient Estimator Equivalence for Gaussian]\label{thm:gs-equiv}
\textit{\fontfamily{ppl}\selectfont
Let $\pi_{\vtheta}(\rvx) = \mathcal{N}(\vtheta, \mu^2\rmI_d)$ and $b(\xi) = f(\vtheta; \xi)$ in \eqref{eq:reinforce-grad-refined}. Then, the \rein{} gradient estimator \eqref{eq:reinforce-grad-refined} is identical to the Gaussian-smoothed ZOO gradient estimator \eqref{eq:fd}, i.e., 
\begin{equation*}
 \hat{\nabla}_{\normalfont{\text{GS}}} J(\vtheta)  = \hat{\nabla} F(\vtheta)\ .
\end{equation*}
}
\end{theorem}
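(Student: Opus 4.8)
The plan is to prove the identity by a direct \emph{pathwise} substitution that exploits the reparameterization defining the Gaussian policy. The key observation is that both estimators are driven by the same randomness once we couple their Monte Carlo samples: the policy $\pi_{\vtheta}(\rvx) = \mathcal{N}(\vtheta, \mu^2\rmI_d)$ is generated precisely by $\rvx_k = \vtheta + \mu\rvu_k$ with $\rvu_k \sim \mathcal{N}(\vzero, \rmI_d)$, so the samples $\rvx_k$ in \eqref{eq:reinforce-grad-refined} and the directions $\rvu_k$ in \eqref{eq:fd} refer to one and the same draw. The goal is therefore not an equality in distribution or in expectation, but a literal term-by-term algebraic match, which is the strongest possible sense of ``identical.''

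First I would record the consequences of the reparameterization: $\rvx_k - \vtheta = \mu\rvu_k$, and hence the Gaussian score term collapses as $\frac{\rvx_k - \vtheta}{\mu^2} = \frac{\mu\rvu_k}{\mu^2} = \frac{\rvu_k}{\mu}$. This step reuses the score identity $\nabla \ln \pi_{\vtheta}(\rvx) = (\rvx - \vtheta)/\mu^2$ already established in Sec.~\ref{sec:gs-rein}, which is where the only nontrivial content resides. Next I would substitute the chosen baseline $b(\xi) = f(\vtheta; \xi)$ together with the reparameterized argument $f(\rvx_k; \xi) = f(\vtheta + \mu\rvu_k; \xi)$ into the summand of \eqref{eq:reinforce-grad-refined}.

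Carrying out these substitutions turns each term of $\hat{\nabla}_{\text{GS}} J(\vtheta)$ into $\frac{\rvu_k}{\mu}\bigl(f(\vtheta + \mu\rvu_k; \xi) - f(\vtheta; \xi)\bigr)$, which is exactly the $k$-th summand of the finite-difference estimator $\hat{\nabla} F(\vtheta)$ in \eqref{eq:fd}. Averaging over $k = 1, \dots, K$ then yields $\hat{\nabla}_{\text{GS}} J(\vtheta) = \hat{\nabla} F(\vtheta)$, completing the argument. I do not expect a substantive obstacle here: once the Gaussian score has been computed and the baseline fixed to the central query value $f(\vtheta;\xi)$, the equivalence is a one-line cancellation of the factor $\mu$. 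The conceptual weight of the statement lies not in this derivation but in the \emph{identification} it licenses, namely that the standard finite-difference baseline subtraction is precisely REINFORCE's variance-reducing baseline; the one care point is simply to state clearly that the two estimators share the same underlying samples so that the match is pathwise rather than merely in expectation.
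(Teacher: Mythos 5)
Your proposal is correct and matches the paper's own proof essentially verbatim: Appendix~\ref{appx:proof-of-gs-equiv} performs the same direct substitution of $\rvx_k = \vtheta + \mu\rvu_k$ and $b(\xi) = f(\vtheta;\xi)$ into \eqref{eq:reinforce-grad-refined}, cancelling one factor of $\mu$ to recover \eqref{eq:fd} term by term. Your explicit remark that the two estimators share the same draws $\rvu_k$, so the equality is pathwise rather than merely in expectation, is a welcome clarification of what the paper leaves implicit, but it is not a different argument.
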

\textbf{Remark.} Thm. \ref{thm:gs-equiv} provides the first explicit interpretation of the common ZOO gradient estimator \eqref{eq:fd} from a novel PO lens. Specifically, it reveals that Gaussian-smoothed ZOO estimator can be interpreted as \rein{} gradient estimator with gaussian policy. Moreover, 
it unveils that the subtraction of $f(\vtheta; \xi)$ in conventional ZOO is not merely a result from the first-order Taylor polynomial but corresponds precisely to using a baseline in the \rein{} algorithm. This baseline is known to reduce the variance of the gradient estimate without introducing bias \citep{sutton2018reinforcement}. This perspective not only aligns with but also provides a theoretical support for observations in works like \citep{SalimansHCS17} where similar estimators were used in the context of evolution strategies, highlighting the variance reduction benefit without explicitly linking it to the \rein{} w/ baseline mechanism.


\subsection{Generalization Through Importance Sampling}\label{sec:gen-rein}
The previous section only established the equivalence for Gaussian smoothing, whereas ZOO methods can also apply other sampling distributions for $\rvu_k$, like the uniform distribution over the unit sphere or coordinate directions mentioned in Sec.~\ref{sec:prelimilaries}. We hence generalize our PO perspective to encompass these cases using importance sampling (IS) in this section.

Suppose we still consider the objective $J(\vtheta)$ with the Gaussian policy $\pi_{\vtheta}(\rvx) = \mathcal{N}(\vtheta, \mu^2\rmI_d)$, but we want to estimate its gradient using samples drawn from a different proposal distribution $p(\rvx)$. The policy gradient using importance sampling becomes:
\begin{equation}
\nabla J(\vtheta) = \mathbb{E}_{\rvx \sim p(\rvx)}\left[ \frac{\pi_{\vtheta}(\rvx)}{p(\rvx)} \nabla \ln \pi_{\vtheta}(\rvx) \left( \mathbb{E}_{\xi}[f(\rvx; \xi)] - b \right) \right] \ .
\end{equation}
Similar to \eqref{eq:reinforce-grad-refined}, by substituting $\nabla \ln \pi_{\vtheta}(\rvx) = \frac{\rvx - \vtheta}{\mu^2}, b=\E_{\xi}\left[b(\xi)\right]$ and using Monte Carlo approximation with samples $\rvx_k \sim p(\rvx)$, we get the stochastic gradient estimator:
\begin{equation}
\hat{\nabla}_{\text{IS}} J(\vtheta) \triangleq \frac{1}{K}\sum_{k=1}^{K} \frac{\pi_{\vtheta}(\rvx_k)}{p(\rvx_k)} \frac{\rvx_k - \vtheta}{\mu^2} \left(f(\rvx_k; \xi) - b(\xi)\right) \ . \label{eq:reinforce-grad-is}
\end{equation}

The connection between the ZOO gradient estimator under various sampling distributions from \eqref{eq:fd} and the IS-based REINFORCE gradient estimator \eqref{eq:reinforce-grad-is} is formalized below (proof in Appx.~\ref{appx:proof-of-zoo-equiv-is}).
\begin{theorem}[Extended Gradient Estimator Equivalence]\label{thm:zoo-equiv-is}
\textit{\fontfamily{ppl}\selectfont
Let $\pi_{\vtheta}(\rvx) = \mathcal{N}(\vtheta, \mu^2\rmI_d)$, $p(\rvx) = p(\vtheta + \mu\rvu)$, and $b(\xi) = f(\vtheta; \xi)$ in \eqref{eq:reinforce-grad-is}. IS-based \rein{} gradient estimator \eqref{eq:reinforce-grad-is} is identical to a scaled ZOO gradient estimator \eqref{eq:fd} for the three different distributions of $\rvu_k$ in Sec.~\ref{sec:prelimilaries}:
\begin{equation*}
\hat{\nabla}_{\normalfont{\text{IS}}} J(\vtheta) = \gamma \hat{\nabla} F(\vtheta) \ .
\end{equation*}
Particularly, let $\Gamma(\cdot)$ be the Gamma function, \normalfont{(I)} if  $\rvu_k \sim \mathcal{N}(\vzero, \rmI_d)$, $\gamma = 1$; \normalfont{(II)} if $\rvu_k \sim \mathrm{Unif}(\sS^{d-1})$, $\gamma = \frac{2^{1-d/2}\exp(-1/2)}{\mu \Gamma(d/2)}$; \normalfont{(III)} if $\rvu_k \sim \mathrm{Unif}(\{\ve_1, \dots, \ve_d\})$, $\gamma = \frac{d\exp(-1/2)}{(2\pi \mu^2)^{d/2}}$.
}
\end{theorem}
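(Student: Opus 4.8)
The plan is to prove the identity \emph{term-by-term} in the two finite sums, exploiting the shared reparameterization $\rvx_k = \vtheta + \mu\rvu_k$ that underlies both estimators. First I would substitute this into the IS-based estimator \eqref{eq:reinforce-grad-is}: the score factor collapses to $\frac{\rvx_k - \vtheta}{\mu^2} = \frac{\rvu_k}{\mu}$, and with the baseline $b(\xi) = f(\vtheta;\xi)$ the advantage factor becomes $f(\rvx_k;\xi) - b(\xi) = f(\vtheta + \mu\rvu_k;\xi) - f(\vtheta;\xi)$, which is exactly the finite-difference numerator in \eqref{eq:fd}. Consequently the $k$-th summand of $\hat{\nabla}_{\text{IS}} J(\vtheta)$ equals the $k$-th summand of $\hat{\nabla}F(\vtheta)$ multiplied by the importance weight $w_k \triangleq \pi_{\vtheta}(\rvx_k)/p(\rvx_k)$. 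It therefore suffices to show that $w_k$ is a sample-independent constant $\gamma$ under each proposal, because then $\gamma$ factors out of the summation and yields $\hat{\nabla}_{\text{IS}} J(\vtheta) = \gamma\,\hat{\nabla}F(\vtheta)$.

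The second step is to evaluate $w_k$ in each of the three cases. The pivotal observation is that in all three the drawn points lie at a fixed Euclidean distance from $\vtheta$, so the spherically symmetric Gaussian density $\pi_{\vtheta}(\rvx_k) = (2\pi\mu^2)^{-d/2}\exp(-\|\rvx_k - \vtheta\|^2/2\mu^2)$ is constant across samples. In case (I) the proposal coincides with the policy, $p = \pi_{\vtheta}$, so $w_k \equiv 1$ and $\gamma = 1$. In cases (II) and (III) every sample satisfies $\|\rvx_k - \vtheta\| = \mu$, whence $\pi_{\vtheta}(\rvx_k) = (2\pi\mu^2)^{-d/2}\exp(-1/2)$, while the proposal density is the constant value of a uniform law on its support. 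For (II) the support is the sphere of radius $\mu$, whose surface area is $\frac{2\pi^{d/2}}{\Gamma(d/2)}\mu^{d-1}$, giving $p(\rvx_k) = \frac{\Gamma(d/2)}{2\pi^{d/2}\mu^{d-1}}$, so the ratio simplifies to $\frac{2^{1-d/2}\exp(-1/2)}{\mu\Gamma(d/2)}$; for (III) the support is the $d$ points $\{\vtheta + \mu\ve_j\}$, each of mass $1/d$, so $p(\rvx_k) = 1/d$ and the ratio is $\frac{d\exp(-1/2)}{(2\pi\mu^2)^{d/2}}$. These recover precisely the claimed values of $\gamma$.

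The main obstacle is conceptual rather than computational. In cases (II) and (III) the proposal $p$ is supported on a set of Lebesgue measure zero in $\sR^d$ (a $(d-1)$-sphere, or a finite point set), so $\pi_{\vtheta}(\rvx_k)/p(\rvx_k)$ is not a genuine Radon--Nikodym derivative and the nominal importance-sampling identity fails in the usual measure-theoretic sense. I would resolve this by interpreting $\gamma$ as the \emph{formal} ratio of the ambient Gaussian density to the intrinsic uniform density on the lower-dimensional support, equivalently treating the statement as a purely algebraic identity between the two finite sums rather than as a claim about unbiasedness of the IS estimator. Under this reading, the constancy of $\pi_{\vtheta}$ on each fixed-radius support is exactly what makes $\gamma$ well-defined and pulls it cleanly out of the sum. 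I would also flag that the scaling $\gamma \neq 1$ in (II) and (III) reflects this dimensional mismatch, which is the reason these ZOO estimators implicitly target smoothed objectives distinct from the Gaussian-smoothed $F_{\mu}(\vtheta)$.
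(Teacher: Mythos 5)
Your proof is correct and takes essentially the same route as the paper's proof in Appx.~\ref{appx:proof-of-zoo-equiv-is}: substitute the reparameterization $\rvx_k = \vtheta + \mu\rvu_k$ so the score and advantage factors collapse to the finite-difference summand, observe that the importance weight $\pi_{\vtheta}(\rvx_k)/p(\rvx_k)$ is a sample-independent constant $\gamma$ (since every sample lies at distance $\mu$ from $\vtheta$ in cases (II) and (III)), and evaluate the density ratio case by case to recover the stated values of $\gamma$. Your closing remark that in cases (II) and (III) this ratio is only a \emph{formal} one---the proposal is supported on a Lebesgue-null set, so $\pi_{\vtheta}/p$ is not a genuine Radon--Nikodym derivative and the identity should be read as an algebraic equality of the two finite sums---is a legitimate clarification that the paper's proof silently glosses over, but it refines rather than alters the argument.
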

\textbf{Remark.} Thm. \ref{thm:zoo-equiv-is} reveals that ZOO estimators employing non-Gaussian sampling distributions for $\rvu_k$ (e.g., uniform on sphere or coordinate-wise) can also be interpreted as \rein{} gradient estimators through the lens of importance sampling. Specifically, the ZOO gradient $\hat{\nabla} F(\vtheta)$ (unbiased for its own smoothed objective $F_\mu(\vtheta)$ with the non-Gaussian $p(\rvu)$) remains equivalent to an IS-based REINFORCE estimator for $J(\vtheta)$ with the Gaussian policy scaled by $\gamma$. This scaling factor $\gamma$ arises from the implicit importance weights between the Gaussian policy for $J(\vtheta)$ and the ZOO proposal distribution $p(\rvu)$. This perspective unifies diverse ZOO sampling strategies under the \rein{} lens, provides a principled reason for the learning rate adjustments in Cor.~\ref{cor:equiv-conv}, and further solidifies the fundamental equivalence between the convergence of ZOO and single-step PO.
\begin{corollary}[Convergence Equivalence]\label{cor:equiv-conv}
\textit{\fontfamily{ppl}\selectfont
Under the same condition in  Thm.~\ref{thm:zoo-equiv-is}, let baseline $b(\xi)$ and update rule (e.g. gradient descent algorithm and Adam~\citep{adam} algorithm) be the same for ZOO and \rein{}, they achieve identical convergence when
\begin{equation}
  \eta_{\text{R}} = \eta_{\text{Z}} / \gamma \ . \nonumber
\end{equation}
Here, $\gamma$ is from Thm.~\ref{thm:zoo-equiv-is}, $\eta_{\text{Z}}$ and $\eta_{\text{R}}$ are the learning rates of ZOO and \rein{}, respectively.
}
\end{corollary}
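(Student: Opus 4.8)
The plan is to reduce ``identical convergence'' to ``identical iterate trajectories'' and to establish the latter by a coupling-plus-induction argument. Two facts are already in hand: Thm~\ref{thm:obj-equiv} shows the two methods optimize the same function ($J = F_{\mu}$), and Thm~\ref{thm:zoo-equiv-is} gives the pointwise identity $\hat{\nabla}_{\text{IS}} J(\vtheta) = \gamma\,\hat{\nabla} F(\vtheta)$, which holds sample-by-sample once the \rein{} draws are taken as $\rvx_k = \vtheta + \mu\rvu_k$ with the same directions $\rvu_k$ and the same shared baseline $b(\xi) = f(\vtheta;\xi)$ used by ZOO (this is exactly the hypothesis that the baseline and samples be common to both). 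I would therefore fix a common initialization $\vtheta_0$ and couple the randomness so that at every iteration both updates are formed from identical $\{\rvu_k\}$ and $\xi$; it then suffices to show that each single step preserves the coupling, after which induction on the iteration index $t$ yields identical trajectories.

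For gradient descent (and plain SGD) the inductive step is immediate. If $\vtheta_t$ is shared, the \rein{} step is $\vtheta_t - \eta_{\text{R}}\,\hat{\nabla}_{\text{IS}} J(\vtheta_t) = \vtheta_t - \eta_{\text{R}}\gamma\,\hat{\nabla} F(\vtheta_t)$ by Thm~\ref{thm:zoo-equiv-is}, whereas the ZOO step is $\vtheta_t - \eta_{\text{Z}}\,\hat{\nabla} F(\vtheta_t)$. Since the update map is linear (homogeneous of degree one) in the gradient, the constant factor $\gamma$ is absorbed cleanly into the step size, and the two steps coincide exactly when $\eta_{\text{Z}} = \eta_{\text{R}}\gamma$, i.e. $\eta_{\text{R}} = \eta_{\text{Z}}/\gamma$. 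This closes the induction and gives identical iterates, hence identical convergence.

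The adaptive case (Adam) is the step I expect to be the main obstacle, because its preconditioner is not linear in the gradient, so the clean ``absorb $\gamma$ into the step size'' argument does not transfer verbatim. Here I would run a joint induction on the iterate together with the first- and second-moment accumulators, zero-initialized for both methods. Feeding $\gamma g_t$ in place of $g_t$ at each step yields $m_t \mapsto \gamma m_t$ and $v_t \mapsto \gamma^2 v_t$, hence $\hat{m}_t \mapsto \gamma\hat{m}_t$ and $\sqrt{\hat{v}_t} \mapsto \gamma\sqrt{\hat{v}_t}$, so the preconditioned direction $\hat{m}_t/(\sqrt{\hat{v}_t}+\epsilon)$ becomes $\gamma\hat{m}_t/(\gamma\sqrt{\hat{v}_t}+\epsilon)$. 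The crux is the stabilizer $\epsilon$: up to this term the numerator and denominator scalings cancel (Adam's approximate scale invariance), and one must argue that this residual effect does not break the iterate-level coupling under the matched learning rates. I would settle the $\epsilon$ term carefully, either in the $\epsilon \to 0$ regime where the cancellation is exact or by carrying it through the recursion, and then invoke the same coupling-plus-induction scheme to conclude that the two iterate sequences, and therefore their convergence behavior, agree under the stated learning-rate correspondence.
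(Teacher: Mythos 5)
Your coupling-plus-induction argument for the (S)GD case is exactly the argument the paper intends: the paper in fact gives no written proof of this corollary, treating it as an immediate consequence of Thm.~\ref{thm:zoo-equiv-is}, whose identity $\hat{\nabla}_{\text{IS}} J(\vtheta) = \gamma \hat{\nabla} F(\vtheta)$ holds sample-by-sample once the two methods share $\{\rvu_k\}$, $\xi$, the baseline $b(\xi)=f(\vtheta;\xi)$, and the initialization. Your observation that the induction closes because the GD update map is homogeneous of degree one in the gradient, so the constant $\gamma$ is absorbed by $\eta_{\text{R}} = \eta_{\text{Z}}/\gamma$, is the correct and complete version of that implicit argument (and it is what the paper's shared-seed experiments in Appx.~\ref{sec:additional_experiments} validate empirically, where the Gaussian case $\gamma = 1$ is used).

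For Adam, however, your proposal has a genuine gap, and your own computation exposes it rather than resolves it. With zero-initialized moments, feeding $\gamma g_t$ gives $\vm_t \mapsto \gamma \vm_t$ and $\vv_t \mapsto \gamma^2 \vv_t$, so the preconditioned direction becomes $\gamma \vm_t / \sqrt{\gamma^2 \vv_t + \zeta}$. In the regime $\zeta \to 0$ where you hope the cancellation is ``exact,'' it is exact in the wrong way: the direction equals $\vm_t/\sqrt{\vv_t}$ with the $\gamma$ fully absorbed by the normalizer, so under the matched rates $\eta_{\text{R}} = \eta_{\text{Z}}/\gamma$ the \rein{} step is $1/\gamma$ times the ZOO step and the coupled trajectories \emph{diverge}; identical iterates in that regime would instead require $\eta_{\text{R}} = \eta_{\text{Z}}$. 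For $\zeta > 0$, no constant rescaling of the learning rate makes the two Adam trajectories exactly coincide, because $\vtheta \mapsto \vtheta - \eta\, \vm/\sqrt{\vv + \zeta}$ is not positively homogeneous of degree one in the gradient estimate, so ``carrying $\epsilon$ through the recursion'' cannot close your induction. The honest conclusion is that the stated learning-rate correspondence yields identical iterates exactly for update rules that are homogeneous of degree one in the gradient (GD, SGD, momentum SGD); for Adam the claim holds trivially only when $\gamma = 1$ (the Gaussian case of Thm.~\ref{thm:zoo-equiv-is}), and for the non-Gaussian cases, where $\gamma$ is an extreme constant such as $d e^{-1/2}/(2\pi\mu^2)^{d/2}$, the discrepancy is not a negligible residual. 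A correct write-up should either restrict the corollary's proof to homogeneous update rules or prove a separate statement for Adam with the matching $\eta_{\text{R}} = \eta_{\text{Z}}$ up to an explicit $\zeta$-dependent error.
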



\section{Zeroth-Order Optimization with Averaged Baseline and Query Reuse}\label{sec:zoar}

\begin{algorithm}[t]
\caption{ZOO with Averaged Baseline and Query Reuse}\label{alg:ours}
\begin{algorithmic}
\STATE {\bfseries Input:} objective function $f$, learning rate $\eta$, moment decay rates $\beta_1$, $\beta_2$, number of queries $K$ and histories $N$
\STATE {\bfseries Initialize:} $\vtheta_0, \vm_0, \vv_0, \gH_0 = \varnothing$

\FOR{iteration $t \in [T]$}
\STATE Sample $\{\rvu_k\}_{k=1}^{K}$ \\
\STATE Query $\{y_k| y_k = f(\vtheta_{t-1}{+}\mu \rvu_k; \xi)\}_{k=1}^K$ \\[1pt]
\STATE $\gH_t \leftarrow \gH_{t-1} \setminus \gH_{t-N} \cup \big\{\left(\rvu_k, f(\vtheta_{t-1}{+}\mu \rvu_k; \xi)\right)\big\}_{k=1}^K$ \\[2pt]
\STATE $\,b_t \leftarrow \frac{1}{|\gH_t|} \sum_{(\rvu, y) \in \gH_t} y$ \\
\STATE $\vg_{t} \leftarrow \frac{1}{|\gH_t|-1}\sum_{(\rvu, y) \in \gH_t} \frac{y - b_t}{\mu} \rvu$ \\[2pt]
\STATE $\vm_t \gets \beta_1 \vm_{t-1} + (1 - \beta_1) \vg_t$ \\[2pt]
\STATE $\vv_t \gets \beta_2 \vv_{t-1} + (1- \beta_2) \vm^{2}_t$\\[5pt]
\STATE $\vtheta_t \gets \vtheta_{t-1} - \eta\frac{\vm_t}{\sqrt{\vv_t + \zeta}}$
\ENDFOR
\STATE {\bfseries Output:} $\vtheta_T$
\end{algorithmic}
\end{algorithm}

Leveraging the Policy Optimization (PO) framework established in Sec.~\ref{sec:po-framework}, this section introduces \ours{} (Algo.~\ref{alg:ours}), an improved Zeroth-Order Optimization (ZOO) algorithm. We illustrate in Sec.~\ref{sec:algo-design} how \ours{} incorporates PO-inspired variance reduction techniques, including an averaged baseline and query reuse, for enhanced efficiency. While Algo.~\ref{alg:ours} demonstrates these techniques using the update rule from $\mathcal{R}$-AdaZO \citep{radazo}, their core design is general and readily adaptable to other update rules like ZO-SGD \citep{GhadimiL13a} and ZO-AdaMM \citep{zo-adamm}.
Furthermore, we provide theoretical analyses in ZOO theory to validate these PO-derived improvements in Appx.~\ref{appx:theory}.

\subsection{Algorithm Design}\label{sec:algo-design}
We introduce the two key PO-inspired techniques in \ours{} (line 5 of Algo.~\ref{alg:ours}), namely the averaged baseline and query reuse, below.

\textbf{Averaged Baseline.}
As established in Thm.~\ref{thm:gs-equiv}, the standard Gaussian-smoothed ZOO gradient estimator \eqref{eq:fd} implicitly uses $f(\vtheta; \xi)$ as a baseline, corresponding to $b(\xi) = f(\vtheta; \xi)$ in the \rein{} framework \eqref{eq:reinforce-grad-refined}. While this baseline helps reduce variance compared to no baseline, it may not be the most effective choice. In the single-step \rein{} algorithm, the baseline that minimizes the variance of the gradient estimate $\nabla \ln \pi_{\vtheta}(\rvx) (R(\rvx) - b)$ is given by $b^* = \frac{\mathbb{E}_{\rvx \sim \pi_{\vtheta}(\rvx)} \left[(\nabla \ln \pi_{\vtheta}(\rvx))^2 R(\rvx) \right]}{ \mathbb{E}_{\rvx \sim \pi_{\vtheta}(\rvx)} \left[(\nabla \ln \pi_{\vtheta}(\rvx))^2\right]}$. A simpler and widely used near-optimal baseline is the expected reward itself, $b = \mathbb{E}_{\rvx \sim \pi_{\vtheta}(\rvx)}[R(\rvx)]$.
In our ZOO context, where $R(\rvx) = -F(\rvx) = -\mathbb{E}_{\xi}[f(\rvx; \xi)]$ and $\rvx = \vtheta + \mu \rvu$, this corresponds to choosing the baseline as $b = \mathbb{E}_{\rvx \sim \pi_{\vtheta}(\rvx)}[F(\rvx)] = F_{\mu}(\vtheta)$.
The standard ZOO baseline $f(\vtheta; \xi)$ can be seen as a single-sample, zero-order approximation of $F_{\mu}(\vtheta)$ evaluated at the center point.
Algo.~\ref{alg:ours} proposes using a more robust estimate of this expected value. Specifically, it computes the baseline $b_t$ as the empirical average of function values obtained from recent queries stored in a history buffer $\gH_t$:
\begin{equation}
b_t \triangleq \frac{1}{|\gH_t|} \sum_{(\rvu, y) \in \gH_t} y \ , \label{eq:baseline-avg}
\end{equation}
where $y = f(\vtheta_{t'} + \mu \rvu; \xi)$ for some past iteration $t' \le t-1$. This average in fact serves as a Monte Carlo estimate of the expected function value $F_{\mu}(\vtheta)$, potentially providing a lower-variance baseline compared to the single evaluation $f(\vtheta; \xi)$ used implicitly in \eqref{eq:fd}, which we will verify in Appx.~\ref{appx:theory}.

\textbf{Query Reuse.}
To further enhance sample efficiency and reduce variance, Algo.~\ref{alg:ours} incorporates query reuse. This mirrors the concept of using off-policy data, common in algorithms like Proximal Policy Optimization (PPO) \citep{schulman2017proximal}, where experiences gathered under previous policies are reused to improve the current policy update, thereby increasing data efficiency. In our ZOO context, Algo.~\ref{alg:ours} maintains a history buffer $\gH_t$ containing the $N\times K$ most recent query results (pairs of perturbation vectors $\rvu$ and corresponding function values $y$). At iteration $t$, $K$ new queries based on $\vtheta_{t-1}$ are performed, added to the buffer, and the oldest $K$ queries are discarded. Crucially, the gradient estimate $\vg_t = \hat{\nabla} F(\vtheta_{t-1})$ is then computed using all samples currently in the history $\gH_t$:
\begin{equation}
\hat{\nabla} F(\vtheta_{t-1}) \triangleq \frac{1}{|\gH_t| - 1}\sum_{(\rvu, y) \in \gH_t} \frac{y - b_t}{\mu} \rvu \ . \label{eq:grad-avg}
\end{equation}
This approach uses all $|\gH_t| = N \times K$ samples, effectively increasing the gradient estimation batch size without additional queries beyond the initial $K$. The resulting averaging over a larger set is expected to produce a gradient estimate with significantly lower variance (verified in Appx.~\ref{appx:theory}).

\begin{figure*}[t]
\vspace{-2mm}
    \centering
    \includegraphics[width=0.95\textwidth]{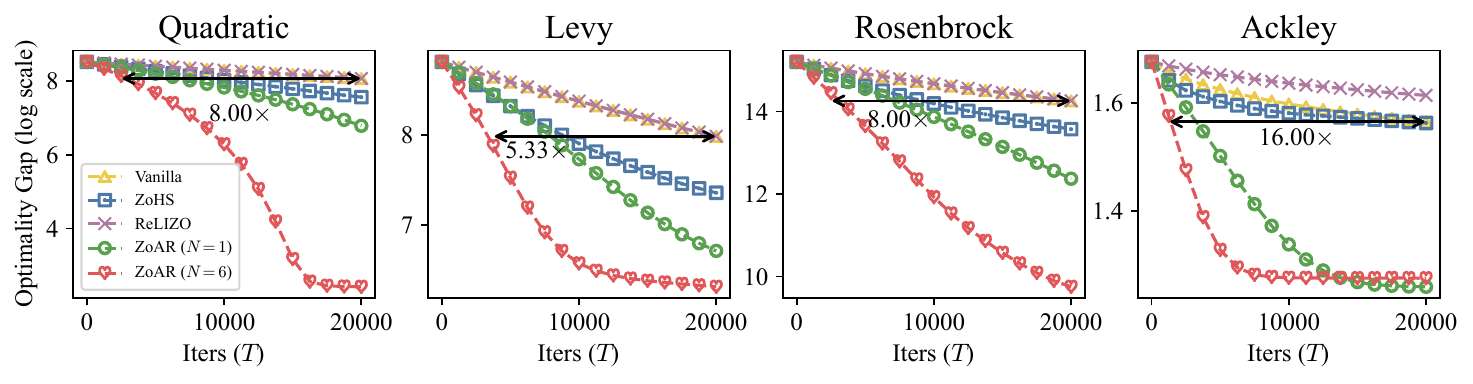}
    \vspace{-3mm}
    \caption{Comparison of convergence among different zeroth-order optimization algorithms on four synthetic functions. 
    All curves are averaged over 5 independent runs.}
    \label{fig:synthetic_adam}
    \vspace{-4mm}
\end{figure*}

\textbf{Advantages.}
The proposed \ours{} algorithm offers several compelling advantages. \textit{(a)} It provides significant \textit{variance reduction} by employing an averaged baseline $b_t$ and reusing historical queries from $\gH_t$ (see Appx.~\ref{appx:theory}) compared to conventional ZOO with finite difference \citep{Nesterov2017}. \textit{(b)} Compared to \citep{prgf, relizo}, the algorithm maintains compelling \textit{computational and memory efficiency}, as the overhead for managing the history buffer (using only random seeds like \citep{mezo, ferret}) and performing the averaging calculations is generally modest, which is scaling linearly with history size. \textit{(c)} \ours{} benefits from \textit{ease of Implementation}, representing a straightforward modification to standard ZOO procedures by incorporating a buffer and simple averaging steps. \textit{(d)} It offers enhanced \textit{sample efficiency and flexibility} by leveraging the accumulated information in $\gH_t$: a meaningful gradient estimate $\vg_t$ can be computed even if only a small number of new queries (potentially $K=1$) are performed at each iteration. These advantages make \ours{} a practical approach for improving ZOO performance, particularly in optimization settings where variance control and query efficiency is crucial.

\section{Experiments}\label{sec:main-exp}
In this section, we conduct extensive experiments on synthetic functions (Sec. \ref{sec:synthetic}) and black-box adversarial attack (Sec. \ref{sec:adversarial}). 
More results, e.g., the equivalence between ZOO and \rein{}, memory-efficient LLM fine-tuning, are in Appx. \ref{sec:additional_experiments}.

\subsection{Synthetic Functions}\label{sec:synthetic}

\begin{table}[t]
\vspace{-2mm}
\caption{Comparison of the minimal number of iterations to achieve a successful attack for different ZOO methods. Results are averaged over 5 runs. The speedup is compared against the Vanilla ZOO.}
\label{tab:adversarial-radazo}
\centering
\resizebox{\columnwidth}{!}{
\begin{tabular}{@{}llcccc@{}}
\toprule
&   Metric    & Vanilla         & ZoHS           & ZoAR w/o history & ZoAR           \\ \midrule
\multirow{2}{*}{$\mathcal{R}$-AdaZO} & \# Iters $(\times 10^2)$ & 23.3$\pm$5.4  & 23.3$\pm$2.6 & 12.4$\pm$1.0   & \textbf{8.56}$\pm$2.2 \\
                                        & Speedup                  & 1.0$\times$    & 1.0$\times$   & 1.87$\times$    & \textbf{2.72}$\times$  \\ \midrule
\multirow{2}{*}{ZO-AdaMM}     & \# Iters $(\times 10^2)$ & \;\;65.3$\pm$12.7 & 36.9$\pm$5.8 & 32.6$\pm$8.0   & \textbf{11.0}$\pm$2.8 \\
                                        & Speedup                  & 1.0$\times$    & 1.8$\times$   & 2.0$\times$     & \textbf{5.92}$\times$  \\ \bottomrule
\end{tabular}
}
\vspace{-2mm}
\end{table}

\textbf{The Superiority of \ours{}.} We subsequently evaluate the convergence rate and final performance of \ours{} against several baselines on four synthetic functions of dimensionality $d=10^4$ (detailed in Appx.~\ref{subsec:synthetic_details}). The compared methods include Vanilla ZOO \citep{Nesterov2017}, ReLIZO \citep{relizo}, and ZoHS (details in Appx.~\ref{appx:baselines}). Fig.~\ref{fig:synthetic_adam} presents the results using the ZO-AdaMM \citep{zo-adamm} update rule, while corresponding results under the $\mathcal{R}$-AdaZO \citep{radazo} update rule are available in Appx.~\ref{subsec:synthetic_radazo}. The results in Fig.~\ref{fig:synthetic_adam} show that \ours{} consistently outperforms all baseline algorithms in both convergence speed and final optimization performance. Notably, \ours{} with $N=6$ achieves an $8\times$ speedup over Vanilla ZOO on the Quadratic and Rosenbrock functions, and a $16\times$ speedup on the Ackley function. Moreover, comparing \ours{} with $N=6$ (utilizing query reuse) against \ours{} with $N=1$ (using only the averaged baseline) illustrates the significant additional benefit of 
historical information.

\subsection{Black-box Adversarial Attack}\label{sec:adversarial}
We further evaluate the performance of \ours{} in the domain of black-box adversarial attacks, a prominent application of zeroth-order optimization \citep{prgf, zord}. In this scenario, the goal is to identify an optimal perturbation $\boldsymbol{\delta}$ for a given input image $x$ such that a target black-box model misclassifies $x+\boldsymbol{\delta}$. Our experimental setup follows that introduced by \citep{radazo}, targeting a convolutional neural network (CNN) trained on the MNIST dataset \citep{lecun1998mnist} (more details in Appx.~\ref{subsec:adversarial_details}). We assess algorithm efficiency by the minimum number of iterations required to achieve a successful attack. The comparison includes Vanilla ZOO and ZoHS, with each evaluated under both the ZO-AdaMM \citep{zo-adamm} and $\mathcal{R}$-AdaZO \citep{radazo} update rules. ReLIZO is omitted from this comparison as it failed to achieve a successful attack within the maximum query budget. The results are summarized in Tab.~\ref{tab:adversarial-radazo}, showing that \ours{} achieves the fastest attack success across both update rules. Specifically, under the ZO-AdaMM setting, \ours{} represents a $5.92\times$ speedup compared to Vanilla ZOO. The less pronounced speedup of \ours{} with $\mathcal{R}$-AdaZO (versus ZO-AdaMM) is likely due to the inherent gradient variance reduction of $\mathcal{R}$-AdaZO \citep{radazo}, which may reduce the marginal impact of additional variance mitigation from \ours{}.

\section{Conclusion}
This paper established a novel and fundamental equivalence between zeroth-order optimization (ZOO) with finite differences and single-step policy optimization (PO). Leveraging this PO framework, we introduced \ours{}, an algorithm incorporating PO-inspired variance reduction techniques (an averaged baseline and query reuse) that demonstrably enhance performance. Our theoretical and empirical results highlight the benefits of this unified perspective, offering new insights into ZOO and providing a principled path for future algorithmic advancements.


\bibliography{workspace/reference.bib}
\bibliographystyle{icml2025}

\newpage
\appendix
\onecolumn
\section{Related Work}\label{sec:related}
Zeroth-Order (ZO) optimization research has primarily advanced along two interconnected fronts: the design of gradient estimators and the development of update rules or full algorithms.

\textbf{ZO Gradient Estimation.} A cornerstone of ZOO is the estimation of gradients using only function evaluations, typically through finite difference approximations. Seminal work introduced Gaussian random perturbations for smooth objectives, establishing theoretical convergence \citep{Nesterov2017}. Other perturbation strategies include uniform sampling from the unit sphere \citep{bsg} or coordinate-wise perturbations \citep{coordinate}. A primary challenge with these methods is the high variance in their gradient estimates. To address this, several approaches have been developed. E.g., prior-guided gradient estimation leverages historical estimates to denoise current ones \citep{prgf}. Recently, methods have explored learning surrogate models of the objective function using past queries to derive more stable gradient estimates \citep{zord, fzoos}. Another line of work has focused on linear interpolation strategies for more accurate estimates by reusing queries from prior iterations to reduce complexity while maintaining sample quality \citep{relizo}. While these methods offer valuable improvements, the underlying connection between the widely-used finite difference ZOO gradient estimators and principles from Reinforcement Learning (RL), particularly Policy Optimization (PO), has remained largely unelucidated. Our work bridges this gap by reinterpreting these estimators through a PO lens, which not only reveals inherent variance reduction mechanisms but also inspires new ones. Leveraging this novel PO framework, this paper introduces new PO-inspired variance reduction techniques, specifically an averaged baseline and query reuse, which are central to our proposed \ours{} algorithm and aim to significantly improve the stability and efficiency of ZO gradient estimation.

\textbf{ZO Update Rules and Algorithms.} Given a ZO gradient estimate, many ZOO algorithms directly adopt update rules from first-order (FO) optimization. A significant body of work employs Stochastic Gradient Descent (SGD) or its variants \citep{GhadimiL13a, GhadimiLZ16, Nesterov2017, 0001LCHA18, 0001KCTCA18, prgf, zord}. Recognizing the potential benefits of adaptive step sizes, some research has integrated adaptive methods like Adam \citep{adam} into the ZOO setting \citep{zo-adamm, nazari2020adaptive, adamu}. Further advancing these adaptive methods, recent work such as $\gR$-AdaZO \citep{radazo} has focused on refining the utilization of moment information, demonstrating how careful handling of first and second moment estimates can lead to significant variance reduction in the gradient estimates and a more accurate capture of the optimization landscape, thereby improving convergence. Notably,  this paper does not aim to introduce a new update rule, but focus on unveiling the fundamental connection between ZOO and PO, and developing advanced gradient estimation method that is applicable to all these existing update rules and algorithms. 

\section{Theoretical Analysis}\label{appx:theory}
This section provides a theoretical underpinning for our \ours{} (Algo.~\ref{alg:ours}). We analyze the bias of its gradient estimator, the optimality of its baseline, the bias-variance trade-off, and its convergence. To ease our proof, we follow the common practice in \citep{radazo} to prove under $\rvu \sim \mathrm{Unif}(\sS^{d-1})$ and the following commonly used assumptions.
\begin{assumption}[Bounded Continuity and Smoothness]\label{assump:1}
\textit{\fontfamily{ppl}\selectfont
$\forall \vtheta, \vtheta' \in \sR^d$ and $i \in [d]$,
\begin{equation}
\begin{aligned}
|f(\vtheta, \xi)| &\leq C \ , \\
|F(\vtheta) - F(\vtheta')| &\leq  L_0\|\vtheta - \vtheta'\| \ , \\
|\nabla_i F(\vtheta) - \nabla_i F(\vtheta')| &\leq L_1 \|\vtheta - \vtheta'\| \ .
\end{aligned}
\end{equation}
}
\end{assumption}

\begin{assumption}[Bounded Variance]\label{assump:2}
\vspace{-1mm}
\textit{\fontfamily{ppl}\selectfont
$\forall \vtheta \in \sR^d$,
\begin{equation}
\begin{aligned}
\E_{\xi}[| f(\vtheta, \xi) - F(\vtheta)|^2] &\leq \sigma_{\xi}^2 \ ,\\
\E_{\rvu}[|F(\vtheta + \mu \rvu) - F_\mu(\vtheta)|^2] &\leq \sigma_{\mu}^2 \ .
\end{aligned}
\end{equation}
}
\vspace{-3mm}
\end{assumption}

\begin{theorem}[Bias Analysis]\label{thm:bias-analysis}
\textit{\fontfamily{ppl}\selectfont
For every iteration $t$ of \ours{} (Algo.~\ref{alg:ours}) with history depth $N\geq1$ and $K$ queries per step, the expected value of the gradient estimator $\hat{\nabla} F(\vtheta_{t-1})$ is:
\begin{equation*}
    \E\left[\hat{\nabla} F(\vtheta_{t-1})\right] = \frac{1}{N}\sum_{n=1}^N \nabla F_{\mu}(\vtheta_{t-n}) \ .
\end{equation*}
}
\vspace{-4mm}
\end{theorem}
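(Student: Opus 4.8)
The plan is to expand the estimator \eqref{eq:grad-avg}, substitute the averaged baseline \eqref{eq:baseline-avg}, and then exploit the symmetry $\E[\rvu]=0$ together with the per-center unbiasedness of the finite-difference estimator. First I would index the $|\gH_t|=NK$ stored pairs by $(n,k)$, where $n\in\{1,\dots,N\}$ labels the source iteration (so the associated query is centered at $\vtheta_{t-n}$) and $k\in\{1,\dots,K\}$ labels the query within that batch, writing $y_{n,k}=f(\vtheta_{t-n}+\mu\rvu_{n,k};\xi_{n,k})$. Substituting $b_t=\frac{1}{NK}\sum_{m,j}y_{m,j}$ into $\hat{\nabla}F(\vtheta_{t-1})=\frac{1}{NK-1}\sum_{n,k}\frac{y_{n,k}-b_t}{\mu}\rvu_{n,k}$ and separating the resulting double sum into its diagonal part $(m,j)=(n,k)$ and its off-diagonal part $(m,j)\neq(n,k)$ recasts the estimator as $\frac{1}{\mu(NK-1)}\bigl[(1-\frac{1}{NK})\sum_{n,k}y_{n,k}\rvu_{n,k}-\frac{1}{NK}\sum_{(n,k)\neq(m,j)}y_{m,j}\rvu_{n,k}\bigr]$.

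Next I would take expectations term by term. Conditioning on the iterate that a batch perturbs, each fresh perturbation $\rvu_{n,k}$ is independent of its own center $\vtheta_{t-n}$ and of every other stored value, and satisfies $\E[\rvu_{n,k}]=0$; hence every off-diagonal cross term $\E[y_{m,j}\rvu_{n,k}]$ with $(m,j)\neq(n,k)$ vanishes. The surviving diagonal contribution carries the factor $1-\frac{1}{NK}=\frac{NK-1}{NK}$, which cancels exactly against the $\frac{1}{NK-1}$ normalization, leaving $\frac{1}{\mu NK}\sum_{n,k}\E[y_{n,k}\rvu_{n,k}]$. Finally, the unbiasedness of \eqref{eq:fd} for $F_{\mu}$ together with $\E[\rvu]=0$ gives the per-center identity $\E[y_{n,k}\rvu_{n,k}\mid\vtheta_{t-n}]=\E_{\rvu,\xi}[f(\vtheta_{t-n}+\mu\rvu;\xi)\rvu]=\mu\nabla F_{\mu}(\vtheta_{t-n})$; summing the $K$ identical terms in each batch over the $N$ batches yields $\frac{1}{N}\sum_{n=1}^N\nabla F_{\mu}(\vtheta_{t-n})$, as claimed.

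The main obstacle is the statistical coupling created by the averaged baseline and query reuse, which is exactly what distinguishes this from the textbook single-query calculation. Two issues need care. First, $b_t$ shares the sample $y_{n,k}$ with the $k$-th summand, so $b_t$ is \emph{not} independent of $\rvu_{n,k}$; this is precisely why the diagonal/off-diagonal split matters and why the denominator is $|\gH_t|-1$ rather than $|\gH_t|$, a Bessel-type correction that makes the self-inclusion of the current sample in $b_t$ bias-free. Second, a reused perturbation $\rvu_{n,k}$ drawn at an earlier iteration also helped produce the later iterates $\vtheta_{t-m}$ with $m<n$, so $\rvu_{n,k}$ and $y_{m,j}$ are genuinely dependent along the trajectory. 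I would resolve this by working with the natural filtration in which each center is measurable at the moment its batch is drawn, so that conditionally the fresh perturbations are mean-zero and independent of that center; the identity is then read as the conditional expectation given $\vtheta_{t-1},\dots,\vtheta_{t-N}$, under which all cross terms vanish. Verifying that this conditioning genuinely leaves each batch's own perturbations fresh is the delicate step, and it is the standard convention in ZOO bias analysis; once it is granted, the remainder is bookkeeping enabled by linearity of expectation, the cancellation of $NK-1$, and $\E[\rvu]=0$.
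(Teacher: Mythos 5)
Your proof is correct and follows essentially the same route as the paper's: substitute the averaged baseline, split the double sum into its diagonal and off-diagonal parts, kill the cross terms using $\E[\rvu]=0$ and independence so that the $(NK-1)$ normalization cancels against the surviving $\frac{NK-1}{NK}$ factor, and finish with the per-center smoothing identity $\E[f(\vtheta_{t-n}+\mu\rvu;\xi)\,\rvu]=\mu\nabla F_{\mu}(\vtheta_{t-n})$ (Stein's lemma in the Gaussian case, the Flaxman-type lemma on the sphere). If anything, you are more explicit than the paper about the one delicate point---that a perturbation reused from iteration $t-n$ also influenced the later centers $\vtheta_{t-m}$ with $m<n$, so the vanishing of those cross terms must be read conditionally on the trajectory---whereas the paper's step $(a)$ simply asserts uncorrelatedness of the $\rvu$'s without spelling out this conditioning.
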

\textbf{Remark.} Its proof is in Appx.~\ref{appx:proof-of-bias-analysis}. Thm.~\ref{thm:bias-analysis} reveals that $\hat{\nabla} F(\vtheta_{t-1})$ in \eqref{eq:grad-avg} of \ours{} is secretly an unbiased estimator for the average of smoothed gradients from the current and $N-1$ previous parameters. This implies that \eqref{eq:grad-avg} implicitly targets this historically averaged groundtruth, a mechanism that shall potentially reduce the gradient estimation variance at  $\vtheta_{t-1}$ by effectively increasing the number of queries contributing to the estimate (see Thm.~\ref{thm:bias-variance-decomp}). However, \eqref{eq:grad-avg} is biased with respect to the current smoothed gradient $\nabla F_{\mu}(\vtheta_{t-1})$ when $N \ge 2$, which emerges because these historical parameters $\vtheta_{t-n}$ have diverged from $\vtheta_{t-1}$. This is an inherent consequence of leveraging historical queries for variance reduction, creating a bias-variance trade-off detailed in Thm.~\ref{thm:bias-variance-decomp}. Notably, if $N=1$ (no query reuse beyond the current batch), the estimator becomes unbiased for $\nabla F_{\mu}(\vtheta_{t-1})$.


\begin{theorem}[Optimal Baseline]\label{thm:opt-baseline}
\textit{\fontfamily{ppl}\selectfont
Let $\rvu \sim \mathrm{Unif}(\sS^{d-1})$, for every $t$ of \ours{} (Algo.~\ref{alg:ours}) with $N\geq1$, the optimal $b_t$ to minimize  $\Var\left(\hat{\nabla} F(\vtheta_{t-1})\right) = \E\left[\left\|\hat{\nabla} F(\vtheta_{t-1}) - \frac{1}{N}\sum_{n=1}^N \nabla F_{\mu}(\vtheta_{t-n})\right\|^2\right] $ is
\vspace{-3mm}
\begin{equation*}
b_t^* = \frac{1}{N}\sum_{n=1}^N F_{\mu}(\vtheta_{t-n}) \ .
\end{equation*}
}
\vspace{-5mm}
\end{theorem}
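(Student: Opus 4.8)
The plan is to exploit that the estimator \eqref{eq:grad-avg} is affine in the scalar baseline $b_t$, so that the objective $\Var(\hat{\nabla} F(\vtheta_{t-1}))$ is a one-dimensional convex quadratic in $b_t$ which I can minimize in closed form. Writing $M = |\gH_t| = NK$ and indexing the buffer samples by $i = 1,\dots,M$, with perturbation $\rvu_i \sim \mathrm{Unif}(\sS^{d-1})$ and value $y_i = f(\vtheta_{t-n(i)} + \mu \rvu_i; \xi_i)$ (where $n(i)\in\{1,\dots,N\}$ is the batch that sample $i$ belongs to), I would first decompose
\[
\hat{\nabla} F(\vtheta_{t-1}) - \bar{\vg} = \frac{1}{M-1}\big(\va - b_t \vb\big),
\]
where $\bar{\vg} = \frac{1}{N}\sum_{n=1}^N \nabla F_{\mu}(\vtheta_{t-n})$ is the baseline-independent mean supplied by Thm.~\ref{thm:bias-analysis}, and $\va = \sum_i \frac{y_i}{\mu}\rvu_i - (M-1)\bar{\vg}$, $\vb = \frac{1}{\mu}\sum_i \rvu_i$ are both independent of $b_t$. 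The key structural fact that the mean does not drift with $b_t$ rests on the symmetry $\E[\rvu_i] = \vzero$ of the spherical distribution, which makes the variance the expected squared deviation from the fixed center $\bar{\vg}$ in the theorem statement. Throughout I would condition on the iterate sequence $\{\vtheta_{t-n}\}_{n=1}^N$, so the only randomness is the mutually independent perturbations $\rvu_i$ and evaluation noises $\xi_i$, matching the setting of Thm.~\ref{thm:bias-analysis}.

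The objective then reads $\Var = \frac{1}{(M-1)^2}\E[\|\va - b_t\vb\|^2]$, a convex quadratic whose unique minimizer is $b_t^* = \E[\va^\top \vb]/\E[\|\vb\|^2]$. It remains to evaluate the two expectations. For the denominator, mutual independence together with $\E[\rvu_i]=\vzero$ kills all off-diagonal cross terms, and the defining property $\|\rvu_i\|^2 = 1$ of the sphere handles the diagonal, giving $\E[\|\vb\|^2] = M/\mu^2$. For the numerator, the constant contribution $-(M-1)\bar{\vg}$ drops because $\E[\vb] = \vzero$; the off-diagonal terms $\E[y_i\, \rvu_i^\top \rvu_j]$ with $i\neq j$ factorize as $\E[y_i \rvu_i]^\top \E[\rvu_j] = \vzero$; and each diagonal term collapses via $\|\rvu_i\|^2 = 1$ to $\E[y_i] = F_\mu(\vtheta_{t-n(i)})$. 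Grouping the $K$ samples in each of the $N$ batches yields $\E[\va^\top \vb] = \frac{K}{\mu^2}\sum_{n=1}^N F_\mu(\vtheta_{t-n})$, and substituting both quantities gives
\[
b_t^* = \frac{K\sum_{n=1}^N F_\mu(\vtheta_{t-n})}{NK} = \frac{1}{N}\sum_{n=1}^N F_\mu(\vtheta_{t-n}),
\]
as claimed.

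The main obstacle I anticipate is the bookkeeping inside $\E[\va^\top \vb]$: one must carefully separate same-sample from cross-sample contributions and justify that for $j \neq i$ the perturbation $\rvu_j$ is independent of the pair $(\rvu_i,\xi_i)$ that determines $y_i$, so that only the $K$ diagonal terms per batch survive and each returns exactly the smoothed value $F_\mu(\vtheta_{t-n})$. The restriction to $\rvu\sim\mathrm{Unif}(\sS^{d-1})$ is precisely what keeps this clean, since $\|\rvu_i\|^2$ is deterministically $1$ rather than merely having the correct expectation as in the Gaussian case; this lets the diagonal numerator and denominator factors share the same combinatorial structure and cancel to the simple average.
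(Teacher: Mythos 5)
Your proposal is correct and follows essentially the same route as the paper's proof: both exploit that the variance is a convex quadratic in the scalar $b_t$, use independence of the perturbations together with $\E[\rvu_i]=\vzero$ to annihilate the cross terms, and use the sphere property $\|\rvu_i\|^2=1$ to collapse the diagonal terms, arriving at the same ratio that reduces to $\frac{1}{N}\sum_{n=1}^N F_\mu(\vtheta_{t-n})$. Your explicit affine decomposition with minimizer $\E[\va^\top\vb]/\E[\|\vb\|^2]$ is merely a repackaging of the paper's step of setting $\frac{\partial}{\partial b_t}\Var\bigl(\hat{\nabla} F(\vtheta_{t-1})\bigr)=0$ (with the cosmetic difference that you retain the $1/(|\gH_t|-1)$ normalization of \eqref{eq:grad-avg}, which does not affect the argmin).
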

\textbf{Remark.} Its proof is in Appx.~\ref{appx:proof-of-opt-baseline}. Thm.~\ref{thm:opt-baseline} provides strong theoretical support for the averaged baseline in \ours{}. It demonstrates that for gradient estimator \eqref{eq:grad-avg} of \ours{} under $\rvu \sim \mathrm{Unif}(\sS^{d-1})$, the baseline $b_t$ defined in \eqref{eq:baseline-avg} is in fact a practical Monte Carlo approximation of the variance-minimizing $b_t^*$. This result formalizes the intuition that averaging recent function evaluations provides a more effective baseline than a single point estimate (like $f(\vtheta;\xi)$ implicitly used in vanilla ZOO, or no baseline at all), thereby contributing to the overall variance reduction of the gradient estimate from a pure perspective of ZOO theory. Crucially, the structural similarity between the optimal $b_t^*$ above and the variance-minimizing baseline $b = \mathbb{E}_{\rvx \sim \pi_{\vtheta}(\rvx)}[R(\rvx)]$ in the \rein{} algorithm further underscores the principled PO foundation and validates the practical efficacy of our $b_t$ approximation.

\begin{theorem}[Bias-Variance Decomposition]\label{thm:bias-variance-decomp}
\textit{\fontfamily{ppl}\selectfont
Let $\rvu \sim \mathrm{Unif}(\sS^{d-1})$ and $b_t$ in \eqref{eq:grad-avg} be the optimal $b_t^*$ in Thm.~\ref{thm:opt-baseline}, under Assump.~\ref{assump:1} and \ref{assump:2}, for every $t$ of \ours{} (Algo.~\ref{alg:ours}) with $N\geq1$,
\begin{equation*}
\begin{aligned}
    &\E\left[\left\|\hat{\nabla} F(\vtheta_{t-1}) - \nabla F_{\mu}(\vtheta_{t-1})\right\|^2\right] \leq \underbrace{\frac{\sigma_{\xi}^2 + \sigma_{\mu}^2}{N K \mu^2}}_{\text{\normalfont Variance $\triangleq V$}} + \underbrace{\frac{\eta^2 L_0^2 d \qty(N^2 - 1)}{3 \qty(1 - \beta_2) N^2 K \mu^2} + \frac{\eta^2 L_1^2 d^2 \qty(N - 1)}{2 \qty(1 - \beta_2)}}_{\text{\normalfont Squared Bias} \phantom{\triangleq}} \ .
\end{aligned}
\end{equation*}
}
\vspace{-2mm}
\end{theorem}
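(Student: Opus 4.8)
The plan is to read the statement as a bias--variance decomposition of the mean squared error of $\hat{\nabla}F(\vtheta_{t-1})$ relative to the \emph{current} smoothed gradient $\nabla F_{\mu}(\vtheta_{t-1})$, using the exact conditional mean already supplied by Thm.~\ref{thm:bias-analysis}. First I would condition on the parameter trajectory $\vtheta_{t-1},\dots,\vtheta_{t-N}$ (equivalently, on everything except the fresh perturbations $\rvu$ and stochasticity $\xi$ attached to the buffered queries). By Thm.~\ref{thm:bias-analysis} the conditional mean is $\bar{g} \triangleq \frac{1}{N}\sum_{n=1}^N \nabla F_{\mu}(\vtheta_{t-n})$, so I would write
\begin{equation*}
\hat{\nabla}F(\vtheta_{t-1}) - \nabla F_{\mu}(\vtheta_{t-1}) = \big(\hat{\nabla}F(\vtheta_{t-1}) - \bar{g}\big) + \big(\bar{g} - \nabla F_{\mu}(\vtheta_{t-1})\big) \ ,
\end{equation*}
and expand the squared norm. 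The cross term vanishes in conditional expectation because the first bracket is conditionally zero-mean while the second is measurable with respect to the conditioning, leaving a fluctuation term and a drift term to bound separately.

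For the fluctuation (the quantity $V$), I would expand $\hat{\nabla}F$ over the $NK$ buffered samples. Writing the residual of a sample from the batch at $\vtheta_{t-n}$ as $f(\vtheta_{t-n}+\mu\rvu;\xi) - F_{\mu}(\vtheta_{t-n})$, I would split it into a $\xi$-part and a smoothing ($\rvu$-)part and invoke Assump.~\ref{assump:2} termwise, the cross term vanishing by the tower rule since $\E_{\xi}[f-F]=0$; together with $\|\rvu\|=1$ on $\sS^{d-1}$ this gives a per-sample second moment at most $(\sigma_{\xi}^2+\sigma_{\mu}^2)/\mu^2$, and averaging over $NK$ independent samples (up to the benign $|\gH_t|-1$ normalization) yields $V = \frac{\sigma_{\xi}^2+\sigma_{\mu}^2}{NK\mu^2}$. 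The subtlety here, which I expect to produce the first ``squared-bias'' summand, is that the single shared baseline $b_t^*$ from Thm.~\ref{thm:opt-baseline} is optimal for the \emph{averaged} target but mismatched per batch: the offset $c_n \triangleq F_{\mu}(\vtheta_{t-n}) - \frac{1}{N}\sum_m F_{\mu}(\vtheta_{t-m})$ is zero-mean in $\rvu$ yet nonzero in value, inflating the variance by $\frac{1}{N^2K\mu^2}\sum_n c_n^2$. Bounding each $c_n$ through the $L_0$-Lipschitzness of $F_{\mu}$ and summing the centered series $\sum_n (N{+}1{-}2n)^2 = \tfrac{N(N^2-1)}{3}$ should reproduce $\frac{\eta^2 L_0^2 d (N^2-1)}{3(1-\beta_2)N^2 K\mu^2}$.

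For the drift $\|\bar{g} - \nabla F_{\mu}(\vtheta_{t-1})\|^2$, I would telescope $\bar{g} - \nabla F_{\mu}(\vtheta_{t-1}) = \frac{1}{N}\sum_{n=1}^N\big(\nabla F_{\mu}(\vtheta_{t-n}) - \nabla F_{\mu}(\vtheta_{t-1})\big)$ into consecutive one-step changes, bound each change by $\sqrt{d}\,L_1\|\vtheta_{t-j-1}-\vtheta_{t-j}\|$ via the coordinatewise $L_1$-smoothness in Assump.~\ref{assump:1}, and control each one-step move by the reusable Adam increment bound: from $\vv_t = \beta_2\vv_{t-1} + (1-\beta_2)\vm_t^2 \ge (1-\beta_2)\vm_t^2$, every coordinate of the update obeys $\eta |\vm_{t,i}|/\sqrt{\vv_{t,i}+\zeta} \le \eta/\sqrt{1-\beta_2}$, so $\|\vtheta_{t-j-1}-\vtheta_{t-j}\| \le \eta\sqrt{d}/\sqrt{1-\beta_2}$. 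Accumulating over the depth-$N$ history and summing $\tfrac1N\sum_n (n-1) = \tfrac{N-1}{2}$ should give $\frac{\eta^2 L_1^2 d^2 (N-1)}{2(1-\beta_2)}$.

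The main obstacle I anticipate is obtaining the \emph{tight} $N$-scaling --- linear $(N-1)$ for the drift and $(N^2-1)$ for the mismatch --- rather than the loose quadratic bound that a naive triangle-inequality accumulation of $\|\vtheta_{t-n}-\vtheta_{t-1}\|$ would produce. Achieving this appears to require treating the per-iteration, noise-driven increments as (conditionally) uncorrelated, so that squared deviations add rather than compound; this in turn demands care about the dependence between the buffered queries and the very parameters at which they were generated. A secondary but conceptually important point is correctly attributing the $L_0/\mu$ contribution to the shared-baseline mismatch $c_n$ (a variance inflation that nonetheless grows with $N$), which is precisely what separates this decomposition from the standard single-point ZOO analysis.
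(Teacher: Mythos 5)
Your proposal is correct in substance and follows essentially the same route as the paper's proof in Appx.~\ref{appx:proof-of-bias-variance-decomp}: the same conditional bias--variance split around $\bar g=\frac1N\sum_{n}\nabla F_\mu(\vtheta_{t-n})$ (via Thm.~\ref{thm:bias-analysis}) with vanishing cross term; the same per-sample split of the residual into a $\xi$-part and an $\rvu$-part via Assump.~\ref{assump:2} together with $\norm{\rvu}=1$, yielding $V=\frac{\sigma_\xi^2+\sigma_\mu^2}{NK\mu^2}$; the same attribution of the $L_0$-term to the shared-baseline mismatch, which the paper bounds exactly as your $c_n$, namely $|F_\mu(\vtheta_{t-n})-b_t^*|^2\le\frac{L_0^2}{N}\sum_{n'}\norm{\vtheta_{t-n}-\vtheta_{t-n'}}^2$; and the same drift bound via coordinatewise $L_1$-smoothness plus the per-coordinate update bound $\eta/\sqrt{1-\beta_2}$.

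One substantive remark: the obstacle you flagged --- justifying \emph{linear} rather than quadratic accumulation of $\norm{\vtheta_{t-n}-\vtheta_{t-n'}}^2$ in the gap $|n-n'|$ --- is real, and the paper's proof resolves it only by assertion. Its final display claims
\begin{equation*}
\norm{\vtheta_{t-n}-\vtheta_{t-n'}}^2 \le \sum_{n''=\min(n,n')}^{\max(n,n')-1}\norm{\frac{\eta\,\vm_{t-n''}}{\sqrt{\vv_{t-n''}+\zeta}}}^2 \le \frac{\eta^2 d}{1-\beta_2}\,|n-n'| \ ,
\end{equation*}
i.e.\ it uses $\norm{\sum_s a_s}^2\le\sum_s\norm{a_s}^2$, dropping the Cauchy--Schwarz multiplicity factor $|n-n'|$ that the same authors correctly include in the analogous step \eqref{eq:m_var_4} of Lemma~\ref{lem:m_var}. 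Since momentum makes consecutive increments strongly correlated, no orthogonality or ``conditionally uncorrelated'' argument rescues the step as written --- your instinct that this needs care (and is not delivered by a naive accumulation) is exactly right. With the rigorous factor the scaling becomes $(n-n')^2$, and the two bias terms inflate by a factor of order $N$ --- to $\frac{\eta^2 L_0^2 d\,(N^2-1)}{6(1-\beta_2)NK\mu^2}$ and $\frac{\eta^2 L_1^2 d^2 (N-1)(2N-1)}{6(1-\beta_2)}$, respectively --- while the theorem's qualitative message ($\eta^2$-scaled squared bias that vanishes at $N=1$) is unchanged. So your proposal reproduces the paper's argument exactly up to this step, and at this step it is the paper's own derivation, not yours, that leaves the gap.
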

\textbf{Remark.} Its proof is in Appx.~\ref{appx:proof-of-bias-variance-decomp}. Thm.~\ref{thm:bias-variance-decomp} explicitly quantifies the fundamental trade-off inherent in the query reuse mechanism of \ours{}. The first component \textit{variance} $V$ is what \ours{} primarily targets for reduction through its PO-inspired techniques: the averaged baseline (justified by Thm.~\ref{thm:opt-baseline}) and the reuse of historical samples. The second component \textit{squared bias} arises because the gradient estimator, as shown in Thm.~\ref{thm:bias-analysis}, is an average of historical smoothed gradients, which may differ from the current target gradient $\nabla F_\mu(\vtheta_{t-1})$. 
Consequently, while increasing the history depth $N$ can substantially decrease variance, it may simultaneously inflate the bias. Fortunately, this bias can be small with a small learning rate $\eta$, and is completely avoided when $N=1$.

\begin{theorem}[Variance-Aware Convergence, Informal]\label{thm:conv-zoo}
\textit{\fontfamily{ppl}\selectfont
Let $\rvu \sim \mathrm{Unif}(\sS^{d-1})$ and $b_t$ in \eqref{eq:grad-avg} be the optimal $b_t^*$ in Thm.~\ref{thm:opt-baseline}, under Assump.~\ref{assump:1} and \ref{assump:2}, when $1 - \beta_2 \sim \gO\qty(\epsilon^2)$, $\eta \sim \gO\qty(\epsilon^2)$, $T \sim \gO\qty(\epsilon^{-4})$,  $\beta_1 \leq \sqrt{\beta_2}, \beta_2 > 1 / 2, \vm_{0,i}\,{=}\,0,\vv_{0,i}>0\,(\forall{i}\in[d])$, the following holds for \ours{} (Algo.~\ref{alg:ours}) under certain constants $B_1$ and $B_2$ that are independent of $\epsilon$,
    \begin{equation} 
    \begin{aligned}
        \frac1T \sum_{t=1}^T \E[\|\nabla F(\vtheta_t)\|] \leq& \sqrt{\frac{2}{\beta_1 \qty(1 - \beta_2)}} \qty(1 + \beta_1) \epsilon^2 + \qty(\sqrt[4]{\zeta} + \sqrt{\Xi}) \epsilon  + \mu L_1 \sqrt{d} + B_2 \ , \nonumber
    \end{aligned}
    \end{equation}
    where $\Xi \triangleq B_1 + \sqrt{\frac{2 \qty(1 + \beta_1) \qty(1 - \beta_1)^2 \beta_2}{\qty(\beta_2 - \beta_1^2) \qty(1 - \beta_2)}V}$ and $V = \frac{\sigma_{\xi}^2 + \sigma_{\mu}^2}{N K \mu^2}$.
}
\vspace{-3mm}
\end{theorem}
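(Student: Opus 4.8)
The plan is to use the smoothed objective $F_\mu$ as a potential (Lyapunov) function and adapt the Adam-type convergence machinery of the $\mathcal{R}$-AdaZO framework the paper already builds on, since the ZoAR update in Algo.~\ref{alg:ours} is exactly an Adam step driven by the biased, noisy estimator $\vg_t = \hat{\nabla} F(\vtheta_{t-1})$. First I would invoke the smoothness of $F$ from Assump.~\ref{assump:1}, inherited by $F_\mu$, to write the one-step descent inequality
\begin{equation*}
F_\mu(\vtheta_t) \le F_\mu(\vtheta_{t-1}) - \eta \left\langle \nabla F_\mu(\vtheta_{t-1}), \frac{\vm_t}{\sqrt{\vv_t + \zeta}} \right\rangle + \frac{L_1 \eta^2}{2}\left\|\frac{\vm_t}{\sqrt{\vv_t+\zeta}}\right\|^2 \ ,
\end{equation*}
take expectations, and telescope over $t = 1,\dots,T$.

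The core of the argument is controlling the inner-product term. Using Thm.~\ref{thm:bias-analysis} I would decompose $\vg_t = \frac1N\sum_{n=1}^N \nabla F_\mu(\vtheta_{t-n}) + \vxi_t$ with $\E[\vxi_t] = \vzero$, so that the momentum $\vm_t$ (an exponential moving average of the $\vg_t$) has a mean tracking the historically averaged smoothed gradient and a fluctuation whose second moment is governed by the variance $V$ of Thm.~\ref{thm:bias-variance-decomp}. I would then bound the adaptive preconditioner $1/\sqrt{\vv_t+\zeta}$ above by $1/\sqrt{\zeta}$ and keep the effective step size bounded below using $|f(\cdot,\xi)| \le C$ from Assump.~\ref{assump:1} (which bounds $\|\vg_t\|$ and hence $\vv_t$), separating the descent "signal" from the error introduced by the $\vv_t$-dependence of the step. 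The conditions $\beta_1 \le \sqrt{\beta_2}$ and $\beta_2 > 1/2$ enter precisely here, keeping the recursive propagation of fluctuations through $\vm_t$ and $\vv_t$ summable and producing the constant $\sqrt{\tfrac{2(1+\beta_1)(1-\beta_1)^2\beta_2}{(\beta_2-\beta_1^2)(1-\beta_2)}}$ that multiplies $\sqrt{V}$ inside $\Xi$.

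Next I would convert the resulting bound on $\nabla F_\mu$ into one on $\nabla F$ via the triangle inequality together with the standard smoothing-error estimate $\|\nabla F_\mu(\vtheta) - \nabla F(\vtheta)\| \le \mu L_1 \sqrt{d}$ for $\rvu \sim \mathrm{Unif}(\sS^{d-1})$, which produces the irreducible $\mu L_1\sqrt d$ term; the averaging-induced squared bias of Thm.~\ref{thm:bias-variance-decomp} (its $\eta^2$ terms) becomes lower order under $\eta \sim \gO(\epsilon^2)$ and is absorbed into $B_2$. Substituting $\E\|\vxi_t\|^2 \le V$, telescoping, dividing by $\eta T$, and inserting $1-\beta_2 \sim \gO(\epsilon^2)$, $\eta \sim \gO(\epsilon^2)$, $T \sim \gO(\epsilon^{-4})$ with $\vm_0 = \vzero$, $\vv_{0,i} > 0$ then collapses each contribution to the stated orders: the telescoped potential gap (after preconditioner normalization) yields the $\sqrt{2/(\beta_1(1-\beta_2))}(1+\beta_1)\epsilon^2$ term, the $\zeta$-floor of the preconditioner surfaces as $\sqrt[4]{\zeta}\,\epsilon$, and the variance propagation gives $\sqrt{\Xi}\,\epsilon$.

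The main obstacle is the usual Adam-analysis difficulty: the adaptive denominator $\sqrt{\vv_t+\zeta}$ is statistically correlated with both the numerator $\vm_t$ and the noise $\vxi_t$, so the inner-product term resists a naive conditional-expectation treatment. I expect the bulk of the work to lie in introducing an auxiliary (delayed or un-adapted) preconditioner to decouple this dependence and in carefully tracking how the per-step variance $V$ flows through the first-moment EMA $\vm_t$ and the second-moment EMA $\vv_t$ simultaneously --- this is exactly what generates the $\beta_1,\beta_2$-dependent constant in $\Xi$. A secondary subtlety is that the estimator is biased for $N \ge 2$ (Thm.~\ref{thm:bias-analysis}), so the telescoping must be carried out against $\frac1N\sum_{n=1}^N \nabla F_\mu(\vtheta_{t-n})$ rather than $\nabla F_\mu(\vtheta_{t-1})$, with the resulting mismatch re-introduced at the end through the smoothness and step-size bias terms.
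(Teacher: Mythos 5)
Your sketch lands in the right family of techniques, and several of your anticipations match the paper exactly: the decoupling of the adaptive denominator is indeed done with a delayed, $\gF_t$-measurable preconditioner $\sqrt{\beta_2 \vv_t + \zeta}$ in place of $\sqrt{\vv_{t+1}+\zeta}$; the bias from query reuse is handled by measuring $\vm_t$ against the historically averaged smoothed gradient (via Thm.~\ref{thm:bias-analysis}); and the final conversion uses precisely $\|\nabla F_\mu - \nabla F\| \le \mu L_1 \sqrt{d}$. However, there is a genuine gap at the heart of your plan: you propose to lower-bound the effective step size by bounding $\vv_t$ through $|f(\cdot;\xi)| \le C$, i.e.\ $\vv_{t,i} \lesssim C^2 d^2/\mu^2$. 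That route yields a bound on $\frac1T\sum_t \E\qty[\|\nabla F_\mu(\vtheta_t)\|^2]$ whose leading constant is the worst-case $Cd/\mu$, not the variance-dependent $\sqrt{\Xi}$ --- it cannot produce the \emph{variance-aware} statement, which is the entire content of the theorem. The paper instead never lower-bounds the preconditioner crudely: it starts from the Hölder/Cauchy--Schwarz factorization
\begin{equation*}
\qty(\frac1T \sum_{t=1}^T \E\qty[\|\nabla F_\mu(\vtheta_t)\|])^2 \leq \underbrace{\frac1T \sum_{t=1}^T \E\qty[\sqrt{\beta_2 \|\vv_t\| + \zeta}]}_{\text{Term I}} \cdot \underbrace{\frac1T \sum_{t=1}^T \E\qty[\frac{\|\nabla F_\mu(\vtheta_t)\|^2}{\sqrt{\beta_2 \|\vv_t\| + \zeta}}]}_{\text{Term II}} \ ,
\end{equation*}
bounds Term I via explicit EMA expansions of $\vm_t^2$ and $\vv_t$ (Lemmas~\ref{lem:m_var} and \ref{lem:v_avg}) in terms of $\sqrt{V}$ \emph{and the same gradient-norm average being bounded}, and bounds Term II by a descent argument made to be $\gO(\epsilon^2)$ through the parameter choices. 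The result is a self-bounding quadratic inequality in $\frac1T\sum_t\E\|\nabla F_\mu(\vtheta_t)\|$, and solving its root is what generates the exact additive structure $\sqrt{2/(\beta_1(1-\beta_2))}(1+\beta_1)\epsilon^2 + (\sqrt[4]{\zeta}+\sqrt{\Xi})\epsilon$. Your "telescope, divide by $\eta T$" skeleton has no mechanism to make $\sqrt{\Xi}$ appear multiplicatively against $\epsilon$, so the claimed collapse to the stated orders would fail.

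A secondary but real issue: with $\beta_1 > 0$ you cannot cleanly telescope $F_\mu(\vtheta_t)$ directly, because $\vm_t$ aggregates stale gradients; the paper telescopes along the momentum-corrected auxiliary iterate $\vx_t = (\vtheta_t - \kappa \vtheta_{t-1})/(1-\kappa)$ with $\kappa = \beta_1/\sqrt{\beta_2}$, and the conditions $\beta_1 \le \sqrt{\beta_2}$, $\beta_2 > 1/2$ are consumed there and in the geometric-series bounds of the lemmas --- not, as you suggest, in a generic summability argument. Finally, absorbing the bias into $B_2$ is right in spirit, but the formal proof needs an additional hypothesis you did not list, namely boundedness of the smoothed gradient $|\nabla_i F_\mu(\vtheta_t)| \le G_\mu$, to control the inner product between $\nabla F_\mu(\vtheta_t)$ and the estimator's bias (this is where the constant $G$ inside $B_2$ comes from); without it the cross term in your expected descent inequality does not vanish and is not otherwise controlled.
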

\textbf{Remark.} Its proof is in Appx.~\ref{appx:proof-of-conv}. Thm. \ref{thm:conv-zoo} presents the convergence of \ours{} (Algo.\ref{alg:ours}). Notably, it highlights that the convergence rate is directly influenced by the variance $V$ (occurred in $\Xi$) of our gradient estimator \eqref{eq:grad-avg}, which corresponds to the variance term in Thm.~\ref{thm:bias-variance-decomp}. When $V$ dominates the convergence, minimizing $V$ is hence the key to achieving better optimization performance, which provides a strong theoretical support for the variance reduction techniques used in \ours{} (averaged baseline and query reuse). The $\mathcal{O}(\mu)$ term is standard in conventional ZOO, reflecting the inherent discrepancy from optimizing the smoothed objective $F_{\mu}$ instead of $F$. While the additional $B_2$ term results from the bias introduced by our \eqref{eq:grad-avg} as revealed in Thm.~\ref{thm:bias-analysis}, it can be small with a small $\eta$.

\section{Proofs}\label{sec:proof}
\subsection{Proof of Thm.~\ref{thm:obj-equiv}}\label{appx:proof-of-obj-equiv}
\begin{proof}
By definition, $J(\vtheta) = \mathbb{E}_{\rvx \sim \pi_{\vtheta}(\rvx)}[F(\rvx)]$. Substituting the reparameterization $\rvx = \vtheta + \mu \rvu$ where $\rvu \sim p(\rvu)$, the expectation over $\rvx \sim \pi_{\vtheta}(\rvx)$ becomes an expectation over $\rvu \sim p(\rvu)$:
\begin{equation}
J(\vtheta) = \mathbb{E}_{\rvu \sim p(\rvu)}\left[F(\vtheta + \mu \rvu)\right] \ ,
\end{equation}
which is precisely the definition of the ZOO smoothed objective $F_{\mu}(\vtheta)$ in \eqref{eq:fu}.
\end{proof}

\subsection{Proof of Thm.~\ref{thm:gs-equiv}}\label{appx:proof-of-gs-equiv}
\begin{proof}
Substituting $\rvx_k = \vtheta + \mu \rvu_k$ and $b(\xi) = f(\vtheta; \xi)$ into the \rein{} estimator \eqref{eq:reinforce-grad-refined}:
\begin{align}
\hat{\nabla}_{\text{GS}} J(\vtheta) = \frac{1}{K}\sum_{k=1}^{K} \frac{(\vtheta + \mu \rvu_k) - \vtheta}{\mu^2} \left(f(\vtheta + \mu \rvu_k; \xi) - f(\vtheta; \xi)\right) = \frac{1}{K}\sum_{k=1}^{K} \frac{f(\vtheta + \mu \rvu_k; \xi) - f(\vtheta; \xi)}{\mu} \rvu_k \ .
\end{align}
This is exactly the Gaussian-smoothed ZOO gradient estimator $\hat{\nabla} F(\vtheta)$ in \eqref{eq:fd}.
\end{proof}

\subsection{Proof of Thm.~\ref{thm:zoo-equiv-is}}\label{appx:proof-of-zoo-equiv-is}
\begin{proof}


Initially, inserting $\rvx_k = \vtheta + \mu \rvu_k$ into $\pi_{\vtheta}(\rvx_k) = \gN(\vtheta, \mu^2 \rmI_d)$:
\begin{equation}
    \pi_{\vtheta}(\rvx_k) = \frac{e^{-\frac{\norm{\rvx_k - \vtheta}}{2 \mu^2}}}{\qty(2 \pi \mu^2)^{\frac{d}{2}}} = \frac{1}{\mu^d} \frac{e^{-\frac{\norm{\rvu_k}}{2}}}{\qty(2 \pi)^{\frac{d}{2}}} \ .
\end{equation}

We next consider the transformation of $p(\rvx_k)$ under three different distributions separately:
\begin{itemize}
[topsep=0pt,leftmargin=12mm,itemsep=0pt]
    \item[\normalfont (I)] If $\vu_k$ follows the standard Gaussian distribution, $\vx_k$ follows the Gaussian distribution $\gN(\vtheta, \mu^2 \rmI_d)$, then consequently:
    \begin{equation}
        p(\rvx_k) = \frac{e^{-\frac{\norm{\rvx_k - \vtheta}}{2 \mu^2}}}{\qty(2 \pi \mu^2)^{\frac{d}{2}}} = \frac{1}{\mu^d} \frac{e^{-\frac{\norm{\rvu_k}}{2}}}{\qty(2 \pi)^{\frac{d}{2}}} \ ,
    \end{equation}
    which is the same as $\pi_{\vtheta}(\rvx_k)$.
    \item[\normalfont (II)] If $\vu_k$ follows the uniform distribution over the unit sphere $\mathrm{Unif}(\sS^{d-1})$, $\rvx_k$ follows the uniform distribution over the sphere with radius $\mu$ $\mathrm{Unif}(\sS^{d-1}(\mu))$, then consequently:
    \begin{equation}
        p(\rvx_k) = \frac{1}{\text{Area}\qty(\sS^{d-1}(\mu))} = \frac{\Gamma\qty(\frac{d}{2})}{2 \pi^{\frac{d}{2}} \mu^{d - 1}} \ ,
    \end{equation}
    with constraint $\norm{\rvu} = 1$, where $\Gamma(\cdot)$ denotes Gamma function.
    \item[\normalfont (III)] If $\vu_k$ follows the uniform distribution over standard basis vectors $\mathrm{Unif}(\{\ve_i\}_{i = 1}^d)$, $\rvx_k$ follows the uniform distribution over the orthonormal basis vectors $\mathrm{Unif}(\{\vtheta + \mu \ve_i\}_{i = 1}^d)$, then consequently:
    \begin{equation}
        p(\rvx_k) = \frac{1}{d} \ ,
    \end{equation}
    with constraint $\rvu \in \{\ve_i\}_{i = 1}^d$.
\end{itemize}
Afterthat, let $\gamma \triangleq \frac{\pi_{\vtheta}(\rvx_k)}{p(\rvx_k)}$, and substitute $\rvx_k = \vtheta + \mu \rvu_k$, $b(\xi) = f(\vtheta; \xi)$ into IS-based \rein{} gradient estimator \eqref{eq:reinforce-grad-is}:
\begin{equation}
    \hat{\nabla}_{\text{IS}} J(\vtheta) = \gamma \frac{1}{K}\sum_{k=1}^{K} \frac{f(\vtheta + \mu \rvu_k; \xi) - f(\vtheta; \xi)}{\mu} \rvu_k = \gamma \hat{\nabla} F(\vtheta) \ ;
\end{equation}
where \normalfont{(I)} for $\rvu_k \sim \gN(\vzero, \mu^2 \rmI_d)$, $\gamma = 1$; \normalfont{(II)} for $\rvu_k \sim \mathrm{Unif}(\sS^{d-1})$, $\gamma = \frac{2^{1 - \frac{d}{2}} e^{- \half}}{\mu \Gamma(\frac{d}{2})}$; \normalfont{(III)} for $\rvu_k \sim \mathrm{Unif}(\{\ve_i\}_{i = 1}^d)$, $\gamma = \frac{d e^{- \half}}{\qty(2 \pi \mu^2)^{\frac{d}{2}}}$.

\end{proof}

\subsection{Proof of Thm. \ref{thm:bias-analysis}}\label{appx:proof-of-bias-analysis}

\begin{proof}
By inserting the average baseline \eqref{eq:baseline-avg}, the expectation of the gradient estimator \eqref{eq:grad-avg} can be expressed as follows:
\begin{equation}
\begin{aligned}
    \E\qty[\hat{\nabla} F(\vtheta_{t - 1})] =& \frac{1}{N K - 1} \E_{\rvu} \E_{\xi} \qty[\sum_{n, k = 1}^{N, K} \frac{f\qty(\vtheta_{t - n} + \mu \rvu_{t - n, k}; \xi) - \frac{1}{N K} \sum_{n', k' = 1}^{N, K} f\qty(\vtheta_{t - n} + \mu \rvu_{t - n', k'}; \xi)}{\mu} \rvu_{t - n, k}] \\
    =& \frac{1}{N K - 1} \E_{\rvu} \qty[\sum_{n, k = 1}^{N, K} \frac{F\qty(\vtheta_{t - n} + \mu \rvu_{t - n, k}) - \frac{1}{N K} \sum_{n', k' = 1}^{N, K} F\qty(\vtheta_{t - n} + \mu \rvu_{t - n', k'})}{\mu} \rvu_{t - n, k}] \\
    =& \frac{1}{N K - 1} \E_{\rvu} \qty[\sum_{n, k = 1}^{N, K} \frac{\frac{N K - 1}{N K} F\qty(\vtheta_{t - n} + \mu \rvu_{t - n, k}) - \frac{1}{N K} \sum_{
        \substack{n', k' = 1 \\
    n' \neq n, k' \neq k}}^{N, K} F\qty(\vtheta_{t - n} + \mu \rvu_{t - n', k'})}{\mu} \rvu_{t - n, k}] \\
    \overset{(a)}{=}& \frac{1}{N K} \sum_{n, k = 1}^{N, K} \E_{\rvu_{t - n, k}} \qty[\frac{F\qty(\vtheta_{t - n} + \mu \rvu_{t - n, k})}{\mu} \rvu_{t - n, k}] \\
    \overset{(b)}{=}& \frac{1}{N K} \sum_{n, k = 1}^{N, K} \E_{\rvu_{t - n, k}} \qty[\nabla F\qty(\vtheta_{t - n} + \mu \rvu_{t - n, k})] \\
    =& \frac{1}{N} \sum_{n = 1}^{N} \nabla F_{\mu} \qty(\vtheta_{t - n}) \ ,
\end{aligned}
\end{equation}
where $\E_{\rvu}$ denote the expectation over $\rvu_{t - 1, 1}, \cdots, \rvu_{t - n, K}$ for simplicity. Besides, $(a)$ follows from the fact that $\rvu_{t - n', k'}$ within the summation over $n', k'$ is uncorrelated with $\rvu_{t - n, k}$. When $\rvu \sim \gN(\vzero, \rmI^d)$, $(b)$ is a direct consequence of Stein's Lemma in \citep{stein1981estimation}. Alternatively, when $\rvu \sim \mathrm{Unif}(\sS^{d-1})$, $(b)$ follows from Lemma 2.1 in \citep{flaxman2005online}, utilizing the definition $F_\mu(\vtheta) \triangleq \E_{\rvu \sim \mathrm{Unif}(\sB^d)} \left[F(\vtheta + \mu \rvu)\right]$.
\end{proof}
\textbf{Remark.} Note that in the step $(a)$, the baseline term $\frac{1}{N K} \sum_{\substack{n', k' = 1 \\
n' \neq n, k' \neq k}}^{N, K} F\qty(\vtheta_{t - n} + \mu \rvu_{t - n', k'})$ vinishes because it is independent of the random variable $\rvu_{t - n, k}$. 
Similar to the \rein{} estimator in \eqref{eq:pg-zo-obj}, incorporating the baseline $b(\xi)$ does not alter the expected value of the gradient estimator, which further supports the connection between the \rein{} and ZOO gradient estimators.

\subsection{Proof of Thm. \ref{thm:opt-baseline}} \label{appx:proof-of-opt-baseline}
\begin{proof}
Beginning with the definition of $\Var\qty(\hat{\nabla} F(\vtheta_{t-1}))$:
\begin{equation}\label{eq:f_var}
\begin{aligned}
    &\Var\qty(\hat{\nabla} F(\vtheta_{t-1})) = \E\qty[\norm{\hat{\nabla} F(\vtheta_{t-1}) - \frac{1}{N} \sum_{n = 1}^N \nabla F_\mu \qty(\vtheta_{t - n})}^2] \\
    =& \E\qty[\norm{\hat{\nabla} F(\vtheta_{t-1})}^2] - 2 \aqty{\E\qty[\hat{\nabla} F(\vtheta_{t-1})], \frac{1}{N} \sum_{n = 1}^N \nabla F_\mu \qty(\vtheta_{t - n})} + \norm{\frac{1}{N} \sum_{n = 1}^N \nabla F_\mu \qty(\vtheta_{t - n})}^2 \\
    \overset{(a)}{=}& \E\qty[\norm{\frac{1}{N K} \sum_{n, k = 1}^{N, K} \frac{f(\vtheta_{t - n} + \mu \rvu_{t - n, k}; \xi) - b_t}{\mu} \rvu_{t - n, k}}^2] - \norm{\frac{1}{N} \sum_{n = 1}^N \nabla F_\mu \qty(\vtheta_{t - n})}^2 \ ,
\end{aligned}
\end{equation}
where $(a)$ comes from Thm. \ref{thm:bias-analysis}.

It is obvious that \eqref{eq:f_var} is actually a quadratic function w.r.t $b_t$, and hence the optimal $b_t^*$ is derived by setting its derivative to zero $\frac{\partial }{\partial b_t} \Var\qty(\hat{\nabla} F(\vtheta_{t-1})) = 0$, i.e.
\begin{equation}
\begin{aligned}
    \frac{\partial }{\partial b_t} \Var\qty(\hat{\nabla} F(\vtheta_{t-1})) =& 2 \E\qty[\aqty{\frac{1}{N K} \sum_{n, k = 1}^{N, K} \frac{f(\vtheta_{t - n} + \mu \rvu_{t - n, k}; \xi) - b_t}{\mu} \rvu_{t - n, k}, - \frac{1}{N K \mu} \sum_{n, k = 1}^{N, K} \rvu_{t - n, k}}] \\
    \overset{(a)}{=}& - \frac{2}{N^2 K^2 \mu^2} \sum_{n, k = 1}^{N, K} \E\qty[\aqty{\qty(f(\vtheta_{t - n} + \mu \rvu_{t - n, k}; \xi) - b_t) \rvu_{t - n, k}, \rvu_{t - n, k}}] \\
    =& - \frac{2}{N^2 K^2 \mu^2} \sum_{n, k = 1}^{N, K} \E_{\rvu_{t-n, k}}\qty[\aqty{\qty(F(\vtheta_{t - n} + \mu \rvu_{t - n, k}) - b_t) \rvu_{t - n, k}, \rvu_{t - n, k}}] \\
    =& - \frac{2}{N^2 K^2 \mu^2} \sum_{n, k = 1}^{N, K} \qty(\E_{\rvu_{t-n, k}}\qty[F(\vtheta_{t - n} + \mu \rvu_{t - n, k}) \norm{\rvu_{t - n, k}}^2] - b_t \E_{\rvu_{t-n, k}}\qty[\norm{\rvu_{t - n, k}}^2]) \ ,
\end{aligned}
\end{equation}
where $(a)$ is due to the independence of $\rvu_{t - n, k}$ across different iterations $t$ and queries $k$.

Setting $\frac{\partial }{\partial b_t} \Var\qty(\hat{\nabla} F(\vtheta_{t-1})) = 0$, we have:
\begin{equation}
\begin{aligned}
    b_t^* =& \frac{\frac{1}{N K} \sum_{n, k = 1}^{N, K} \E_{\rvu_{t-n, k}}\qty[F(\vtheta_{t - n} + \mu \rvu_{t - n, k}) \norm{\rvu_{t - n, k}}^2]}{\frac{1}{N K} \sum_{n, k = 1}^{N, K} \E_{\rvu_{t-n, k}}\qty[\norm{\rvu_{t - n, k}}^2]} = \frac{\frac{1}{N} \sum_{n = 1}^N \E_{\rvu}\qty[F(\vtheta_{t - n} + \mu \rvu) \norm{\rvu}^2]}{\E_{\rvu}\qty[\norm{\rvu}^2]} \ .
\end{aligned}
\end{equation}
If $\rvu_k \sim \mathrm{Unif}(\sS^{d-1})$ or $\rvu_k \sim \mathrm{Unif}(\{\ve_i\}_{i = 1}^d)$, it follows that $\norm{\rvu}^2 = 1$. Consequently:
\begin{equation}
    b_t^* = \frac{1}{N} \sum_{n = 1}^N \E_{\rvu}\qty[F(\vtheta_{t - n} + \mu \rvu)] = \frac{1}{N} \sum_{n = 1}^N F_\mu \qty(\vtheta_{t - n}) \ ,
\end{equation}
which completes the proof.
\end{proof}

\subsection{Proof of Thm. \ref{thm:bias-variance-decomp}}\label{appx:proof-of-bias-variance-decomp}
\begin{proof}
To begin with, we let $b_t$ be the optimal value obtained from Thm. \ref{thm:opt-baseline}, and proceed with the calculation of $\Var\qty(\hat{\nabla} F(\vtheta_{t-1}))$ from \eqref{eq:f_var}:
\begin{equation}\label{eq:f_var_1}
    \Var\qty(\hat{\nabla} F(\vtheta_{t-1})) \leq \E\qty[\norm{\frac{1}{N K} \sum_{n, k = 1}^{N, K} \frac{f(\vtheta_{t - n} + \mu \rvu_{t - n, k}; \xi) - b_t}{\mu} \rvu_{t - n, k}}^2] - \norm{\frac{1}{N} \sum_{n = 1}^N \nabla F_\mu \qty(\vtheta_{t - n})}^2 \ .
\end{equation}

For the first term of \eqref{eq:f_var_1}:
\begin{equation}\label{eq:f_var_2}
\begin{aligned}
    &\E\qty[\norm{\frac{1}{N K} \sum_{n, k = 1}^{N, K} \frac{f(\vtheta_{t - n} + \mu \rvu_{t - n, k}; \xi) - b_t}{\mu} \rvu_{t - n, k}}^2] \\
    \overset{(a)}{=}& \frac{1}{N^2 K^2 \mu^2} \sum_{n, k = 1}^{N, K} \E\qty[\norm{\qty(f(\vtheta_{t - n} + \mu \rvu_{t - n, k}; \xi) - b_t) \rvu_{t - n, k}}^2] \\
    \overset{(b)}{=}& \frac{1}{N^2 K^2 \mu^2} \sum_{n, k = 1}^{N, K} \E\qty[\qty|f(\vtheta_{t - n} + \mu \rvu_{t - n, k}; \xi) - b_t|^2] \\
    \overset{(c)}{\leq}& \frac{1}{N^2 K^2 \mu^2} \sum_{n, k = 1}^{N, K} \E_{\rvu}\qty[\sigma_\xi^2 + \qty|F(\vtheta_{t - n} + \mu \rvu_{t - n, k}) - b_t|^2] \\
    \overset{(d)}{\leq}& \frac{1}{N^2 K^2 \mu^2} \qty(N K \sigma_\xi^2 + \sum_{n, k = 1}^{N, K} \E_{\rvu}\qty[\qty|F(\vtheta_{t - n} + \mu \rvu_{t - n, k}) - F_\mu(\vtheta_{t - n})|^2] + K \sum_{n = 1}^N \qty|F_\mu(\vtheta_{t - n}) - b_t|^2) \\
    \overset{(e)}{\leq}& \frac{1}{N K \mu^2} \qty(\sigma_\xi^2 + \sigma_\mu^2 + \frac{L_0^2}{N^2} \sum_{n = 1}^N \sum_{n' = 1}^N \norm{\vtheta_{t - n} - \vtheta_{t - n'}}^2) \ ,
\end{aligned}
\end{equation}
where $(a)$ is derived from the independence of $\rvu_{t - n, k}$ across different iterations $t$ and queries $k$, $(b)$ comes from the fact that $\rvu \sim \mathrm{Unif}(\sS^{d-1})$ and hence $\norm{\rvu}^2 = 1$, $(c)$ is obtained from Assump. \ref{assump:2}, and $(d)$ results from $\E_{\rvu}\qty[F(\vtheta_{t - n} + \mu \rvu_{t - n, k}) - F_\mu(\vtheta_{t - n})] = 0$. In step $(e)$, we apply Assump. \ref{assump:2} and the following inequality:
\begin{equation}
\begin{aligned}
    \qty|F_\mu(\vtheta_{t - n}) - b_t|^2 =& \qty|\frac{1}{N} \sum_{n' = 1}^N \qty(F_\mu(\vtheta_{t - n}) - F_\mu(\vtheta_{t - n'}))|^2 \overset{(a)}{\leq} \frac{1}{N} \sum_{n' = 1}^N \qty|F_\mu(\vtheta_{t - n}) - F_\mu(\vtheta_{t - n'})|^2 \\
    \overset{(b)}{\leq}& \frac{L_0^2}{N} \sum_{n' = 1}^N \norm{\vtheta_{t - n} - \vtheta_{t - n'}}^2 \ ,
\end{aligned}
\end{equation}
where $(a)$ follows from the Jensen's inequality and $(b)$ is derived from Assump. \ref{assump:1}.

Inserting the results of \ref{eq:f_var_2} into \ref{eq:f_var_1}, we have
\begin{equation}
\begin{aligned}
    \Var\qty(\hat{\nabla} F(\vtheta_{t-1})) \leq& \frac{1}{N K \mu^2} \qty(\sigma_\xi^2 + \sigma_\mu^2 + \frac{L_0^2}{N^2} \sum_{n = 1}^N \sum_{n' = 1}^N \norm{\vtheta_{t - n} - \vtheta_{t - n'}}^2) - \norm{\frac{1}{N} \sum_{n = 1}^N \nabla F_\mu \qty(\vtheta_{t - n})}^2 \ .
\end{aligned}
\end{equation}

Finally, the MSE of the gradient estimator $\hat{\nabla} F(\vtheta_{t-1})$ with respect to $\nabla F_\mu(\vtheta_{t-1})$ can be bounded as below:
\begin{equation}\label{eq:f_var_3}
\begin{aligned}
    &\E\qty[\norm{\hat{\nabla} F(\vtheta_{t-1}) - \nabla F_\mu(\vtheta_{t-1})}^2] \\
    \overset{(a)}{\leq}& \Var\qty(\hat{\nabla} F(\vtheta_{t-1})) + \norm{\frac{1}{N} \sum_{n = 1}^N \nabla F_\mu \qty(\vtheta_{t - n}) - \nabla F_\mu(\vtheta_{t-1})}^2 \\
    \overset{(b)}{\leq}& \frac{1}{N K \mu^2} \qty(\sigma_\xi^2 + \sigma_\mu^2 + \frac{L_0^2}{N^2} \sum_{n = 1}^N \sum_{n' = 1}^N \norm{\vtheta_{t - n} - \vtheta_{t - n'}}^2) + \frac{L_1^2 d}{N} \sum_{n = 1}^N \norm{\vtheta_{t - n} - \vtheta_{t-1}}^2 - \norm{\frac{1}{N} \sum_{n = 1}^N \nabla F_\mu \qty(\vtheta_{t - n})}^2 \ ,
\end{aligned}
\end{equation}
where $(a)$ follows from the fact that $\Var\qty(\hat{\nabla} F(\vtheta_{t-1}))$ and $\norm{\frac{1}{N} \sum_{n = 1}^N \nabla F_\mu \qty(\vtheta_{t - n}) - \nabla F_\mu(\vtheta_{t-1})}^2$ with respect to $\{\xi_\tau\}_\tau^t$ are independent, while $(b)$ is derived from the subsequent inequality using Jensen's inequality and Assump. \ref{assump:1}:
\begin{equation}
\begin{aligned}
    \norm{\frac{1}{N} \sum_{n = 1}^N \nabla F_\mu \qty(\vtheta_{t - n}) - \nabla F_\mu(\vtheta_{t-1})}^2 \leq& \frac{1}{N} \sum_{n = 1}^N \norm{\nabla F_\mu \qty(\vtheta_{t - n}) - \nabla F_\mu(\vtheta_{t-1})}^2 \leq \frac{L_1^2 d}{N} \sum_{n = 1}^N \norm{\vtheta_{t - n} - \vtheta_{t-1}}^2 \ .
\end{aligned}
\end{equation}

Considering the update rule of $\gR$-AdaZO, \eqref{eq:f_var_3} can be further simplified as:
\begin{equation}\label{eq:f_var_4}
\begin{aligned}
    \E\qty[\norm{\hat{\nabla} F(\vtheta_{t-1}) - \nabla F_\mu(\vtheta_{t-1})}^2] \leq& \frac{\sigma_\xi^2 + \sigma_\mu^2}{N K \mu^2} + \frac{L_0^2 \eta^2 d \qty(N^2 - 1)}{3 \qty(1 - \beta_2) N^2 K \mu^2} + \frac{L_1^2 \eta^2 d^2 \qty(N - 1)}{2 \qty(1 - \beta_2)} \ ,
\end{aligned}
\end{equation}
where we perform $\frac{\vm_{t, i}}{\sqrt{\vv_{t, i}}} \leq \frac{1}{\sqrt{1 - \beta_2}}$ in the following inequality:
\begin{equation}
\begin{aligned}
    \norm{\vtheta_{t - n} - \vtheta_{t-n'}}^2 \leq& \sum_{n'' = \min(n, n')}^{\max(n, n') - 1} \norm{\frac{\eta \vm_{t - n''}}{\sqrt{\vv_{t - n''} + \zeta}}}^2 \leq \sum_{n'' = \min(n, n')}^{\max(n, n') - 1} \frac{\eta^2 d}{1 - \beta_2} = \frac{\eta^2 d}{1 - \beta_2} \qty|n - n'| \ .
\end{aligned}
\end{equation}
\end{proof}

\subsection{Proof of Thm.~\ref{thm:conv-zoo}}\label{appx:proof-of-conv}
To ease the proof of Thm. \ref{thm:conv-zoo}, we first prove the smoothness of $F_\mu$ (Lemma \ref{lem:Fmu_smoothness}), then the upper bound of the squared first moment $\vm_{t, i}^2$ (Lemma \ref{lem:m_var}) and the second moment $\vv_{t, i}$ (Lemma \ref{lem:v_avg}).



\begin{lemma}\label{lem:Fmu_smoothness}
\textit{\fontfamily{ppl}\selectfont
$\forall \vtheta, \vtheta' \in \sR^d$, we have
\begin{equation*}
\begin{aligned}
    \qty|\nabla_i F_\mu(\vtheta) - \nabla_i F_\mu(\vtheta')| \leq& L_1 \norm{\vtheta - \vtheta'} \ . 
\end{aligned}
\end{equation*}
}
\end{lemma}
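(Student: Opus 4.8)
The plan is to push the gradient operator inside the smoothing expectation and then invoke the coordinate-wise gradient-Lipschitz bound from Assumption~\ref{assump:1} directly. First I would establish the interchange identity
\begin{equation*}
\nabla_i F_\mu(\vtheta) = \E_{\rvu}\qty[\nabla_i F(\vtheta + \mu \rvu)] \ ,
\end{equation*}
starting from the definition $F_\mu(\vtheta) = \E_{\rvu}[F(\vtheta + \mu \rvu)]$ in \eqref{eq:fu}. This differentiation-under-the-expectation step is justified by dominated convergence: the map $\vtheta \mapsto F(\vtheta + \mu \rvu)$ is differentiable (its gradient exists since $F$ has a coordinate-wise Lipschitz gradient under Assumption~\ref{assump:1}), and the relevant difference quotients in the $i$-th coordinate are uniformly dominated by the constant $L_0$ via the $L_0$-Lipschitz continuity of $F$ in Assumption~\ref{assump:1}, which is integrable against any probability distribution $p(\rvu)$. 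Hence the order of $\nabla_i$ and $\E_{\rvu}$ may be swapped.

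With this representation in hand, the remainder is a short estimate. I would write
\begin{equation*}
\abs{\nabla_i F_\mu(\vtheta) - \nabla_i F_\mu(\vtheta')} = \abs{\E_{\rvu}\qty[\nabla_i F(\vtheta + \mu \rvu) - \nabla_i F(\vtheta' + \mu \rvu)]} \leq \E_{\rvu}\qty[\abs{\nabla_i F(\vtheta + \mu \rvu) - \nabla_i F(\vtheta' + \mu \rvu)}] \ ,
\end{equation*}
where the inequality is Jensen's (the triangle inequality for the expectation). Applying the coordinate-wise gradient-Lipschitz bound from Assumption~\ref{assump:1} to each realization of $\rvu$ then gives
\begin{equation*}
\abs{\nabla_i F(\vtheta + \mu \rvu) - \nabla_i F(\vtheta' + \mu \rvu)} \leq L_1 \norm{(\vtheta + \mu \rvu) - (\vtheta' + \mu \rvu)} = L_1 \norm{\vtheta - \vtheta'} \ ,
\end{equation*}
since the perturbation $\mu \rvu$ cancels. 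As this bound is independent of $\rvu$, taking the expectation leaves it unchanged, yielding $\abs{\nabla_i F_\mu(\vtheta) - \nabla_i F_\mu(\vtheta')} \leq L_1 \norm{\vtheta - \vtheta'}$.

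The only delicate point is the justification of interchanging differentiation and expectation in the first step; everything after that is a one-line application of Jensen's inequality together with Assumption~\ref{assump:1}. The cancellation of the common shift $\mu \rvu$ is what makes $F_\mu$ inherit \emph{exactly} the same Lipschitz constant $L_1$ as $F$, with no dependence on the smoothing radius $\mu$ or on the choice of smoothing distribution $p(\rvu)$, which is precisely the form needed downstream in the convergence analysis of \ours{}.
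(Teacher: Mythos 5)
Your proposal is correct and follows essentially the same route as the paper's proof: the paper's argument likewise writes $\nabla_i F_\mu(\vtheta) - \nabla_i F_\mu(\vtheta') = \E_{\rvu}\qty[\nabla_i F(\vtheta + \mu\rvu) - \nabla_i F(\vtheta' + \mu\rvu)]$ (silently interchanging gradient and expectation), then applies Jensen's inequality and the coordinate-wise Lipschitz bound with the common shift $\mu\rvu$ cancelling. The only differences are that you explicitly justify the interchange via dominated convergence, a step the paper takes for granted, and that you correctly cite Assumption~\ref{assump:1} for the gradient-Lipschitz bound where the paper's proof mistakenly references Assumption~\ref{assump:2}.
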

\begin{proof}
\begin{equation}
\begin{aligned}
    \qty|\nabla_i F_\mu(\vtheta) - \nabla_i F_\mu(\vtheta')| =& \qty|\E_{\rvu}\qty[\nabla_i F(\vtheta + \mu \rvu) - \nabla_i F(\vtheta' + \mu \rvu)]| \\
    \overset{(a)}{\leq}& \E_{\rvu}\qty[\qty|\nabla_i F(\vtheta + \mu \rvu) - \nabla_i F(\vtheta' + \mu \rvu)|] \\
    \overset{(b)}{\leq}& L_1 \norm{\vtheta - \vtheta'} \ ,
\end{aligned}
\end{equation}
where $(a)$ comes from the Jensen's inequality and $(b)$ follows from Assump. \ref{assump:2}.
\end{proof}

\begin{lemma}\label{lem:m_var}
\textit{\fontfamily{ppl}\selectfont
$\forall \vtheta \in \sR^d$, $i \in [d]$ and $t \in [T]$, if $\vm_{0, i} = 0$, the following inequality holds for \ours{}, 
\begin{equation*}
\begin{aligned}
    \vm_{t, i}^2 \leq& 2 \qty(1 + \beta_1) \qty(1 - \beta_1)^2 \sum_{\tau = 1}^t \beta_1^{2(t - \tau)} \qty|\hat{\nabla}_i f(\vtheta_{\tau - 1}; \xi_\tau) - \frac{1}{\min \qty(N, \tau)} \sum_{n = 1}^{\min \qty(N, \tau)} \nabla_i F_\mu(\vtheta_{\tau - n})|^2 \\
    &\quad + \frac{2 \beta_1 (1 + \beta_1)^2}{\qty(1 - \beta_1)^2 \qty(1 - \beta_2)} \eta^2 L_1^2 d C_N + 2 \frac{(1 + \beta_1)^2}{\beta_1} \qty|\nabla_i F_\mu(\vtheta_{t - 1})|^2 \ ,
\end{aligned}
\end{equation*}
where 
$C_N \triangleq \frac{2 \qty(1 - \beta_1)^2 N^2 - 3 \qty(1 - \beta_1) \qty(1 - 3 \beta_1) N - \beta_1 \qty(2 - 13 \beta_1) + 1}{6 \beta_1 (1 + \beta_1)}$ is monotonously increasing in $N$ and satisfies $C_N = 1$ when $N = 1$.}
\end{lemma}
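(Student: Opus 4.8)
The plan is to treat $\vm_{t,i}$ as an exponential moving average, reduce the claim to a weighted sum of per-step errors, and then control those errors with the conditional-mean identity of Thm~\ref{thm:bias-analysis}, the smoothness of $F_\mu$, and the bounded adaptive step. \textbf{Step 1 (unroll the first moment).} Since $\vm_{0,i}=0$, iterating $\vm_t=\beta_1\vm_{t-1}+(1-\beta_1)\vg_t$ with $\vg_\tau=\hat{\nabla}F(\vtheta_{\tau-1})$ yields the geometric expansion
\begin{equation*}
\vm_{t,i}=(1-\beta_1)\sum_{\tau=1}^{t}\beta_1^{t-\tau}\,\hat{\nabla}_i f(\vtheta_{\tau-1};\xi_\tau)\ .
\end{equation*}
\textbf{Step 2 (centered decomposition).} By Thm~\ref{thm:bias-analysis} the conditional mean of the estimator at step $\tau$ is $\bar{g}_{\tau,i}\triangleq\tfrac{1}{\min(N,\tau)}\sum_{n=1}^{\min(N,\tau)}\nabla_i F_\mu(\vtheta_{\tau-n})$, so I would split each summand four ways:
\begin{equation*}
\hat{\nabla}_i f(\vtheta_{\tau-1};\xi_\tau)=\big(\hat{\nabla}_i f(\vtheta_{\tau-1};\xi_\tau)-\bar{g}_{\tau,i}\big)+\big(\bar{g}_{\tau,i}-\nabla_i F_\mu(\vtheta_{\tau-1})\big)+\big(\nabla_i F_\mu(\vtheta_{\tau-1})-\nabla_i F_\mu(\vtheta_{t-1})\big)+\nabla_i F_\mu(\vtheta_{t-1})\ ,
\end{equation*}
i.e.\ a noise term, an intra-window drift term, an EMA (inter-iteration) drift term, and the target gradient. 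The constant last term contributes $(1-\beta_1^{t})\nabla_i F_\mu(\vtheta_{t-1})$, the clean target.

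Next I would apply Young's inequality $(x+y)^2\le(1+c)x^2+(1+1/c)y^2$ in nested fashion. Grouping the noise series against everything else with $c=\beta_1$ produces the uniform factor $(1+\beta_1)$ appearing in all three output terms; a second split with $c=1$ separates the two drift contributions from the target gradient and supplies the factors of $2$ together with the extra $(1+\beta_1)$ carried by the last two terms. For the noise series I would pass to (conditional) expectation so that the martingale-difference structure of $\hat{\nabla}_i f(\vtheta_{\tau-1};\xi_\tau)-\bar g_{\tau,i}$ collapses $\big(\sum_\tau\beta_1^{t-\tau}(\cdot)\big)^2$ onto its diagonal $\sum_\tau\beta_1^{2(t-\tau)}|\hat{\nabla}_i f(\vtheta_{\tau-1};\xi_\tau)-\bar g_{\tau,i}|^2$, which is exactly the first term with coefficient $2(1+\beta_1)(1-\beta_1)^2$, and the constant term gives the third with coefficient $2(1+\beta_1)^2/\beta_1$ after bounding $(1-\beta_1^t)^2\le1$.

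For the two drift series I would invoke $L_1$-smoothness of $F_\mu$ (Lemma~\ref{lem:Fmu_smoothness}): the window drift is $\tfrac{1}{\min(N,\tau)}\sum_n\big(\nabla_i F_\mu(\vtheta_{\tau-n})-\nabla_i F_\mu(\vtheta_{\tau-1})\big)$, bounded via $\|\vtheta_{\tau-n}-\vtheta_{\tau-1}\|$, and the EMA drift via $\|\vtheta_{\tau-1}-\vtheta_{t-1}\|$. For both I would reuse the per-step bound $\|\vm_t/\sqrt{\vv_t+\zeta}\|^2\le d/(1-\beta_2)$ exploited in the proof of Thm~\ref{thm:bias-variance-decomp}, giving $\|\vtheta_a-\vtheta_b\|^2\le \tfrac{\eta^2 d}{1-\beta_2}|a-b|$. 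Substituting and evaluating the geometric-arithmetic sum $\sum_\tau\beta_1^{t-\tau}(t-\tau)$ together with the window sums $\sum_{n,n'}|n-n'|$ gathers all drift contributions into the single constant $C_N$ with prefactor $\tfrac{2\beta_1(1+\beta_1)^2}{(1-\beta_1)^2(1-\beta_2)}\eta^2 L_1^2 d$. As a consistency check, setting $N=1$ kills the window drift and the closed form collapses to $C_N=1$ (its numerator equals $6\beta_1(1+\beta_1)$ there), matching the unbiased, drift-only regime.

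The hard part will be the exact constant bookkeeping: coordinating the two drift sums (over the window index $n$ and over the EMA horizon $t-\tau$) with the geometric weights $\beta_1^{t-\tau}$ so that the closed form $C_N$ emerges with precisely the stated quadratic-in-$N$ numerator and $6\beta_1(1+\beta_1)$ denominator, while correctly handling the $\min(N,\tau)$ truncation at the early iterations $\tau<N$. A secondary subtlety is that query reuse makes the noise terms statistically dependent across iterations whose history buffers overlap, so the diagonalization in Step~3 must be justified by conditioning (so the freshest batch is mean-zero given the past) or by absorbing the residual cross-covariances into the factor of $2$, rather than by appealing to naive independence across $\tau$.
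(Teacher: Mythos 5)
Your skeleton is essentially the paper's: the same geometric unrolling of $\vm_{t,i}$, a centered decomposition around $\E[\vm_{t,i}]$, Young splits with $c=\beta_1$ producing the $(1+\beta_1)$ factors, Lemma~\ref{lem:Fmu_smoothness} for the drift, and the bounded adaptive step $\vm_{s,i}^2/(\vv_{s,i}+\zeta)\le 1/(1-\beta_2)$. (The paper folds your two drift terms into one, bounding $|\nabla_i F_\mu(\vtheta_{\tau-n})-\nabla_i F_\mu(\vtheta_{t-1})|$ directly by $L_1\|\vtheta_{\tau-n}-\vtheta_{t-1}\|$ in \eqref{eq:m_var_3}, which avoids the extra Young split your four-way decomposition would force and the attendant factor-of-$2$ loss.) The one concrete error that breaks your derivation is the displacement bound: you claim $\|\vtheta_a-\vtheta_b\|^2\le \tfrac{\eta^2 d}{1-\beta_2}|a-b|$, linear in the gap, but the iterate moves deterministically by at most $\eta\sqrt{d/(1-\beta_2)}$ per step, so the triangle/Cauchy--Schwarz argument gives the quadratic bound $\|\vtheta_a-\vtheta_b\|^2\le\tfrac{\eta^2 d}{1-\beta_2}|a-b|^2$, which is exactly the paper's \eqref{eq:m_var_4}; a linear-in-$|a-b|$ bound would need orthogonal (martingale-type) increments, which the Adam-style updates do not have. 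The quadratic dependence is precisely what makes $C_N$ quadratic in $N$: the paper evaluates $\sum_{\tau=1}^t\frac{\beta_1^{t-\tau}}{\min(N,\tau)}\sum_{n=1}^{\min(N,\tau)}(t-\tau+n-1)^2\le\frac{\beta_1(1+\beta_1)}{(1-\beta_1)^3}C_N$ in \eqref{eq:m_var_6}. The sums you propose instead, $\sum_\tau\beta_1^{t-\tau}(t-\tau)$ and $\sum_{n,n'}|n-n'|$ (the latter actually belongs to the proof of Thm.~\ref{thm:bias-variance-decomp}, not this lemma), produce a constant linear in $N$ and cannot reproduce the stated numerator $2(1-\beta_1)^2N^2-\cdots$. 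Your $N=1$ sanity check does not catch this because the window contribution vanishes there and both versions degenerate to the same value. The fix is mechanical --- square the gap --- but as written the bookkeeping for $C_N$ fails.

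On the diagonalization of the noise series: you are right that it cannot hold pointwise and that query reuse correlates the noise across iterations with overlapping buffers. Note, however, that neither of your fallbacks suffices: the off-diagonal mass of a squared geometric sum is not bounded by a constant multiple of its diagonal (so it cannot be absorbed into the factor of $2$), and conditioning on the past does not make $\hat{\nabla}_i f(\vtheta_{\tau-1};\xi_\tau)-\bar g_{\tau,i}$ mean-zero, since $N-1$ of the $N$ batches in $\gH_\tau$ are already realized at time $\tau$ and only the fresh $K$ queries are conditionally unbiased. For what it is worth, the paper has the identical gap: its step $(a)$ in \eqref{eq:m_var_2} asserts the diagonal identity by ``independence of $\{\xi_\tau\}$'' (valid at best in expectation, and questionable under buffer overlap), and the residual cross term appearing in the penultimate display of its proof is silently dropped. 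So on this point your proposal is no weaker than the paper's own argument --- indeed your flagging of the dependence is the more careful reading; the defect unique to your writeup is the linear displacement bound above.
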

\begin{proof}
First of all, the square of the first moment $\vm_{t, i}^2$ can be bounded as below:
\begin{equation}\label{eq:m_var_0}
\begin{aligned}
    \vm_{t, i}^2 =& |\vm_{t, i} - \nabla_i F_\mu(\vtheta_{t - 1}) + \nabla_i F_\mu(\vtheta_{t - 1})|^2 \\
    \leq& \qty(1 + \beta_1) |\vm_{t, i} - \nabla_i F_\mu(\vtheta_{t - 1})|^2 + \qty(1 + \frac{1}{\beta_1}) |\nabla_i F_\mu(\vtheta_{t - 1})|^2 \ ,
\end{aligned}
\end{equation}
where we apply the inequality $\qty(a + b)^2 \leq \qty(1 + \beta_1) a^2 + \qty(1 + \frac{1}{\beta_1}) b^2$.

The first term of \eqref{eq:m_var_0} can be further bounded:
\begin{equation}\label{eq:m_var_1}
\begin{aligned}
    &|\vm_{t, i} - \nabla_i F_\mu(\vtheta_{t - 1})|^2 \\ 
    =& |\vm_{t, i} - \E\qty[\vm_{t, i}] + \E\qty[\vm_{t, i}] - \nabla_i F_\mu(\vtheta_{t - 1})|^2 \\
    \overset{(a)}{\leq}& 2 |\vm_{t, i} - \E\qty[\vm_{t, i}]|^2 + 2 |\E\qty[\vm_{t, i}] - \nabla_i F_\mu(\vtheta_{t - 1})|^2 \\[5pt]
    =& 2 |\vm_{t, i} - \E\qty[\vm_{t, i}]|^2 + 2 |\E\qty[\vm_{t, i}] - \qty(1 - \beta_1^t) \nabla_i F_\mu(\vtheta_{t - 1}) - \beta_1^t \nabla_i F_\mu(\vtheta_{t - 1})|^2 \\
    \overset{(b)}{\leq}& 2 |\vm_{t, i} - \E\qty[\vm_{t, i}]|^2 + 2 \qty(1 - \beta_1^t) \qty| \frac{\E\qty[\vm_{t, i}]}{1 - \beta_1^t}  - \nabla_i F_\mu(\vtheta_{t - 1})|^2 + 2 \beta_1^t \qty|\nabla_i F_\mu(\vtheta_{t - 1})|^2  \ ,
\end{aligned}
\end{equation}
where $(a)$ and $(b)$ utilize the inequality $\qty(a + b)^2 \leq \qty(1 + k) a^2 + \qty(1 + \frac{1}{k}) b^2$ for any $k > 0$, with $k = 1$ in step $(a)$ and $k = \frac{1 - \beta_1^t}{\beta_1^t}$ in step $(b)$.

We next bound the first and second terms of \eqref{eq:m_var_1} separately. First, by assuming $\vm_{0, i} = 0$, the geometric series of $\vm_{t, i}$ and $\E\qty[\vm_{t, i}]$ are given by:
\begin{equation}\label{eq:m_series}
\begin{aligned}
    \vm_{t, i} =& (1 - \beta_1) \sum_{\tau = 1}^t \beta_1^{t - \tau} \hat{\nabla}_i f(\vtheta_{\tau - 1}; \xi_\tau); \\
    \E\qty[\vm_{t, i}] =& (1 - \beta_1) \sum_{\tau = 1}^t \beta_1^{t - \tau} \frac{1}{\min \qty(N, \tau)} \sum_{n = 1}^{\min \qty(N, \tau)} \nabla_i F_\mu(\vtheta_{\tau - n}) \ .
\end{aligned}
\end{equation}

Therefore, the first term of \eqref{eq:m_var_1} can be bounded as:
\begin{equation}\label{eq:m_var_2}
\begin{aligned}
    |\vm_{t, i} - \E\qty[\vm_{t, i}]|^2 =& \qty| (1 - \beta_1) \sum_{\tau = 1}^t \beta_1^{t - \tau} \qty(\hat{\nabla}_i f(\vtheta_{\tau - 1}; \xi_\tau) - \frac{1}{\min \qty(N, \tau)} \sum_{n = 1}^{\min \qty(N, \tau)} \nabla_i F_\mu(\vtheta_{\tau - n}))|^2 \\
    \overset{(a)}{=}& (1 - \beta_1)^2 \sum_{\tau = 1}^t \beta_1^{2(t - \tau)} \qty|\hat{\nabla}_i f(\vtheta_{\tau - 1}; \xi_\tau) - \frac{1}{\min \qty(N, \tau)} \sum_{n = 1}^{\min \qty(N, \tau)} \nabla_i F_\mu(\vtheta_{\tau - n})|^2 \ ,
\end{aligned}
\end{equation}
where $(a)$ results from the independence of different $\{\xi_\tau\}_{\tau = 1}^t$.



Besides, the second term of \eqref{eq:m_var_1} can be bounded as below:
\begin{equation}\label{eq:m_var_3}
\begin{aligned}
    &\qty| \frac{\E\qty[\vm_{t, i}]}{1 - \beta_1^t}  - \nabla_i F_\mu(\vtheta_{t - 1})|^2 \\ 
    =& \qty| \frac{(1 - \beta_1)}{1 - \beta_1^t} \sum_{\tau = 1}^t \beta_1^{t - \tau} \qty(\frac{1}{\min \qty(N, \tau)} \sum_{n = 1}^{\min \qty(N, \tau)} \nabla_i F_\mu(\vtheta_{\tau - n})  - \nabla_i F_\mu(\vtheta_{t - 1}))|^2 \\
    =& \frac{(1 - \beta_1)^2}{\qty(1 - \beta_1^t)^2} \sum_{\tau, \tau' = 1}^t \beta_1^{2t - \tau - \tau'} \qty(\frac{1}{\min \qty(N, \tau)} \sum_{n = 1}^{\min \qty(N, \tau)} \nabla_i F_\mu(\vtheta_{\tau - n}) - \nabla_i F_\mu(\vtheta_{t - 1})) \\
    &\hspace{11em} \times \qty(\frac{1}{\min \qty(N, \tau)} \sum_{n = 1}^{\min \qty(N, \tau)} \nabla_i F_\mu(\vtheta_{\tau' - n})  - \nabla_i F_\mu(\vtheta_{t - 1})) \\
    \overset{(a)}{\leq}& \frac{(1 - \beta_1)^2}{2 \qty(1 - \beta_1^t)^2} \sum_{\tau, \tau' = 1}^t \beta_1^{2t - \tau - \tau'} \left(\qty|\frac{1}{\min \qty(N, \tau)} \sum_{n = 1}^{\min \qty(N, \tau)} \nabla_i F_\mu(\vtheta_{\tau - n})  - \nabla_i F_\mu(\vtheta_{t - 1})|^2\right. \\
    &\hspace{11.5em} \left.+ \qty|\frac{1}{\min \qty(N, \tau)} \sum_{n = 1}^{\min \qty(N, \tau)} \nabla_i F_\mu(\vtheta_{\tau - n})  - \nabla_i F_\mu(\vtheta_{t - 1})|^2\right) \\
    =& \frac{1 - \beta_1}{1 - \beta_1^t} \sum_{\tau = 1}^t \beta_1^{t - \tau} \qty|\frac{1}{\min \qty(N, \tau)} \sum_{n = 1}^{\min \qty(N, \tau)} \nabla_i F_\mu(\vtheta_{\tau - n})  - \nabla_i F_\mu(\vtheta_{t - 1})|^2 \\
    \overset{(b)}{\leq}& \frac{1 - \beta_1}{1 - \beta_1^t} \sum_{\tau = 1}^t \beta_1^{t - \tau} \frac{1}{\min \qty(N, \tau)} \sum_{n = 1}^{\min \qty(N, \tau)} \qty|\nabla_i F_\mu(\vtheta_{\tau - n})  - \nabla_i F_\mu(\vtheta_{t - 1})|^2 \\
    \overset{(c)}{\leq}& \frac{1 - \beta_1}{1 - \beta_1^t} L_1^2 \sum_{\tau = 1}^t \beta_1^{t - \tau} \frac{1}{\min \qty(N, \tau)} \sum_{n = 1}^{\min \qty(N, \tau)} \|\vtheta_{\tau - n} - \vtheta_{t - 1}\|^2 \ ,
\end{aligned}
\end{equation}
where $(a)$ is derived from $ab \leq \half (a^2 + b^2)$, $(b)$ is due to Jensen's inequality, and $(c)$ is obtained from Lemma \ref{lem:Fmu_smoothness}.

Recalled to the update rule of $\gR$-AdaZO, we have:
\begin{equation}\label{eq:m_var_4}
\begin{aligned}
    \|\vtheta_{\tau - n} - \vtheta_{t - 1}\|^2 =& \sum_i^d |\vtheta_{\tau - n, i} - \vtheta_{t - 1, i}|^2 = \eta^2 \sum_i^d \qty|\sum_{s = \tau - n + 1}^{t - 1} \frac{\vm_{s, i}}{\sqrt{\vv_{s, i} + \zeta}}|^2 \\
    \overset{(a)}{\leq}& \eta^2 \qty(t - \tau + n - 1) \sum_i^d \sum_{s = \tau - n + 1}^{t - 1} \frac{\vm_{s, i}^2}{\vv_{s, i} + \zeta} \overset{(b)}{\leq} \frac{d}{1 - \beta_2} \eta^2 \qty(t - \tau + n - 1)^2 \ ,
\end{aligned}
\end{equation}
where $(a)$ is from Cauchy-Schwarz inequality $\qty|\sum_{s = \tau - n + 1}^{t - 1} a_s|^2 \leq \qty(t - \tau + n - 1) \sum_{s = \tau - n + 1}^{t - 1} a_s^2$, and $(b)$ follows from $\frac{\vm_{s, i}^2}{\vv_{s, i} + \zeta} \leq \frac{1}{1 - \beta_2}$. 

Putting the result of \eqref{eq:m_var_4} into \eqref{eq:m_var_3}, we have:
\begin{equation}\label{eq:m_var_5}
\begin{aligned}
    \qty| \frac{\E\qty[\vm_{t, i}]}{1 - \beta_1^t}  - \nabla_i F_\mu(\vtheta_{t - 1})|^2 \leq& \frac{\qty(1 - \beta_1)  \eta^2 L_1^2 d}{\qty(1 - \beta_1^t) \qty(1 - \beta_2)} \sum_{\tau = 1}^t \frac{\beta_1^{t - \tau}}{\min \qty(N, \tau)} \sum_{n = 1}^{\min \qty(N, \tau)} \qty(t - \tau + n - 1)^2 \\
    \overset{(a)}{\leq}& \frac{\beta_1 (1 + \beta_1)}{\qty(1 - \beta_1^t) \qty(1 - \beta_1)^2 \qty(1 - \beta_2)} \eta^2 L_1^2 d C_N \ ,
\end{aligned}
\end{equation}
where $(a)$ comes from the geometric series summation over $\tau$ and $n$:
\begin{equation}\label{eq:m_var_6}
    \sum_{\tau = 1}^t \frac{1}{\min \qty(N, \tau)} \sum_{n = 1}^{\min \qty(N, \tau)} \beta_1^{t - \tau} \qty(t - \tau + n - 1)^2 \leq \frac{\beta_1 (1 + \beta_1)}{\qty(1 - \beta_1)^3} C_N \ ,
\end{equation}
where $C_N \triangleq \frac{2 \qty(1 - \beta_1)^2 N^2 - 3 \qty(1 - \beta_1) \qty(1 - 3 \beta_1) N - \beta_1 \qty(2 - 13 \beta_1) + 1}{6 \beta_1 (1 + \beta_1)}$ is monotonously increasing in $N$ and satisfies $C_N = 1$ when $N = 1$.

Finally, gathering the results of \eqref{eq:m_var_2}, \eqref{eq:m_var_5} into \eqref{eq:m_var_1}, we obtain:
\begin{equation}
\begin{aligned}
    |\vm_{t, i} - \nabla_i F_\mu(\vtheta_{t - 1})|^2 \leq& (1 - \beta_1)^2 \sum_{\tau = 1}^t \beta_1^{2(t - \tau)} \qty|\hat{\nabla}_i f(\vtheta_{\tau - 1}; \xi_\tau) - \frac{1}{\min \qty(N, \tau)} \sum_{n = 1}^{\min \qty(N, \tau)} \nabla_i F_\mu(\vtheta_{\tau - n})|^2 \\
    &\quad + \frac{\beta_1 (1 + \beta_1)}{\qty(1 - \beta_1)^2 \qty(1 - \beta_2)} \eta^2 L_1^2 d C_N + \beta_1^t \qty|\nabla_i F_\mu(\vtheta_{t - 1})|^2 \\
    &\quad \quad + 2 \qty(m_{\tau, i} - \E\qty[\vm_{\tau, i}]) \qty(\E\qty[\vm_{\tau, i}] - \nabla_i F_\mu(\theta_{\tau - 1})) \ .
\end{aligned}
\end{equation}

Consequently, \eqref{eq:m_var_0} becomes:
\begin{equation}
\begin{aligned}
    \vm_{t, i}^2 \leq& 2 \qty(1 + \beta_1) (1 - \beta_1)^2 \sum_{\tau = 1}^t \beta_1^{2(t - \tau)} \qty|\hat{\nabla}_i f(\vtheta_{\tau - 1}; \xi_\tau) - \frac{1}{\min \qty(N, \tau)} \sum_{n = 1}^{\min \qty(N, \tau)} \nabla_i F_\mu(\vtheta_{\tau - n})|^2 \\
    & \quad + \frac{2 \beta_1 (1 + \beta_1)^2}{\qty(1 - \beta_1)^2 \qty(1 - \beta_2)} \eta^2 L_1^2 d C_N + 2 \qty(1 + \beta_1) \beta_1^t \qty|\nabla_i F_\mu(\vtheta_{t - 1})|^2 + 2 \qty(1 + \frac{1}{\beta_1}) \qty|\nabla_i F_\mu(\vtheta_{t - 1})|^2 \\
    \leq& 2 \qty(1 + \beta_1) \qty(1 - \beta_1)^2 \sum_{\tau = 1}^t \beta_1^{2(t - \tau)} \qty|\hat{\nabla}_i f(\vtheta_{\tau - 1}; \xi_\tau) - \frac{1}{\min \qty(N, \tau)} \sum_{n = 1}^{\min \qty(N, \tau)} \nabla_i F_\mu(\vtheta_{\tau - n})|^2 \\
    &\quad + \frac{2 \beta_1 (1 + \beta_1)^2}{\qty(1 - \beta_1)^2 \qty(1 - \beta_2)} \eta^2 L_1^2 d C_N + \frac{2 (1 + \beta_1)^2}{\beta_1} \qty|\nabla_i F_\mu(\vtheta_{t - 1})|^2 \ ,
\end{aligned}
\end{equation}
which concludes the proof.
\end{proof}

\begin{lemma}
\label{lem:v_avg}
\textit{\fontfamily{ppl}\selectfont
$\forall \vtheta \in \sR^d$, $i \in [d]$ and $t \in [T]$, if $\vv_{0, i} > 0$, the following inequality holds for \ours{},
\begin{equation*}
\begin{aligned}
    \vv_{t, i} \leq& \beta_2^t \vv_{0, i} + \frac{2 \qty(1 + \beta_1) \qty(1 - \beta_1)^2 \qty(1 - \beta_2)}{\beta_2 - \beta_1^2} \sum_{\tau = 1}^t \qty(\beta_2^{t + 1 - \tau} - \beta_1^{2 \qty(t + 1 - \tau)}) \qty|\hat{\nabla}_i f(\vtheta_{\tau - 1}; \xi_\tau) - \frac{1}{\min \qty(N, \tau)} \sum_{n = 1}^{\min \qty(N, \tau)} \nabla_i F_\mu(\vtheta_{\tau - n})|^2 \\
    &\quad + \frac{2 \beta_1 (1 + \beta_1)^2}{\qty(1 - \beta_1)^2 \qty(1 - \beta_2)} L_1^2 \eta^2 d C_N + \frac{2 \qty(1 + \beta_1)^2 \qty(1 - \beta_2)}{\beta_1} \sum_{\tau = 1}^t \beta_2^{t - \tau}  |\nabla_i F_\mu(\vtheta_{\tau - 1})|^2 \ ,
\end{aligned}
\end{equation*}
where 
$C_N \triangleq \frac{2 \qty(1 - \beta_1)^2 N^2 - 3 \qty(1 - \beta_1) \qty(1 - 3 \beta_1) N - \beta_1 \qty(2 - 13 \beta_1) + 1}{6 \beta_1 (1 + \beta_1)}$ is monotonously increasing in $N$ and satisfies $C_N = 1$ when $N = 1$.
}
\end{lemma}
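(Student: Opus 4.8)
The plan is to unroll the second-moment recursion and then feed in the first-moment bound from Lemma~\ref{lem:m_var}. Since the update rule gives $\vv_t = \beta_2 \vv_{t-1} + (1-\beta_2)\vm_t^2$ with $\vv_{0,i} > 0$, expanding this geometric recursion coordinate-wise yields $\vv_{t,i} = \beta_2^t \vv_{0,i} + (1-\beta_2)\sum_{\tau=1}^t \beta_2^{t-\tau}\vm_{\tau,i}^2$. The boundary term $\beta_2^t \vv_{0,i}$ is already the first summand of the claimed bound, so the remaining work reduces to controlling the discounted sum of squared first moments $\vm_{\tau,i}^2$.

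Next I would substitute the three-part upper bound on $\vm_{\tau,i}^2$ from Lemma~\ref{lem:m_var} into this sum. This produces three contributions: a stochastic term built from the squared quantity $|\hat{\nabla}_i f(\vtheta_{\tau-1};\xi_\tau) - \frac{1}{\min(N,\tau)}\sum_{n}\nabla_i F_\mu(\vtheta_{\tau-n})|^2$, a constant bias term proportional to $\eta^2 L_1^2 d\, C_N$, and a term in $|\nabla_i F_\mu(\vtheta_{\tau-1})|^2$. The constant and gradient-norm terms pass through almost directly. For the constant term, factoring it out of the $\tau$-sum leaves $(1-\beta_2)\sum_{\tau=1}^t \beta_2^{t-\tau} = 1 - \beta_2^t \le 1$, which reproduces the second summand of the claim (note this is where the $(1-\beta_2)$ prefactors cancel). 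The gradient-norm term is simply carried along with its weight $\beta_2^{t-\tau}$, giving the third summand verbatim.

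The crux is the stochastic term, which after substitution becomes a double sum of the form $\sum_{\tau=1}^t \sum_{s=1}^\tau \beta_2^{t-\tau}\beta_1^{2(\tau-s)} a_s$, where $a_s$ abbreviates the squared stochastic quantity indexed at step $s$. I would exchange the order of summation so as to sum over the inner index $s$ first, collecting the coefficient $\sum_{\tau=s}^t \beta_2^{t-\tau}\beta_1^{2(\tau-s)}$. Under the condition $\beta_2 > \beta_1^2$ (guaranteed by $\beta_1 \le \sqrt{\beta_2}$ together with $\beta_2 > 1/2$), this two-parameter geometric sum evaluates in closed form to $\frac{\beta_2^{t+1-s} - \beta_1^{2(t+1-s)}}{\beta_2 - \beta_1^2}$; after relabeling $s\to\tau$ this is exactly the weight appearing in the first stochastic term of the lemma, and the leading constant $\frac{2(1+\beta_1)(1-\beta_1)^2(1-\beta_2)}{\beta_2-\beta_1^2}$ matches once the $2(1+\beta_1)(1-\beta_1)^2$ factor from Lemma~\ref{lem:m_var} is multiplied by the overall $(1-\beta_2)$. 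The monotonicity of $C_N$ and the value $C_N=1$ at $N=1$ are inherited directly from Lemma~\ref{lem:m_var} and need no new argument.

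The main obstacle is essentially bookkeeping: performing the summation interchange and evaluating the two-parameter geometric series without sign errors, while faithfully tracking the constants that ride along from the first-moment bound. The only genuine hypothesis required beyond Lemma~\ref{lem:m_var} is the strict inequality $\beta_2 > \beta_1^2$, which keeps the denominator $\beta_2 - \beta_1^2$ positive and validates the closed form of the inner geometric sum; with that in hand every remaining estimate is elementary.
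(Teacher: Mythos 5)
Your proposal is correct and follows essentially the same route as the paper's proof: unroll the recursion to $\vv_{t,i} = \beta_2^t \vv_{0,i} + (1-\beta_2)\sum_{\tau=1}^t \beta_2^{t-\tau}\vm_{\tau,i}^2$, substitute the three-part bound from Lemma~\ref{lem:m_var}, exchange the order of summation in the stochastic double sum to obtain the closed-form weights $\frac{\beta_2^{t+1-\tau}-\beta_1^{2(t+1-\tau)}}{\beta_2-\beta_1^2}$, and bound the constant-term factor $1-\beta_2^t$ by $1$ (the paper's steps $(a)$--$(c)$ exactly). You also correctly flag the implicit hypothesis $\beta_2 > \beta_1^2$ needed for the denominator, which the paper likewise leaves tacit in the lemma and only imposes later via the conditions of Thm.~\ref{thm:conv-zoo}.
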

\begin{proof}
Starting the geometric series of $\vv_{t, i}$, the following inequality holds:
\begin{equation}\label{eq:v_exp}
\begin{aligned}
    &\vv_{t, i} = \beta_2^t \vv_{0, i} + \qty(1 - \beta_2) \sum_{\tau = 1}^t \beta_2^{t - \tau} \vm_{\tau, i}^2 \\[4pt]
    \overset{(a)}{\leq}& \beta_2^t \vv_{0, i} + \qty(1 - \beta_2) \sum_{\tau = 1}^t \beta_2^{t - \tau} \Bigg(2 \qty(1 + \beta_1) \qty(1 - \beta_1)^2 \sum_{s = 1}^\tau \beta_1^{2(\tau - s)} \qty|\hat{\nabla}_i f(\vtheta_{s - 1}; \xi_s) - \frac{1}{\min \qty(N, s)} \sum_{n = 1}^{\min \qty(N, s)} \nabla_i F_\mu(\vtheta_{s - n})|^2 \\
    &\quad + \frac{2 \beta_1 (1 + \beta_1)^2}{\qty(1 - \beta_1)^2 \qty(1 - \beta_2)} L_1^2 \eta^2 d C_N + \frac{2 \qty(1 + \beta_1)^2}{\beta_1} |\nabla_i F_\mu(\vtheta_{\tau - 1})|^2 \Bigg) \\[7pt]
    \overset{(b)}{=}& \beta_2^t \vv_{0, i} + \frac{2 \qty(1 + \beta_1) \qty(1 - \beta_1)^2 \qty(1 - \beta_2)}{\beta_2 - \beta_1^2} \sum_{\tau = 1}^t \qty(\beta_2^{t + 1 - \tau} - \beta_1^{2 \qty(t + 1 - \tau)}) \qty|\hat{\nabla}_i f(\vtheta_{\tau - 1}; \xi_\tau) - \frac{1}{\min \qty(N, \tau)} \sum_{n = 1}^{\min \qty(N, \tau)} \nabla_i F_\mu(\vtheta_{\tau - n})|^2 \\
    &\quad + \frac{2 \beta_1 (1 + \beta_1)^2 \qty(1 - \beta_2^t)}{\qty(1 - \beta_1)^2 \qty(1 - \beta_2)} L_1^2 \eta^2 d C_N + \frac{2 \qty(1 + \beta_1)^2 \qty(1 - \beta_2)}{\beta_1} \sum_{\tau = 1}^t \beta_2^{t - \tau} |\nabla_i F_\mu(\vtheta_{\tau - 1})|^2 \\
    \overset{(c)}{\leq}& \beta_2^t \vv_{0, i} + \frac{2 \qty(1 + \beta_1) \qty(1 - \beta_1)^2 \qty(1 - \beta_2)}{\beta_2 - \beta_1^2} \sum_{\tau = 1}^t \qty(\beta_2^{t + 1 - \tau} - \beta_1^{2 \qty(t + 1 - \tau)}) \qty|\hat{\nabla}_i f(\vtheta_{\tau - 1}; \xi_\tau) - \frac{1}{\min \qty(N, \tau)} \sum_{n = 1}^{\min \qty(N, \tau)} \nabla_i F_\mu(\vtheta_{\tau - n})|^2 \\
    &\quad + \frac{2 \beta_1 (1 + \beta_1)^2}{\qty(1 - \beta_1)^2 \qty(1 - \beta_2)} L_1^2 \eta^2 d C_N + \frac{2 \qty(1 + \beta_1)^2 \qty(1 - \beta_2)}{\beta_1} \sum_{\tau = 1}^t \beta_2^{t - \tau}  |\nabla_i F_\mu(\vtheta_{\tau - 1})|^2 \ ,
\end{aligned}
\end{equation}
where $(a)$ comes from Lemma \ref{lem:m_var}, and $(c)$ is due to $\beta_2 \leq 1$. In step $(b)$, we use the following geometric series summation:
\begin{equation}
\begin{aligned}
    & \sum_{\tau = 1}^t \sum_{s = 1}^\tau \beta_2^{t - \tau} \beta_1^{2(\tau - s)} \qty|\hat{\nabla}_i f(\vtheta_{s - 1}; \xi_s) - \frac{1}{\min \qty(N, s)} \sum_{n = 1}^{\min \qty(N, s)} \nabla_i F_\mu(\vtheta_{s - n})|^2 \\
    \overset{(a)}{=}& \sum_{s = 1}^t \sum_{\tau = s}^t \beta_2^{t - \tau} \beta_1^{2(\tau - s)} \qty|\hat{\nabla}_i f(\vtheta_{s - 1}; \xi_s) - \frac{1}{\min \qty(N, s)} \sum_{n = 1}^{\min \qty(N, s)} \nabla_i F_\mu(\vtheta_{s - n})|^2 \\
    =& \frac{1}{\beta_2 - \beta_1^2} \sum_{\tau = 1}^t \qty(\beta_2^{t + 1 - \tau} - \beta_1^{2 \qty(t + 1 - \tau)}) \qty|\hat{\nabla}_i f(\vtheta_{\tau - 1}; \xi_\tau) - \frac{1}{\min \qty(N, \tau)} \sum_{n = 1}^{\min \qty(N, \tau)} \nabla_i F_\mu(\vtheta_{\tau - n})|^2 \ ,
\end{aligned}
\end{equation}
where we exchange the order of summation in step $(a)$. This concludes the proof.
\end{proof}


Here, we give the formal statement of the convergence of \ours{} in Thm. \ref{thm:conv-zoo}.
\begin{theorem}[Variance-Aware Convergence of $\gR$-AdaZO, Formal]\label{thm:conv-zoo-formal}
\textit{\fontfamily{ppl}\selectfont
Let $\rvu \sim \mathrm{Unif}(\sS^{d-1})$ and $b_t$ in \eqref{eq:grad-avg} be the optimal $b_t^*$ in Thm.~\ref{thm:opt-baseline}, under the Assump.~\ref{assump:1}, \ref{assump:2}, and the gradient of $F_\mu$ being bounded $\qty|\nabla_i F_\mu(\vtheta_t)| \leq G_\mu$, when 
$\eta = \frac{\qty(1 - \beta_1) \qty(1 - \beta_1/\sqrt{\beta_2}) \epsilon^2}{128 L_1 d^{3 / 2}} \sim \gO\qty(\epsilon^2)$, $1 - \beta_2 = \min \qty(\frac{\qty(1 - \beta_1)^2 \mu^2 \eta \sqrt{\zeta} \epsilon^2}{64 C^2 d^3}, \frac{\qty(1 - \beta_1)^2 \epsilon^2}{4 \beta_1^2 d \sqrt{\zeta}}) \sim \gO\qty(\epsilon^2)$, $T = \max \qty(\frac{64 C^2 d^3}{\qty(1 - \beta_1)^2 \mu^2 \eta \sqrt{\zeta} \eta \epsilon^2}, \frac{8 \qty(1 - \beta_1/\sqrt{\beta_2})}{\qty(1 - \beta_1) \eta \epsilon^2}, \frac{64 L_1 \sqrt{d} \eta}{\qty(1 - \beta_1) \qty(1 - \beta_1/\sqrt{\beta_2}) \qty(1 - \beta_2) \epsilon^2} \sum_i^d \ln \qty(\frac{\beta_2^T \vv_{0, i} + 4 C^2 d^2 / \mu^2}{\vv_{0, i}})) \sim \gO\qty(\epsilon^{-4})$,
$\beta_1 \leq \sqrt{\beta_2}$, $\beta_2 > 1/2$, $\vm_{0, i} = 0$, $\vv_{0, i} > 0$ ($\forall i \in [d]$), the following convergence holds for \ours{} (Algo.~\ref{alg:ours}),
    \begin{equation} 
        \frac1T \sum_{t=1}^T \E[\|\nabla F(\vtheta_t)\|] \leq \sqrt{\frac{2}{\beta_1 \qty(1 - \beta_2)}} \qty(1 + \beta_1) \epsilon^2 + \qty(\sqrt[4]{\zeta} + \sqrt{\Xi}) \epsilon + \mu L_1 \sqrt{d} + B_2 \ ,
    \end{equation}
    where 
    $B_1 \triangleq \sqrt{d \beta_2 \norm{\vv_{0}} + \frac{2 \beta_1 (1 + \beta_1)^2}{\qty(1 - \beta_1)^2 \qty(1 - \beta_2)} L_1^2 \eta^2 d^2 C_N} + \sqrt{\frac{2 \qty(1 + \beta_1) \qty(1 - \beta_1)^2 \beta_2 L_0^2 \eta^2 d \qty(N^2 - 1)}{3 \qty(\beta_2 - \beta_1^2) \qty(1 - \beta_2)^2 N^2 K \mu^2}}$, 
    $\Xi \triangleq B_1 + \sqrt{\frac{2 \qty(1 + \beta_1) \qty(1 - \beta_1)^2 \beta_2}{\qty(\beta_2 - \beta_1^2) \qty(1 - \beta_2)} V}$, 
    $V = \frac{\sigma_\xi^2 + \sigma_\mu^2}{N K \mu^2}$, 
    $G \triangleq \frac{2 G_\mu}{\sqrt{\zeta}} \sqrt{d \qty(V + \frac{L_0^2 \eta^2 d \qty(N^2 - 1)}{3 \qty(1 - \beta_2) N^2 K \mu^2} + \frac{L_1^2 \eta^2 d^2 \qty(N - 1)}{2 \qty(1 - \beta_2)})}$, 
    $B_2 \triangleq \sqrt{\frac{2}{\beta_1 \qty(1 - \beta_2)}} \qty(1 + \beta_1) G + \qty(\sqrt[4]{\zeta} + \sqrt{\Xi}) \sqrt{G}$, and 
    $C_N \triangleq \frac{2 \qty(1 - \beta_1)^2 N^2 - 3 \qty(1 - \beta_1) \qty(1 - 3 \beta_1) N - \beta_1 \qty(2 - 13 \beta_1) + 1}{6 \beta_1 (1 + \beta_1)}$ is monotonously increasing in $N$ and satisfies $C_N = 1$ when $N = 1$.
}
\end{theorem}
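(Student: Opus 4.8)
The plan is to follow the standard adaptive-gradient (Adam-type) convergence scheme adapted to the $\gR$-AdaZO update $\vtheta_t = \vtheta_{t-1} - \eta\,\vm_t / \sqrt{\vv_t + \zeta}$, treating $F_\mu$ rather than $F$ as the potential function and only converting back to $\nabla F$ at the very end. First I would invoke the coordinate-wise $L_1$-smoothness of $F_\mu$ from Lemma~\ref{lem:Fmu_smoothness}, which upgrades to a descent (quadratic-upper-bound) inequality
\begin{equation*}
F_\mu(\vtheta_t) \leq F_\mu(\vtheta_{t-1}) - \eta \sum_{i=1}^d \frac{\nabla_i F_\mu(\vtheta_{t-1})\,\vm_{t,i}}{\sqrt{\vv_{t,i} + \zeta}} + \frac{L_1 \sqrt{d}}{2}\,\norm{\vtheta_t - \vtheta_{t-1}}^2 \ .
\end{equation*}
Summing over $t = 1,\dots,T$ telescopes the left side into $F_\mu(\vtheta_0) - F_\mu(\vtheta_T)$, which Assump.~\ref{assump:1} (since $|f|\le C$ forces $|F_\mu|\le C$) bounds by $2C$, so the accumulated descent term and the accumulated step-size penalty must balance this constant budget.

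The core difficulty, and the second step, is controlling the adaptive inner-product term $\nabla_i F_\mu(\vtheta_{t-1})\,\vm_{t,i}/\sqrt{\vv_{t,i}+\zeta}$, whose numerator and denominator are correlated through $\vm_t$. Here I would follow the $\gR$-AdaZO framework: replace the moving denominator by a past-measurable surrogate, decompose $\vm_{t,i}$ into its mean $\E[\vm_{t,i}]$ (which by Thm.~\ref{thm:bias-analysis} is a $\beta_1$-discounted average of the historical smoothed gradients $\tfrac{1}{\min(N,\tau)}\sum_n \nabla_i F_\mu(\vtheta_{\tau-n})$) and a fluctuation, and absorb cross terms via $ab \le \half(a^2+b^2)$. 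The moment bounds of Lemma~\ref{lem:m_var} and Lemma~\ref{lem:v_avg} then convert $\vm_{t,i}^2$ and $\vv_{t,i}$ into three recognizable pieces: a fluctuation term whose expectation is the estimator variance, a $C_N$-weighted drift term scaling as $\eta^2 L_1^2 d$, and a gradient-energy term $|\nabla_i F_\mu|^2$.

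Third, I would take expectations and invoke Thm.~\ref{thm:bias-variance-decomp}: the expected squared fluctuation $\E[|\hat\nabla_i f(\vtheta_{\tau-1};\xi_\tau) - \tfrac{1}{\min(N,\tau)}\sum_n \nabla_i F_\mu(\vtheta_{\tau-n})|^2]$, summed over coordinates, is exactly the estimator variance bounded by $V$ plus the $\gO(\eta^2)$ bias pieces. This is precisely where $V = \tfrac{\sigma_\xi^2 + \sigma_\mu^2}{NK\mu^2}$ enters the constant $\Xi$, making the bound variance-aware. After rearranging the telescoped inequality to isolate $\tfrac1T\sum_t \E[\|\nabla F_\mu(\vtheta_{t-1})\|]$ (passing from squared norm to norm via Jensen so the statement is in $\E[\|\cdot\|]$), I would convert $\nabla F_\mu$ to $\nabla F$ using the standard smoothing bias $\|\nabla F(\vtheta) - \nabla F_\mu(\vtheta)\| \le \mu L_1 \sqrt{d}$, producing the $\mu L_1\sqrt{d}$ term. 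Finally, substituting the prescribed schedules $\eta \sim \gO(\epsilon^2)$, $1-\beta_2 \sim \gO(\epsilon^2)$, $T \sim \gO(\epsilon^{-4})$ collapses each group of constants into $B_1$, $B_2$, and $\Xi$ and yields the claimed $\gO(\epsilon)$ rate up to the irreducible $\mu L_1\sqrt{d}$ and the bias $B_2$.

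I expect the principal obstacle to be the coupling afflicting all Adam-style analyses: the statistical dependence between the first-moment numerator $\vm_t$ and the second-moment denominator $\sqrt{\vv_t+\zeta}$, which blocks a naive unbiasedness argument on the descent term. Handling it requires carefully lower-bounding $1/\sqrt{\vv_{t,i}+\zeta}$ (using the uniform bound $\vm_{s,i}^2/(\vv_{s,i}+\zeta) \le 1/(1-\beta_2)$ already exploited in Lemma~\ref{lem:v_avg}) together with a summation-by-parts over the geometric weights. A secondary, bookkeeping-heavy difficulty is disentangling the query-reuse bias: because $\E[\vm_t]$ averages gradients at the $N-1$ stale iterates $\vtheta_{t-n}$, the drift $\|\vtheta_{t-n} - \vtheta_{t-n'}\|^2 \le \tfrac{\eta^2 d}{1-\beta_2}|n-n'|^2$ must be propagated through the $C_N$ factor of Lemma~\ref{lem:m_var} and shown to be $\gO(\epsilon^2)$-small under the stated schedules, so that it contributes only to the lower-order constant $B_2$ rather than corrupting the leading $\gO(\epsilon)$ term.
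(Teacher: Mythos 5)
Your plan assembles the right ingredients (smoothness of $F_\mu$ via Lemma~\ref{lem:Fmu_smoothness}, the moment bounds of Lemmas~\ref{lem:m_var} and \ref{lem:v_avg}, the variance/bias input from Thm.~\ref{thm:bias-variance-decomp}, the $\mu L_1\sqrt{d}$ smoothing conversion), but it is missing the central structural device of the paper's proof, and the step where you claim to ``isolate $\tfrac1T\sum_t \E[\|\nabla F_\mu(\vtheta_{t-1})\|]$ \dots via Jensen'' would fail as stated. Your telescoped descent inequality can only control the \emph{weighted} quantity $\tfrac1T\sum_t\sum_i |\nabla_i F_\mu(\vtheta_t)|^2/\sqrt{\beta_2\vv_{t,i}+\zeta}$ (the paper's Term~II, bounded by $\epsilon^2+G$ under the schedules). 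Jensen converts $\E[\|\cdot\|]$ into $\sqrt{\E[\|\cdot\|^2]}$, but stripping the adaptive denominator requires an \emph{upper} bound on $\vv_t$; the only unconditional one available, $\vv_{t,i}\lesssim C^2 d^2/\mu^2$ from \eqref{eq:abs_m}, would replace the variance-dependent factor by a crude $d/\mu$-dependent constant and destroy exactly the variance-aware form of the theorem, in which $V$ enters through $\Xi$. The paper instead \emph{opens} with the H\"older/Cauchy--Schwarz split $\qty(\tfrac1T\sum_t \E[\|\nabla F_\mu(\vtheta_t)\|])^2 \leq \text{Term I}\cdot\text{Term II}$ with $\text{Term I}=\tfrac1T\sum_t\E[\sqrt{\beta_2\|\vv_t\|+\zeta}]$, and the crux is that Term~I is \emph{not} a constant: by Lemma~\ref{lem:v_avg} it is bounded by $\sqrt{\zeta}+\Xi+\sqrt{2/(\beta_1(1-\beta_2))}\,(1+\beta_1)\,\tfrac1T\sum_t\E[\|\nabla F_\mu(\vtheta_{t-1})\|]$, i.e., it re-introduces the target quantity, so the proof closes by solving a self-bounding \emph{quadratic inequality} whose positive root yields precisely the claimed $\sqrt{2/(\beta_1(1-\beta_2))}(1+\beta_1)\epsilon^2+(\sqrt[4]{\zeta}+\sqrt{\Xi})\epsilon+B_2$. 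Without this split-and-solve structure the specific constants $\Xi$ and $B_2$ do not arise.

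A second genuine gap concerns the momentum. You apply the descent lemma directly at $\vtheta_t$ with $\vm_t$ in the inner product, but with $\beta_1>0$ the term $\aqty{\nabla F_\mu(\vtheta_{t-1}),\,\vm_t/\sqrt{\vv_t+\zeta}}$ has no usable sign, since $\vm_t$ aggregates stale gradients; your proposed fix (past-measurable denominator surrogate plus a mean/fluctuation split of $\vm_t$) is the standard remedy for the \emph{no-momentum} coupling, not for heavy-ball momentum. The paper runs the descent lemma along the auxiliary sequence $\vx_t=(\vtheta_t-\kappa\vtheta_{t-1})/(1-\kappa)$ with $\kappa=\beta_1/\sqrt{\beta_2}$, which converts the update into a fresh-gradient step plus correction terms $\circled{1}$--$\circled{5}$, each controlled by $\vm_{t,i}^2/(\vv_{t,i}+\zeta)\le 1/(1-\beta_2)$, a telescoping sum of $1/\sqrt{\beta_2\vv_{t,i}+\zeta}-1/\sqrt{\vv_{t+1,i}+\zeta}$, and the logarithmic bound $\sum_t(1-\beta_2)\vm_{t,i}^2/(\vv_{t,i}+\zeta)\le\ln(\vv_{T,i}/\vv_{0,i})-T\ln\beta_2$; this is also where the bounded-gradient assumption $|\nabla_i F_\mu|\le G_\mu$ is consumed (producing $G$, hence $B_2$), and it explains the otherwise mysterious $(1-\beta_1/\sqrt{\beta_2})$ factors in the prescribed $\eta$ and $T$, which your sketch cannot generate. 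Your remaining bookkeeping --- telescoping the potential, the drift bound $\|\vtheta_{t-n}-\vtheta_{t-n'}\|^2\le \eta^2 d\,|n-n'|^2/(1-\beta_2)$ propagated through $C_N$, and the final $\nabla F_\mu\to\nabla F$ conversion --- does match the paper.
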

\begin{proof}
We begin by introducing the following transformation:
\begin{equation}
\begin{aligned}\label{eq:1}
    \qty(\frac1T \sum_{t=1}^T \E\qty[\|\nabla F_\mu(\vtheta_t)\|])^2 =& \qty(\frac1T \sum_{t=1}^T \E\qty[\sqrt[4]{\beta_2 \|\vv_t\| + \zeta} \cdot \frac{\|\nabla F_\mu(\vtheta_t)\|}{\sqrt[4]{\beta_2 \|\vv_t\| + \zeta}}])^2 \\
    \overset{(a)}{\leq}& \frac{1}{T^2} \qty(\sum_{t=1}^T \E\qty[\sqrt{\beta_2 \|\vv_t\| + \zeta}]^\half \cdot \E\qty[\frac{\|\nabla F_\mu(\vtheta_t)\|^2}{\sqrt{\beta_2 \|\vv_t\| + \zeta}}]^\half)^2 \\
    \overset{(b)}{\leq}& \underbrace{\frac{1}{T} \sum_{t=1}^T \E\qty[\sqrt{\beta_2 \|\vv_t\| + \zeta}]}_{\text{Term I}} \cdot \underbrace{\frac{1}{T} \sum_{t=1}^T \E\qty[\frac{\|\nabla F_\mu(\vtheta_t)\|^2}{\sqrt{\beta_2 \|\vv_t\| + \zeta}}]}_{\text{Term II}} \ ,
\end{aligned}
\end{equation}
where $(a)$ comes from the Hölder's inequality $\E[|\va \vb|]\leq\qty(\E\qty[|\va|^2])^{\frac{1}{2}}\qty(\E\qty[|\vb|^2])^{\frac{1}{2}}$, and $(b)$ results from the Cauchy-Schwarz inequality.

\paragraph{Calculation of Term I.}
Based on Lemma \ref{lem:v_avg}, $\vv_{0, i} \leq \norm{\vv_0}$, and $\beta_2 \leq 1$, we have:
\begin{equation}
\begin{aligned}
    \vv_{t, i} \leq& \beta_2 \norm{\vv_0} + \frac{2 \qty(1 + \beta_1) \qty(1 - \beta_1)^2 \qty(1 - \beta_2)}{\beta_2 - \beta_1^2} \sum_{\tau = 1}^t \qty(\beta_2^{t + 1 - \tau} - \beta_1^{2 \qty(t + 1 - \tau)}) \qty|\hat{\nabla}_i f(\vtheta_{\tau - 1}; \xi_\tau) - \frac{1}{\min \qty(N, \tau)} \sum_{n = 1}^{\min \qty(N, \tau)} \nabla_i F_\mu(\vtheta_{\tau - n})|^2 \\
    &\quad + \frac{2 \beta_1 (1 + \beta_1)^2}{\qty(1 - \beta_1)^2 \qty(1 - \beta_2)} L_1^2 \eta^2 d C_N + \frac{2 \qty(1 + \beta_1)^2 \qty(1 - \beta_2)}{\beta_1} \sum_{\tau = 1}^t \beta_2^{t - \tau}  |\nabla_i F_\mu(\vtheta_{\tau - 1})|^2 \ .
\end{aligned}
\end{equation}

Therefore, the square root of the summed second moment can be bounded as follows:
\begin{equation}\label{eq:v_sum}
\begin{aligned}
    \sqrt{\sum_i^d \vv_{t, i}} \leq& \sqrt{d \beta_2 \norm{\vv_{0}} + \frac{2 \beta_1 (1 + \beta_1)^2}{\qty(1 - \beta_1)^2 \qty(1 - \beta_2)} L_1^2 \eta^2 d^2 C_N} + \sqrt{\frac{2 \qty(1 - \beta_2)}{\beta_1}} \qty(1 + \beta_1) \sum_{\tau = 1}^t \beta_2^{\frac{t - \tau}{2}} \norm{\nabla F_\mu (\theta_{\tau - 1})} \\
    &\quad + \sqrt{\frac{2 \qty(1 + \beta_1) \qty(1 - \beta_2)}{\beta_2 - \beta_1^2}} \qty(1 - \beta_1) \sum_{\tau = 1}^t \sqrt{\beta_2^{t + 1 - \tau} - \beta_1^{2 \qty(t + 1 - \tau)}} \norm{\hat{\nabla} f(\vtheta_{\tau - 1}; \xi_\tau) - \frac{1}{\min \qty(N, \tau)} \sum_{n = 1}^{\min \qty(N, \tau)} \nabla F_\mu(\vtheta_{\tau - n})} \ ,
\end{aligned}
\end{equation}
where we utilize the inequality $\sqrt{\sum_i a_i} \leq \sum_i \sqrt{a_i}$.

Subsequently, the expectation of \eqref{eq:v_sum} can be bounded as follows:
\begin{equation}
\begin{aligned}
    \E\qty[\sqrt{\sum_i^d \vv_{t, i}}] \leq& \sqrt{d \beta_2 \norm{\vv_{0}} + \frac{2 \beta_1 (1 + \beta_1)^2}{\qty(1 - \beta_1)^2 \qty(1 - \beta_2)} L_1^2 \eta^2 d^2 C_N} + \sqrt{\frac{2 \qty(1 - \beta_2)}{\beta_1}} \qty(1 + \beta_1) \sum_{\tau = 1}^t \beta_2^{\frac{t - \tau}{2}} \E\qty[\norm{\nabla_i F_\mu (\theta_{\tau - 1})}] \\
    &\quad + \sqrt{\frac{2 \qty(1 + \beta_1) \qty(1 - \beta_2)}{\beta_2 - \beta_1^2}} \qty(1 - \beta_1) \sum_{\tau = 1}^t \sqrt{\beta_2^{t + 1 - \tau} - \beta_1^{2 \qty(t + 1 - \tau)}} \sqrt{\Var\qty(\hat{\nabla} F(\vtheta_{\tau - 1}))} \ ,
\end{aligned}
\end{equation}
where we apply the following inequality by Jensen's inequality:
\begin{equation}
\begin{aligned}
    \E\qty[\norm{\hat{\nabla} f(\vtheta_{\tau - 1}; \xi_\tau) - \frac{1}{\min \qty(N, \tau)} \sum_{n = 1}^{\min \qty(N, \tau)} \nabla F_\mu(\vtheta_{\tau - n})}] \leq& \sqrt{\E\qty[\norm{\hat{\nabla} f(\vtheta_{\tau - 1}; \xi_\tau) - \frac{1}{\min \qty(N, \tau)} \sum_{n = 1}^{\min \qty(N, \tau)} \nabla F_\mu(\vtheta_{\tau - n})}^2]} \\
    =& \sqrt{\Var\qty(\hat{\nabla} F(\vtheta_{\tau - 1}))} \ ,
\end{aligned}
\end{equation}
where the definition of $\Var\qty(\hat{\nabla} F(\vtheta_{\tau - 1}))$ comes from Thm. \ref{thm:opt-baseline}.


Considering the average over all iterations $t$, we have:
\begin{equation}\label{eq:termI_0}
\begin{aligned}
    &\frac{1}{T} \sum_{t=1}^T \E\qty[\sqrt{\sum_i^d \vv_{t, i}}] \leq \sqrt{d \beta_2 \norm{\vv_{0}} + \frac{2 \beta_1 (1 + \beta_1)^2}{\qty(1 - \beta_1)^2 \qty(1 - \beta_2)} L_1^2 \eta^2 d^2 C_N} \\
    &\quad + \sqrt{\frac{2 \qty(1 + \beta_1) \qty(1 - \beta_2)}{\beta_2 - \beta_1^2}} \qty(1 - \beta_1) \frac{1}{T} \sum_{t=1}^T \sum_{\tau = 1}^t \sqrt{\beta_2^{t + 1 - \tau} - \beta_1^{2 \qty(t + 1 - \tau)}} \sqrt{\Var\qty(\hat{\nabla} F(\vtheta_{\tau - 1}))} \\
    &\quad \quad + \sqrt{\frac{2 \qty(1 - \beta_2)}{\beta_1}} \qty(1 + \beta_1) \frac{1}{T} \sum_{t=1}^T \sum_{\tau = 1}^t \beta_2^{\frac{t - \tau}{2}} \E\qty[\norm{\nabla F_\mu (\theta_{\tau - 1})}] \ .
\end{aligned}
\end{equation}

The second and third terms in \eqref{eq:termI_0} contain double geometric series summations over $\tau$ and $t$. For the second term in \eqref{eq:termI_0}, we have:
\begin{equation}
\begin{aligned}
    &\frac{1}{T} \sum_{t=1}^T \sum_{\tau = 1}^t \sqrt{\qty(\beta_2^{t + 1 - \tau} - \beta_1^{2 \qty(t + 1 - \tau)})} \sqrt{\Var\qty(\hat{\nabla} F(\vtheta_{\tau - 1}))} \\
    \overset{(a)}{\leq}& \frac{1}{T} \sum_{t=1}^T \sum_{\tau = 1}^t \qty(\beta_2^{\frac{t + 1 - \tau}{2}} - \beta_1^{t + 1 - \tau}) \sqrt{\Var\qty(\hat{\nabla} F(\vtheta_{\tau - 1}))} \\
    \overset{(b)}{\leq}& \frac{1}{T} \sum_{t=1}^T \sum_{\tau = 1}^t \beta_2^{\frac{t + 1 - \tau}{2}} \sqrt{\Var\qty(\hat{\nabla} F(\vtheta_{\tau - 1}))} \\
    \overset{(c)}{=}& \frac{1}{T} \sum_{\tau = 1}^T \sum_{t=\tau}^T \beta_2^{\frac{t + 1 - \tau}{2}} \sqrt{\Var\qty(\hat{\nabla} F(\vtheta_{\tau - 1}))} \\
    =& \frac{1}{T} \sum_{t = 1}^T \frac{\sqrt{\beta_2} - \sqrt{\beta_2^{2 + T - t}}}{1 - \sqrt{\beta_2}} \sqrt{\Var\qty(\hat{\nabla} F(\vtheta_{t - 1}))} \\
    \overset{(d)}{\leq}& \frac{\sqrt{\beta_2}}{1 - \beta_2} \frac{1}{T} \sum_{t = 1}^T \sqrt{\Var\qty(\hat{\nabla} F(\vtheta_{t - 1}))} \\
    \overset{(e)}{\leq}& \frac{\sqrt{\beta_2}}{1 - \beta_2} \sqrt{V + \frac{L_0^2 \eta^2 d \qty(N^2 - 1)}{3 \qty(1 - \beta_2) N^2 K \mu^2}} \\
    \overset{(f)}{\leq}& \frac{\sqrt{\beta_2}}{1 - \beta_2} \qty(\sqrt{V} + \sqrt{\frac{L_0^2 \eta^2 d \qty(N^2 - 1)}{3 \qty(1 - \beta_2) N^2 K \mu^2}}) \ ,
\end{aligned}
\end{equation}
where $(a)$, $(f)$ results from $\sqrt{\sum_i a_i} \leq \sum_i \sqrt{a_i}$, $(b)$, $(d)$ comes from $0 \leq \beta_1^2 \leq \beta_2 \leq 1$, and $(e)$ is due to \eqref{eq:f_var_4} in Appx. \ref{appx:proof-of-bias-variance-decomp} and $V \triangleq \frac{\sigma_\xi^2 + \sigma_\mu^2}{N K \mu^2}$. In step $(c)$ we exchange the order of summation over $t$ and $\tau$.

For the third term in \eqref{eq:termI_0}, we have:
\begin{equation}
\begin{aligned}
    \frac{1}{T} \sum_{t = 1}^T \sum_{\tau = 1}^t \beta_2^{\frac{t - \tau}{2}} \E\qty[\|\nabla F_\mu(\vtheta_{\tau - 1})\|] \overset{(a)}{=}& \frac1T \sum_{\tau=1}^T \sum_{t = \tau}^T \beta_2^{\frac{t - \tau}{2}} \E\qty[\|\nabla F_\mu(\vtheta_{\tau - 1})\|] = \frac1T \sum_{t=1}^T \frac{1 - \beta_2^{\frac{1+T-t}{2}}}{1 - \sqrt{\beta_2}} \E\qty[\|\nabla F_\mu(\vtheta_{t - 1})\|] \\ \overset{(b)}{\leq}& \frac{1}{1 - \beta_2}\frac1T \sum_{t=1}^T \E\qty[\|\nabla F_\mu(\vtheta_{t - 1})\|] \ ,
\end{aligned}
\end{equation}
where $(b)$ is due to the fact that $\beta_2 < 1$. In step $(a)$, we change the summation order over $t$ and $\tau$.


Therefore, \eqref{eq:termI_0} can be rewritten as:
\begin{equation}\label{eq:termI_1}
\begin{aligned}
    \frac{1}{T} \sum_{t=1}^T \E\qty[\sqrt{\sum_i^d \vv_{t, i}}] \leq& \sqrt{d \beta_2 \norm{\vv_{0}} + \frac{2 \beta_1 (1 + \beta_1)^2}{\qty(1 - \beta_1)^2 \qty(1 - \beta_2)} L_1^2 \eta^2 d^2 C_N} \\
    &\quad + \sqrt{\frac{2 \qty(1 + \beta_1) \qty(1 - \beta_2)}{\beta_2 - \beta_1^2}} \qty(1 - \beta_1) \qty(\sqrt{V} + \sqrt{\frac{L_0^2 \eta^2 d \qty(N^2 - 1)}{3 \qty(1 - \beta_2) N^2 K \mu^2}}) \\
    &\quad \quad + \sqrt{\frac{2}{\beta_1 \qty(1 - \beta_2)}} \qty(1 + \beta_1) \frac{1}{T} \sum_{t=1}^T \E\qty[\|\nabla F_\mu(\vtheta_{t - 1})\|] \\
    =& \Xi + \sqrt{\frac{2}{\beta_1 \qty(1 - \beta_2)}} \qty(1 + \beta_1) \frac{1}{T} \sum_{t=1}^T \E\qty[\|\nabla F_\mu(\vtheta_{t - 1})\|] \ ,
\end{aligned}
\end{equation}
where we let $\Xi \triangleq B_1 + \sqrt{\frac{2 \qty(1 + \beta_1) \qty(1 - \beta_1)^2 \beta_2}{\qty(\beta_2 - \beta_1^2) \qty(1 - \beta_2)} V}$ and $B_1 \triangleq \sqrt{d \beta_2 \norm{\vv_{0}} + \frac{2 \beta_1 (1 + \beta_1)^2}{\qty(1 - \beta_1)^2 \qty(1 - \beta_2)} L_1^2 \eta^2 d^2 C_N} + \sqrt{\frac{2 \qty(1 + \beta_1) \qty(1 - \beta_1)^2 \beta_2 L_0^2 \eta^2 d \qty(N^2 - 1)}{3 \qty(\beta_2 - \beta_1^2) \qty(1 - \beta_2)^2 N^2 K \mu^2}}$.

Therefore, we can bound Term I as follows:
\begin{equation}
\begin{aligned}
    \frac1T \sum_{t=1}^T \E\qty[\sqrt{\beta_2 \|\vv_t\| + \zeta}] \overset{(a)}{\leq}& \frac1T \sum_{t=1}^T \E\qty[\sqrt{\beta_2 \sum_i^d \vv_{t, i} + \zeta}] \overset{(b)}{\leq} \frac1T \sum_{t=1}^T \qty(\sqrt{\zeta} + \sqrt{\beta_2} \E\qty[\sqrt{\sum_i^d \vv_{t, i}}]) \\
    \overset{(c)}{\leq}& \sqrt{\zeta} + \Xi + \sqrt{\frac{2}{\beta_1 \qty(1 - \beta_2)}} \qty(1 + \beta_1) \frac{1}{T} \sum_{t=1}^T \E\qty[\|\nabla F_\mu(\vtheta_{t - 1})\|] \ ,
\end{aligned}
\end{equation}
where $(a)$ and $(b)$ are obtained by $\sqrt{\sum_i a_i} \leq \sum_i \sqrt{a_i}$, and $(c)$ is due to \eqref{eq:termI_1} and $\beta_2 \leq 1$.

\paragraph{Calculation of Term II.}
Following a similar approach as in \citep{radazo}, we first introduce the following auxiliary variable:
\begin{equation}
    \vx_t \triangleq \frac{\vtheta_t - \beta_1 / \sqrt{\beta_2} \vtheta_{t - 1}}{1 - \beta_1 / \sqrt{\beta_2}} = \frac{\vtheta_t - \kappa \vtheta_{t - 1}}{1 - \kappa} \ ,
\end{equation}
where $\kappa \triangleq \beta_1/\sqrt{\beta_2}$.

Based on the definition of $\vx_t$, the following relationships hold:
\begin{equation}\label{eq:x_theta_1}
    \vx_{t + 1} - \vx_t = \frac{\vtheta_{t + 1} - \vtheta_t - \kappa \qty(\vtheta_t - \vtheta_{t - 1})}{1 - \kappa} = \frac{1}{1 - \kappa} \qty(-\frac{\eta \vm_{t + 1}}{\sqrt{\vv_{t + 1} + \zeta}} + \kappa \frac{\eta \vm_t}{\sqrt{\vv_t + \zeta}}) \ ,
\end{equation}
and,
\begin{equation}\label{eq:x_theta_2}
    \vx_t - \vtheta_t = \frac{\kappa}{1 - \kappa} \qty(\vtheta_t - \vtheta_{t - 1}) = -\frac{\kappa}{1 - \kappa} \frac{\eta \vm_t}{\sqrt{\vv_t + \zeta}} \ .
\end{equation}

Starting from Lemma \ref{lem:Fmu_smoothness}:
\begin{equation}\label{eq:Lipschitz_2}
    F_\mu(\vx_{t + 1}) - F_\mu(\vx_t) \leq \aqty{\nabla F_\mu(\vx_t), \vx_{t + 1} - \vx_t}+ \frac{L_1}{2} \sqrt{d} \|\vx_{t + 1} - \vx_t\|^2 \ .
\end{equation}

Firstly, we focus on iteration $t$ and calculate the conditional expectation $\E\qty[\cdots | \gF_t]$ of \eqref{eq:Lipschitz_2}, where $\gF_t$ denotes all stochatsics up to $t$. After that:
\begin{equation}\label{eq:termII}
\begin{aligned}
    &\E\qty[F_\mu(\vx_{t + 1}) - F_\mu(\vx_t) | \gF_t] \leq \E\qty[\aqty{\nabla F_\mu(\vx_t), \vx_{t + 1} - \vx_t}| \gF_t] + \frac{L_1}{2} \sqrt{d} \E\qty[\|\vx_{t + 1} - \vx_t\|^2 | \gF_t] \\[3pt]
    =& \E\qty[\aqty{\nabla F_\mu(\vtheta_t), \vx_{t + 1} - \vx_t}| \gF_t] + \underbrace{\E\qty[\aqty{\nabla F_\mu(\vx_t) - \nabla F_\mu(\vtheta_t), \vx_{t + 1} - \vx_t}| \gF_t]}_{\circled{4}} + \underbrace{\frac{L_1}{2} \sqrt{d} \E\qty[\|\vx_{t + 1} - \vx_t\|^2 | \gF_t]}_{\circled{5}} \ .
\end{aligned}
\end{equation}

With the help of \eqref{eq:x_theta_1}, the first term of \eqref{eq:termII} can be separated as below:
\begin{equation}\label{eq:termII_1}
\begin{aligned}
    & \E\qty[\aqty{\nabla F_\mu(\vtheta_t), \vx_{t + 1} - \vx_t}| \gF_t] = \E\qty[\aqty{\nabla F_\mu(\vtheta_t), \frac{1}{1 - \kappa} \qty(-\frac{\eta \vm_{t + 1}}{\sqrt{\vv_{t + 1} + \zeta}} + \kappa \frac{\eta \vm_t}{\sqrt{\vv_t + \zeta}})} \Bigg| \gF_t] \\
    =& \frac{1}{1 - \kappa} \E\qty[\aqty{\nabla F_\mu(\vtheta_t), -\frac{\eta \vm_{t + 1}}{\sqrt{\beta_2 \vv_t + \zeta}} + \beta_1 \frac{\eta \vm_t}{\sqrt{\beta_2 \vv_t +\zeta}} -\frac{\eta \vm_{t + 1}}{\sqrt{\vv_{t + 1} + \zeta}} + \frac{\eta \vm_{t+1}}{\sqrt{\beta_2 \vv_t + \zeta}}} \Bigg| \gF_t] \\
    &\quad + \frac{1}{1 - \kappa} \E\qty[\aqty{\nabla F_\mu(\vtheta_t), \beta_1 \qty(\frac{\eta \vm_t}{\sqrt{\beta_2 \vv_t + \beta_2 \zeta}} - \frac{\eta \vm_t}{\sqrt{\beta_2 \vv_t + \zeta}})} \Bigg| \gF_t] \\
    =& \underbrace{\frac{1 - \beta_1}{1 - \kappa} \eta \E\qty[\aqty{\nabla F_\mu(\vtheta_t), -\frac{\hat{\nabla} f(\vtheta_t; \xi_{t+1})}{\sqrt{\beta_2 \vv_t + \zeta}}} \Bigg| \gF_t]}_{\circled{1}} \\
    &\quad + \underbrace{\frac{1}{1 - \kappa} \eta \E\qty[\aqty{\nabla F_\mu(\vtheta_t), \vm_{t + 1} \qty(\frac{1}{\sqrt{\beta_2 \vv_t + \zeta}} - \frac{1}{\sqrt{\vv_{t + 1} + \zeta}})} \Bigg| \gF_t]}_{\circled{2}} \\
    &\quad \quad + \underbrace{\frac{\beta_1}{1 - \kappa} \eta \E\qty[\aqty{\nabla F_\mu(\vtheta_t), \vm_t \qty(\frac{1}{\sqrt{\beta_2 \vv_t + \beta_2 \zeta}} - \frac{1}{\sqrt{\beta_2 \vv_t + \zeta}})} \Bigg| \gF_t]}_{\circled{3}} \ .
\end{aligned}
\end{equation}

Thereafter, we would bound term $\circled{1}$, $\circled{2}$, $\circled{3}$, $\circled{4}$ and $\circled{5}$ one by one.

For the term $\circled{1}$:
\begin{equation}\label{eq:term1}
\begin{aligned}
    \circled{1} =& \frac{1 - \beta_1}{1 - \kappa} \eta \E\qty[\aqty{\nabla F_\mu(\vtheta_t), -\frac{\hat{\nabla} f(\vtheta_t; \xi_{t+1})}{\sqrt{\beta_2 \vv_t + \zeta}}} \Bigg| \gF_t] \\
    =& -\frac{1 - \beta_1}{1 - \kappa} \eta \sum_i^d \nabla_i F_\mu(\vtheta_t) \frac{\E\qty[\hat{\nabla}_i f(\vtheta_t; \xi_{t+1}) \Big| \gF_t]}{\sqrt{\beta_2 \vv_{t, i} + \zeta}} \\
    =& -\frac{1 - \beta_1}{1 - \kappa} \eta \sum_i^d \qty(\frac{\qty|\nabla_i F_\mu(\vtheta_t)|^2}{\sqrt{\beta_2 \vv_{t, i} + \zeta}} + \frac{\nabla_i F_\mu(\vtheta_t)}{\sqrt{\beta_2 \vv_{t, i} + \zeta}} \E\qty[\hat{\nabla}_i f(\vtheta_t; \xi_{t+1}) - \nabla_i F_\mu(\vtheta_t) \Big| \gF_t]) \ .
\end{aligned}
\end{equation}

If we assume that the gradient of $F_\mu$ is bounded, i.e. $\qty|\nabla_i F_\mu(\vtheta_t)| \leq G_\mu$, we can simplify \ref{eq:term1} as follows:
\begin{equation}
    \circled{1} \leq -\frac{1 - \beta_1}{1 - \kappa} \eta \sum_i^d \frac{\qty|\nabla_i F_\mu(\vtheta_t)|^2}{\sqrt{\beta_2 \vv_{t, i} + \zeta}} + \frac{\qty(1 - \beta_1) G_\mu}{\qty(1 - \kappa) \sqrt{\zeta}} \eta \sum_i^d \E\qty[\qty|\hat{\nabla}_i f(\vtheta_t; \xi_{t+1}) - \nabla_i F_\mu(\vtheta_t)| \big| \gF_t] \ ,
\end{equation}
where we utilize $\sqrt{\beta_2 \vv_{t, i} + \zeta} \leq \sqrt{\zeta}$ and $a \leq |a|$ for simplicity.

Now turn to the calculation of term $\circled{2}$. Note that:
\begin{equation}\label{eq:term2_0}
\begin{aligned}
    &\frac{1}{\sqrt{\beta_2 \vv_{t, i} + \zeta}} - \frac{1}{\sqrt{\vv_{t + 1, i} + \zeta}} = \frac{\sqrt{\vv_{t + 1, i} + \zeta} - \sqrt{\beta_2 \vv_{t, i} + \zeta}}{\sqrt{\beta_2 \vv_{t, i} + \zeta} \sqrt{\vv_{t + 1, i} + \zeta}} \\
    \overset{(a)}{=}& \frac{\vv_{t + 1, i} - \beta_2 \vv_{t, i}}{\sqrt{\beta_2 \vv_{t, i} + \zeta} \sqrt{\vv_{t + 1, i} + \zeta} \qty(\sqrt{\vv_{t + 1, i} + \zeta} + \sqrt{\beta_2 \vv_{t, i} + \zeta})} \\
    =& \frac{\qty(1 - \beta_2) \vm_{t, i}^2}{\sqrt{\beta_2 \vv_{t, i} + \zeta} \sqrt{\vv_{t + 1, i} + \zeta} \qty(\sqrt{\vv_{t + 1, i} + \zeta} + \sqrt{\beta_2 \vv_{t, i} + \zeta})} \ ,
\end{aligned}
\end{equation}
where in step $(a)$, we multiply $\qty(\sqrt{\vv_{t + 1, i} + \zeta} + \sqrt{\beta_2 \vv_{t, i} + \zeta})$ in both the numerator and denominator.

Therefore, we can rewrite the term $\circled{2}$ as:
\begin{equation}\label{eq:term2_1}
\begin{aligned}
    \circled{2} =& \frac{1}{1 - \kappa} \eta \sum_i^d \E\qty[\aqty{\nabla_i F_\mu(\vtheta_t), \vm_{t + 1, i} \qty(\frac{1}{\sqrt{\beta_2 \vv_{t, i} + \zeta}} - \frac{1}{\sqrt{\vv_{t + 1, i} + \zeta}})} \Bigg| \gF_t] \\
    \overset{(a)}{=}& \frac{1}{1 - \kappa} \eta \sum_i^d \E\qty[\aqty{\nabla_i F_\mu(\vtheta_t), \vm_{t + 1, i} \frac{\qty(1 - \beta_2) \vm_{t, i}^2}{\sqrt{\beta_2 \vv_{t, i} + \zeta} \sqrt{\vv_{t + 1, i} + \zeta} \qty(\sqrt{\vv_{t + 1, i} + \zeta} + \sqrt{\beta_2 \vv_{t, i} + \zeta})}} \Bigg| \gF_t] \\
    \overset{(b)}{\leq}& \frac{1}{1 - \kappa} \eta \sum_i^d \E\qty[\qty| \nabla_i F_\mu(\vtheta_t)| \frac{\qty(1 - \beta_2) \vm_{t, i}^2 \qty| \vm_{t + 1, i}|}{\sqrt{\beta_2 \vv_{t, i} + \zeta} \sqrt{\vv_{t + 1, i} + \zeta} \qty(\sqrt{\vv_{t + 1, i} + \zeta} + \sqrt{\beta_2 \vv_{t, i} + \zeta})} \Bigg| \gF_t] \\
    \overset{(c)}{\leq}& \frac{1}{1 - \kappa} \eta \sum_i^d \E\qty[\qty| \nabla_i F_\mu(\vtheta_t)| \frac{\sqrt{1 - \beta_2} \vm_{t, i}^2}{\sqrt{\beta_2 \vv_{t, i} + \zeta} \qty(\sqrt{\vv_{t + 1, i} + \zeta} + \sqrt{\beta_2 \vv_{t, i} + \zeta})} \Bigg| \gF_t] \\
    =& \frac{1}{1 - \kappa} \eta \sum_i^d \frac{1}{\sqrt{\beta_2 \vv_{t, i} + \zeta}} \qty| \nabla_i F_\mu(\vtheta_t)| \E\qty[ \frac{\sqrt{1 - \beta_2} \vm_{t, i}^2}{\sqrt{\vv_{t + 1, i} + \zeta} + \sqrt{\beta_2 \vv_{t, i} + \zeta}} \Bigg| \gF_t] \\
    \overset{(d)}{\leq}& \frac{1}{1 - \kappa} \eta \sum_i^d \frac{1}{\sqrt{\beta_2 \vv_{t, i} + \zeta}} \qty(\frac{\qty| \nabla_i F_\mu(\vtheta_t)|^2}{2 \gamma_0}+ \frac{\gamma_0}{2} \qty(\E\qty[ \frac{\sqrt{1 - \beta_2} \vm_{t, i}^2}{\sqrt{\vv_{t + 1, i} + \zeta} + \sqrt{\beta_2 \vv_{t, i} + \zeta}} \Bigg| \gF_t])^2) \\
    \overset{(e)}{\leq}& \frac{1}{1 - \kappa} \eta \sum_i^d \qty(\frac{1}{2 \gamma_0} \frac{\qty| \nabla_i F_\mu(\vtheta_t)|^2}{\sqrt{\beta_2 \vv_{t, i} + \zeta}} + \frac{\gamma_0 \E\qty[\vm_{t, i}^2 | \gF_t]}{2 \sqrt{\beta_2 \vv_{t, i} + \zeta}}  \E\qty[ \frac{\sqrt{1 - \beta_2} \vm_{t, i}^2}{ \qty(\sqrt{\vv_{t + 1, i} + \zeta} + \sqrt{\beta_2 \vv_{t, i} + \zeta})^2} \Bigg| \gF_t]) \ ,
\end{aligned}
\end{equation}
where $(a)$ comes from \eqref{eq:term2_0}, $(b)$ is due to Cauchy-Schwarz inequality, $(c)$ is due to the fact that $\frac{\qty|\vm_{s, i}|}{\sqrt{\vv_{s, i} + \zeta}} \leq \frac{1}{\sqrt{1 - \beta_2}}$, $(d)$ is obtained by $ab \leq \frac{1}{\gamma_0} a^2 + \frac{\gamma_0}{2} b^2$ for any positive number $\gamma_0$, and $(e)$ results from the Hölder's inequality $\E[|\va \vb|]\leq\qty(\E\qty[|\va|^2])^{\frac{1}{2}}\qty(\E\qty[|\vb|^2])^{\frac{1}{2}}$.

Taking the Cauchy-Schwarz inequality and Assump. \ref{assump:1} into account, the term $\E\qty[\vm_{t, i}^2 | \gF_t]$ can be bounded by:
\begin{equation}\label{eq:abs_nabla_f}
    \qty|\hat{\nabla}_i f(\vtheta; \xi)| \leq \frac{d}{N K} \sum_{n, k}^{N, K} \qty| \frac{f\qty(\vtheta + \mu \vu_k; \xi) - b_t}{\mu} | \qty| \vu_{k, i} | \leq \frac{2 C d}{\mu} \ ,
\end{equation}
\begin{equation}\label{eq:abs_m}
    \qty| \vm_{t + 1, i}| = \qty| (1 - \beta_1) \sum_{\tau = 1}^t \beta_1^{t - \tau} \hat{\nabla}_i f(\vtheta_{\tau - 1}; \xi_\tau) | \leq (1 - \beta_1) \sum_{\tau = 1}^t \beta_1^{t - \tau} \qty| \hat{\nabla}_i f(\vtheta_{\tau - 1}; \xi_\tau) | \leq \frac{2 C d}{\mu} \ .
\end{equation}

Besides:
\begin{equation}\label{eq:term2_2}
\begin{aligned}
    &\E\qty[ \frac{\sqrt{1 - \beta_2} \vm_{t, i}^2}{\sqrt{\beta_2 \vv_{t, i} + \zeta} \qty(\sqrt{\vv_{t + 1, i} + \zeta} + \sqrt{\beta_2 \vv_{t, i} + \zeta})^2} \Bigg| \gF_t] \\
    \overset{(a)}{\leq}& \E\qty[\frac{\vv_{t + 1, i} + \zeta - \qty(\beta_2 \vv_{t, i} + \zeta)}{\sqrt{\vv_{t + 1, i} + \zeta} \sqrt{\beta_2 \vv_{t, i} + \zeta}\qty(\sqrt{\vv_{t + 1, i} + \zeta} + \sqrt{\beta_2 \vv_{t, i} + \zeta})} \Bigg| \gF_t] \\
    =& \E\qty[\frac{1}{\sqrt{\beta_2 \vv_{t, i} + \zeta}} - \frac{1}{\sqrt{\vv_{t + 1, i} + \zeta}} \Bigg| \gF_t] \ .
\end{aligned}
\end{equation}
where in step $(a)$ we apply $\qty(\sqrt{\vv_{t + 1, i} + \zeta} + \sqrt{\beta_2 \vv_{t, i} + \zeta}) \leq \sqrt{\vv_{t + 1, i} + \zeta}$.

Hence, substituting \eqref{eq:abs_m}, \eqref{eq:term2_2} into \eqref{eq:term2_1}:
\begin{equation}\label{eq:term2_3}
\begin{aligned}
    \circled{2} \leq& \frac{1}{1 - \kappa} \eta \sum_i^d \qty(\frac{1}{2 \gamma_0} \frac{\qty| \nabla_i F_\mu(\vtheta_t)|^2}{\sqrt{\beta_2 \vv_{t, i} + \zeta}} + \frac{\gamma_0}{2} \frac{4 C^2 d^2}{\mu} \E\qty[\frac{1}{\sqrt{\beta_2 \vv_{t, i} + \zeta}} - \frac{1}{\sqrt{\vv_{t + 1, i} + \zeta}} \Bigg| \gF_t]) \\
    =& \frac{1 - \beta_1}{4 \qty(1 - \kappa)} \eta \sum_i^d \frac{\qty| \nabla_i F_\mu(\vtheta_t)|^2}{\sqrt{\beta_2 \vv_{t, i} + \zeta}} + \frac{4C^2 d^2}{\qty(1 - \beta_1) \qty(1 - \kappa) \mu^2} \eta \sum_i^d \E\qty[\frac{1}{\sqrt{\beta_2 \vv_{t, i} + \zeta}} - \frac{1}{\sqrt{\vv_{t + 1, i} + \zeta}} \Bigg| \gF_t] \ ,
\end{aligned}
\end{equation}
where we let $\gamma_0 = \frac{2}{1 - \beta_1}$ in the last step.

Next, the term $\circled{3}$ can be bounded as below:
\begin{equation}\label{eq:term3}
\begin{aligned}
    \circled{3} =& \frac{\beta_1}{1 - \kappa} \eta \sum_i^d \nabla_i F_\mu(\vtheta_t) \vm_{t, i} \qty(\frac{1}{\sqrt{\beta_2 \vv_{t, i} + \beta_2 \zeta}} - \frac{1}{\sqrt{\beta_2 \vv_{t, i} + \zeta}}) \\
    \overset{(a)}{\leq}& \frac{\beta_1}{1 - \kappa} \eta \sum_i^d \qty|\nabla_i F_\mu(\vtheta_t)| \qty|\vm_{t, i}| \qty|\frac{1}{\sqrt{\beta_2 \vv_{t, i} + \beta_2 \zeta}} - \frac{1}{\sqrt{\beta_2 \vv_{t, i} + \zeta}}| \\
    =& \frac{\beta_1}{1 - \kappa} \eta \sum_i^d \qty|\nabla_i F_\mu(\vtheta_t)| \qty|\vm_{t, i}| \qty|\frac{\qty(1 - \beta_2) \zeta}{\sqrt{\beta_2 \vv_{t, i} + \beta_2 \zeta} \sqrt{\beta_2 \vv_{t, i} + \zeta} \qty(\sqrt{\beta_2 \vv_{t, i} + \beta_2 \zeta} + \sqrt{\beta_2 \vv_{t, i} + \zeta})}| \\
    \overset{(b)}{\leq}& \frac{\beta_1}{1 - \kappa} \eta \sum_i^d \frac{1}{\sqrt{\beta_2 \vv_{t, i} + \zeta}} \qty|\nabla_i F_\mu(\vtheta_t)| \qty|\frac{\sqrt{1 - \beta_2} \zeta}{\sqrt{\beta_2} \qty(\sqrt{\beta_2 \vv_{t, i} + \beta_2 \zeta} + \sqrt{\beta_2 \vv_{t, i} + \zeta})}| \\
    \overset{(c)}{\leq}& \frac{\beta_1}{1 - \kappa} \eta \sum_i^d \qty(\frac{\qty|\nabla_i F_\mu(\vtheta_t)|^2}{2 \gamma_1 \sqrt{\beta_2 \vv_{t, i} + \zeta}} + \frac{\gamma_1 \qty(1 - \beta_2) \zeta^2}{2 \beta_2 \sqrt{\beta_2 \vv_{t, i} + \zeta} \qty(\sqrt{\beta_2 \vv_{t, i} + \beta_2 \zeta} + \sqrt{\beta_2 \vv_{t, i} + \zeta})^2}) \\
    \overset{(d)}{\leq}& \frac{\beta_1}{1 - \kappa} \eta \sum_i^d \qty(\frac{\qty|\nabla_i F_\mu(\vtheta_t)|^2}{2 \gamma_1 \sqrt{\beta_2 \vv_{t, i} + \zeta}} + \frac{\gamma_1 \qty(1 - \beta_2) \sqrt{\zeta}}{8 \beta_2^2}) \\
    =& \frac{1 - \beta_1}{4 \qty(1 - \kappa)} \eta \sum_i^d \frac{\qty|\nabla_i F_\mu(\vtheta_t)|^2}{\sqrt{\beta_2 \vv_{t, i} + \zeta}} + \frac{\beta_1^2 \qty(1 - \beta_2)}{4 \beta_2^2 \qty(1 - \beta_1) \qty(1 - \kappa)} \eta d \sqrt{\zeta} \\
    \overset{(e)}{\leq}& \frac{1 - \beta_1}{4 \qty(1 - \kappa)} \eta \sum_i^d \frac{\qty|\nabla_i F_\mu(\vtheta_t)|^2}{\sqrt{\beta_2 \vv_{t, i} + \zeta}} + \frac{\beta_1^2 \qty(1 - \beta_2)}{\qty(1 - \beta_1) \qty(1 - \kappa)} \eta d \sqrt{\zeta} \ ,
\end{aligned}
\end{equation}
where $(a)$ comes from Cauchy-Schwarz inequality, $(b)$ results from the fact that $\frac{\vm_{t, i}^2}{\vv_{t, i} + \zeta} \leq \frac{1}{1 - \beta_2}$, $(c)$ is because of $ab \leq \half (a^2 + b^2)$, and $(d)$ is due to $\sqrt{\vv_{t, i} + \zeta} \leq \sqrt{\zeta}$. In step $(e)$, we assume $2 \beta_2 \geq 1$ and let $\gamma_1 = \frac{2 \beta_1}{1 - \beta_1}$.

Term $\circled{4}$ is bounded as below:
\begin{equation}\label{eq:term4}
\begin{aligned}
    \circled{4} =& \sum_i^d \E\qty[\qty(\nabla_i F_\mu(\vx_t) - \nabla_i F_\mu(\vtheta_t))\qty(\vx_{t + 1, i} - \vx_{t, i})] \\
    \overset{(a)}{\leq}& \frac{1}{1 - \kappa} \sum_i^d \E\qty[\qty|\nabla_i F_\mu(\vx_t) - \nabla_i F_\mu(\vtheta_t)|\qty|\vtheta_{t + 1, i} - \vtheta_{t, i} - \kappa \qty(\vtheta_{t, i} - \vtheta_{t - 1, i})| | \gF_t] \\
    \overset{(b)}{\leq}& \frac{\kappa}{\qty(1 - \kappa)^2} L_1 \sum_i^d \E\qty[\norm{\vtheta_t - \vtheta_{t - 1}} \qty|\vtheta_{t + 1, i} - \vtheta_{t, i} - \kappa \qty(\vtheta_{t, i} - \vtheta_{t - 1, i})| | \gF_t] \\
    \overset{(c)}{\leq}& \frac{\kappa}{\qty(1 - \kappa)^2} L_1 \sum_i^d \E\qty[\norm{\vtheta_t - \vtheta_{t - 1}} \qty|\vtheta_{t + 1, i} - \vtheta_{t, i}| + \kappa \norm{\vtheta_t - \vtheta_{t - 1}} \qty|\vtheta_{t, i} - \vtheta_{t - 1, i}| | \gF_t] \\
    \overset{(d)}{\leq}& \frac{\kappa}{2 \qty(1 - \kappa)^2} \sqrt{d} L_1 \eta^2 \sum_i^d \qty(\qty(1 + 2 \kappa) \frac{\vm_{t, i}^2}{\vv_{t, i} + \zeta} + \E\qty[\frac{\vm_{t + 1, i}^2}{\vv_{t + 1, i} + \zeta} \Bigg| \gF_t]) \\
    \overset{(e)}{\leq}& \frac{1}{2 \qty(1 - \kappa)^2} \sqrt{d} L_1 \eta^2 \sum_i^d \qty(3 \frac{\vm_{t, i}^2}{\vv_{t, i} + \zeta} + \E\qty[\frac{\vm_{t + 1, i}^2}{\vv_{t + 1, i} + \zeta} \Bigg| \gF_t]) \ ,
\end{aligned}
\end{equation}
where $(a)$ is due to \eqref{eq:x_theta_1} and Cauchy-Schwarz inequality, $(b)$ is due to Lemma \ref{lem:Fmu_smoothness}, $(c)$ comes from the fact that $|a - b| \leq |a| + |b|$, and in step $(e)$ we assume $\kappa \leq 1$. In step $(d)$, we apply the following inequality by $ab \leq \frac{a^2}{2 \sqrt{d}} + \frac{\sqrt{d} b^2}{2}$:
\begin{equation}
\begin{aligned}
    &\sum_i^d \qty(\norm{\vtheta_t - \vtheta_{t - 1}} \qty|\vtheta_{t + 1, i} - \vtheta_{t, i}| + \kappa \norm{\vtheta_t - \vtheta_{t - 1}} \qty|\vtheta_{t, i} - \vtheta_{t - 1, i}|) \\
    \leq& \sum_i^d \qty(\frac{\norm{\vtheta_t - \vtheta_{t - 1}}^2}{2 \sqrt{d}} + \frac{\sqrt{d} \qty|\vtheta_{t + 1, i} - \vtheta_{t, i}|^2}{2} + \kappa \qty(\frac{\norm{\vtheta_t - \vtheta_{t - 1}}^2}{2 \sqrt{d}} + \frac{\sqrt{d} \qty|\vtheta_{t, i} - \vtheta_{t - 1, i}|^2}{2})) \\
    =& \frac{\sqrt{d}}{2} \qty(\qty(1 + 2 \kappa) \norm{\vtheta_t - \vtheta_{t - 1}}^2 + \norm{\vtheta_{t + 1} - \vtheta_{t}}^2) \ .
\end{aligned}
\end{equation}

Finally, with the help of \eqref{eq:x_theta_1}, the term $\circled{5}$ is bounded as below:
\begin{equation}\label{eq:term5}
\begin{aligned}
    \circled{5} =& \frac{1}{2 \qty(1 - \kappa)^2} \sqrt{d} L_1 \sum_i^d \E\qty[\qty|\vtheta_{t + 1, i} - \vtheta_{t, i} - \kappa \qty(\vtheta_{t, i} - \vtheta_{t - 1, i})|^2 \Big| \gF_t] \\
    \overset{(a)}{\leq}& \frac{1}{\qty(1 - \kappa)^2} \sqrt{d} L_1 \sum_i^d \E\qty[\qty|\vtheta_{t + 1, i} - \vtheta_{t, i}|^2 + \kappa^2 \qty|\vtheta_{t, i} - \vtheta_{t - 1, i}| \Big| \gF_t] \\
    =& \frac{1}{\qty(1 - \kappa)^2} \sqrt{d} L_1 \eta^2 \sum_i^d \qty(\E\qty[\frac{\vm_{t + 1, i}^2}{\vv_{t + 1, i} + \zeta} \Bigg| \gF_t] + \kappa^2 \frac{\vm_{t, i}^2}{\vv_{t, i} + \zeta})  \\
    \overset{(b)}{\leq}& \frac{1}{\qty(1 - \kappa)^2} \sqrt{d} L_1 \eta^2 \sum_i^d \qty(\E\qty[\frac{\vm_{t + 1, i}^2}{\vv_{t + 1, i} + \zeta} \Bigg| \gF_t] + \frac{\vm_{t, i}^2}{\vv_{t, i} + \zeta}) \ ,
\end{aligned}
\end{equation}
where $(a)$ results from the inequality $\qty(a - b)^2 \leq 2 a^2 + 2 b^2$, and in step $(b)$, we assume $\kappa \leq 1$.

Gathering the results of \eqref{eq:term1}, \eqref{eq:term2_3}, \eqref{eq:term3}, \eqref{eq:term4} and \eqref{eq:term5}, \eqref{eq:termII} can be bounded as below:
\begin{equation}\label{eq:termII_final}
\begin{aligned}
    \E\qty[F_\mu(\vx_{t + 1}) - F_\mu(\vx_t) | \gF_t] 
    &\leq -\frac{1 - \beta_1}{2 \qty(1 - \kappa)} \eta \sum_i^d \frac{\qty|\nabla_i F_\mu(\vtheta_t)|^2}{\sqrt{\beta_2 \vv_{t, i} + \zeta}} + \frac{\beta_1^2 \qty(1 - \beta_2)}{\qty(1 - \beta_1) \qty(1 - \kappa)} \eta d \sqrt{\zeta} \\
    &\quad + \frac{4 \eta C^2 d^3}{\qty(1 - \beta_1) \qty(1 - \kappa) \mu^2} \sum_i^d \E\qty[\frac{1}{\sqrt{\beta_2 \vv_{t, i} + \zeta}} - \frac{1}{\sqrt{\vv_{t + 1, i} + \zeta}}  \Bigg| \gF_t] \\
    &\quad \quad + \frac{5 L_1 \sqrt{d}}{2 \qty(1 - \kappa)^2} \eta^2 \sum_i^d \frac{\vm_{t, i}^2}{\vv_{t, i} + \zeta} + \frac{3 L_1 \sqrt{d}}{2 \qty(1 - \kappa)^2} \eta^2 \sum_i^d \E\qty[\frac{\vm_{t + 1, i}^2}{\vv_{t + 1, i} + \zeta} \Bigg| \gF_t] \\
    &+ \frac{\qty(1 - \beta_1) G_\mu}{\qty(1 - \kappa) \sqrt{\zeta}} \eta \sum_i^d \E\qty[\qty|\hat{\nabla}_i f(\vtheta_t; \xi_{t+1}) - \nabla_i F_\mu(\vtheta_t)| \big| \gF_t] \ .
\end{aligned}
\end{equation}

Considering the summation of \eqref{eq:termII_final} over all iterations $t$ from $0$ to $T - 1$:
\begin{equation}
    \text{LHS} = \sum_{t = 0}^{T - 1} \E\qty[F_\mu(\vx_{t + 1}) - F_\mu(\vx_t)] = \E\qty[F_\mu(\vx_T)] - F_\mu(\vx_0) \triangleq - \Delta \ ,
\end{equation}
\begin{equation}\label{eq:termII_RHS}
\begin{aligned}
    \text{RHS} \overset{(a)}{\leq}& -\frac{1 - \beta_1}{2 \qty(1 - \kappa)} \eta \sum_{t = 0}^{T - 1} \sum_i^d \frac{\qty|\nabla_i F_\mu(\vtheta_t)|^2}{\sqrt{\beta_2 \vv_{t, i} + \zeta}} + \frac{\beta_1^2 \qty(1 - \beta_2)}{\qty(1 - \beta_1) \qty(1 - \kappa)} T d \sqrt{\zeta} \\
    &\quad + \frac{4 C^2 d^3}{\qty(1 - \beta_1) \qty(1 - \kappa) \mu^2} \qty(\frac{1}{\sqrt{\zeta}} + \frac{T \qty(1 - \beta_2)}{\sqrt{\zeta}}) + \frac{\qty(1 - \beta_1)}{2 \qty(1 - \kappa)} \eta T G \\
    &\quad \quad + \frac{4 L_1 \sqrt{d}}{\qty(1 - \kappa)^2} \eta^2 \sum_i^d \qty( \frac{1}{1 - \beta_2} \ln \qty(\frac{\beta_2^T \vv_{0, i} + 4 C^2 d^2 / \mu^2}{\vv_{0, i}}) + 2T) \ ,
\end{aligned}
\end{equation}
where $G \triangleq \frac{2 G_\mu}{\sqrt{\zeta}} \sqrt{d \qty(V + \frac{L_0^2 \eta^2 d \qty(N^2 - 1)}{3 \qty(1 - \beta_2) N^2 K \mu^2} + \frac{L_1^2 \eta^2 d^2 \qty(N - 1)}{2 \qty(1 - \beta_2)})}$ is a constant number. In step $(a)$, we apply the following three inequalities. The first one is:
\begin{equation}
\begin{aligned}
    &\sum_{t = 0}^T \sum_i^d \E\qty[\frac{1}{\sqrt{\beta_2 \vv_{t, i} + \zeta}} - \frac{1}{\sqrt{\vv_{t + 1, i} + \zeta}}] \\
    =& \sum_i^d \qty(\frac{1}{\sqrt{\beta_2 \vv_{0, i} + \zeta}} + \sum_{t = 0}^{T - 2} \E\qty[\frac{1}{\sqrt{\beta_2 \vv_{t + 1, i} + \zeta}} - \frac{1}{\sqrt{\vv_{t + 1, i} + \zeta}}] - \E\qty[\frac{1}{\sqrt{\vv_{T, i} + \zeta}}]) \\
    \leq& \sum_i^d \qty(\frac{1}{\sqrt{\zeta}} + \sum_{t = 0}^{T - 2} \E\qty[\frac{1}{\sqrt{\beta_2 \vv_{t + 1, i} + \zeta}} - \frac{1}{\sqrt{\vv_{t + 1, i} + \zeta}}]) \\
    =& \sum_i^d \qty(\frac{1}{\sqrt{\zeta}} + \sum_{t = 0}^{T - 2} \E\qty[\frac{\qty(1 - \beta_2) \vv_{t+1, i}}{\sqrt{\beta_2 \vv_{t + 1, i} + \zeta} \sqrt{\vv_{t + 1, i} + \zeta} \qty(\sqrt{\beta_2 \vv_{t + 1, i} + \zeta} + \sqrt{\vv_{t + 1, i} + \zeta})}]) \\
    \leq& \sum_i^d \qty(\frac{1}{\sqrt{\zeta}} + \frac{1 - \beta_2}{\sqrt{\zeta}} T) \ .
\end{aligned}
\end{equation}

The second one is:
\begin{equation}
\begin{aligned}
    \sum_{t = 0}^{T - 1} \frac{\qty(1 - \beta_2) \vm_{t, i}^2}{\vv_{t, i} + \zeta} =& \sum_{t = 0}^{T - 1}\frac{\frac{\qty(1 - \beta_2) \vm_{t, i}^2}{\vv_{t, i} - \qty(1 - \beta_2) \vm_{t, i}^2}}{1 + \frac{\qty(1 - \beta_2) \vm_{t, i}^2}{\vv_{t, i} - \qty(1 - \beta_2) \vm_{t, i}^2}} \leq \sum_{t = 0}^{T - 1} \ln \qty(1 + \frac{\qty(1 - \beta_2) \vm_{t, i}^2}{\vv_{t, i} - \qty(1 - \beta_2) \vm_{t, i}^2}) \\
    =& \sum_{t = 0}^{T - 1} \ln \qty(\frac{\vv_{t, i}}{\beta_2 \vv_{t - 1, i}}) = \ln \qty(\frac{\vv_{T, i}}{\vv_{0, i}}) - T \ln \beta_2 \ ,
\end{aligned}
\end{equation}
where we utilize $\ln (1 + a) \leq a$. 

And the last one is:
\begin{equation}
\begin{aligned}
    &\sum_{t = 0}^{T - 1} \sum_i^d \E\qty[\qty|\hat{\nabla}_i f(\vtheta_t; \xi_{t+1}) - \nabla_i F_\mu(\vtheta_t)|] \overset{(a)}{\leq} \sqrt{d} \sum_{t = 0}^{T - 1} \E\qty[\norm{\hat{\nabla} f(\vtheta_t; \xi_{t+1}) - \nabla F_\mu(\vtheta_t)}] \\
    \overset{(b)}{\leq}& \sqrt{d} \sum_{t = 0}^{T - 1} \sqrt{\E\qty[\norm{\hat{\nabla} f(\vtheta_t; \xi_{t+1}) - \nabla F_\mu(\vtheta_t)}^2]} \overset{(c)}{\leq} T \sqrt{d \qty(V + \frac{L_0^2 \eta^2 d \qty(N^2 - 1)}{3 \qty(1 - \beta_2) N^2 K \mu^2} + \frac{L_1^2 \eta^2 d^2 \qty(N - 1)}{2 \qty(1 - \beta_2)})} \ ,
\end{aligned}
\end{equation}
where $(a)$ comes from Cauchy-Schwarz inequality, $(b)$ is due to Jensen's inequality, and $(c)$ comes from Thm. \ref{thm:bias-variance-decomp}.

Reorganizing \eqref{eq:termII_RHS}, we can derive:
\begin{equation}
\begin{aligned}
    &\frac1T \sum_{t = 0}^{T - 1} \sum_i^d \frac{\qty|\nabla_i F_\mu(\vtheta_t)|^2}{\sqrt{\beta_2 \vv_{t, i} + \zeta}} \leq \frac{8 C^2 d^3}{\qty(1 - \beta_1)^2 \mu^2 \eta T} \qty(\frac{1}{\sqrt{\zeta}} + \frac{T \qty(1 - \beta_2)}{\sqrt{\zeta}}) + \frac{\beta_1^2 \qty(1 - \beta_2)}{\qty(1 - \beta_1)^2} d \sqrt{\zeta} \\
    &\quad + \frac{2 \qty(1 - \kappa)}{\qty(1 - \beta_1) \eta T} \Delta + \frac{8 L_1 \sqrt{d}}{\qty(1 - \beta_1) \qty(1 - \kappa) T} \eta \sum_i^d \qty(\frac{1}{1 - \beta_2} \ln \qty(\frac{\beta_2^T \vv_{0, i} + 4 C^2 d^2 / \mu^2}{\vv_{0, i}}) + 2T) + G \ .
\end{aligned}
\end{equation}

To simplify the equation, we choose $1 - \beta_2 = \min \qty(\frac{\qty(1 - \beta_1)^2 \mu^2 \eta \sqrt{\zeta} \epsilon^2}{64 C^2 d^3}, \frac{\qty(1 - \beta_1)^2 \epsilon^2}{4 \beta_1^2 d \sqrt{\zeta}}) \sim \gO\qty(\epsilon^2)$, $T = \max \qty(\frac{64 C^2 d^3}{\qty(1 - \beta_1)^2 \mu^2 \eta \sqrt{\zeta} \eta \epsilon^2}, \frac{8 \qty(1 - \kappa)}{\qty(1 - \beta_1) \eta \epsilon^2}, \frac{64 L_1 \sqrt{d} \eta}{\qty(1 - \beta_1) \qty(1 - \kappa) \qty(1 - \beta_2) \epsilon^2} \sum_i^d \ln \qty(\frac{\beta_2^T \vv_{0, i} + 4 C^2 d^2 / \mu^2}{\vv_{0, i}})) \sim \gO\qty(\epsilon^{-4})$, $\eta = \frac{\qty(1 - \beta_1) \qty(1 - \kappa) \epsilon^2}{128 L_1 d^{3 / 2}} \sim \gO\qty(\epsilon^2)$, and then have:
\begin{equation}\label{eq:termII_eps}
    \frac1T \sum_{t = 0}^{T - 1} \frac{\norm{\nabla F_\mu(\vtheta_t)}^2}{\sqrt{\beta_2 \norm{\vv_t} + \zeta}} \leq \frac1T \sum_{t = 0}^{T - 1} \sum_i^d \frac{\qty|\nabla_i F_\mu(\vtheta_t)|^2}{\sqrt{\beta_2 \vv_{t, i} + \zeta}} \leq \frac{1}{4} \epsilon^2 + \frac{1}{4} \epsilon^2 + \frac{1}{4} \epsilon^2 + \frac{1}{4} \epsilon^2 + G \leq \epsilon^2 + G \ .
\end{equation}

Overall, inserting the results of term I \eqref{eq:termI_0} and term II \eqref{eq:termII_eps} into \eqref{eq:1}:
\begin{equation}
    \qty(\frac1T \sum_{t = 0}^{T - 1} \E\qty[\norm{\nabla F_\mu(\vtheta_t)}])^2 \leq \qty(\sqrt{\zeta} + \Xi + \sqrt{\frac{2}{\beta_1 \qty(1 - \beta_2)}} \qty(1 + \beta_1) \frac{1}{T} \sum_{t=1}^T \E\qty[\|\nabla F_\mu(\vtheta_{t - 1})\|]) \qty(\epsilon^2 + G) \ ,
\end{equation}
which is actually a quadratic inequality. After solving the root of the quadratic equation, we obtain:
\begin{equation}
\begin{aligned}
    \frac1T \sum_{t = 0}^{T - 1} \E\qty[\norm{\nabla F_\mu(\vtheta_t)}] \leq& \sqrt{\frac{2}{\beta_1 \qty(1 - \beta_2)}} \qty(1 + \beta_1) \qty(\epsilon^2 + G) + \qty(\sqrt[4]{\zeta} + \sqrt{\Xi}) \sqrt{\epsilon^2 + G} \\
    \leq& \sqrt{\frac{2}{\beta_1 \qty(1 - \beta_2)}} \qty(1 + \beta_1) \qty(\epsilon^2 + G) + \qty(\sqrt[4]{\zeta} + \sqrt{\Xi}) \qty(\epsilon + \sqrt{G}) \\
    \leq& \sqrt{\frac{2}{\beta_1 \qty(1 - \beta_2)}} \qty(1 + \beta_1) \epsilon^2 + \qty(\sqrt[4]{\zeta} + \sqrt{\Xi}) \epsilon + B_2 \ ,
\end{aligned}
\end{equation}
where $B_2 \triangleq \sqrt{\frac{2}{\beta_1 \qty(1 - \beta_2)}} \qty(1 + \beta_1) G + \qty(\sqrt[4]{\zeta} + \sqrt{\Xi}) \sqrt{G}$.

To derive the convergence guarantee of $F(\vtheta_t)$, we introduce the bias between $F_\mu(\vtheta_t)$ and $F(\vtheta_t)$, which is defined as:
\begin{equation}
\begin{aligned}
    \E\qty[\norm{\nabla F(\vtheta) - \nabla F_\mu(\vtheta)}] \overset{(a)}{=}& \E\qty[\norm{\E_\rvu\qty[\nabla F(\vtheta) - \nabla F(\vtheta + \mu \rvu)]}] \overset{(b)}{\leq} \E\qty[\norm{\nabla F(\vtheta) - \nabla F(\vtheta + \mu \rvu)}] \\
    \overset{(c)}{\leq}& \sqrt{d} L_1 \E\qty[\norm{\mu \rvu}] \overset{(d)}{=} \mu L_1 \sqrt{d} \ ,
\end{aligned}
\end{equation}
where $(a)$ comes from the definition of $F_\mu$ \eqref{eq:fu}, $(b)$ results from Jensen's inequility, $(c)$ is due to Assump. \ref{assump:1}, and $(d)$ follows from the fact that $\rvu \sim  \mathrm{Unif}(\sS^{d - 1})$ and hence $\norm{\rvu} = 1$.

Afterthat, we can bound the convergence of $F(\vtheta_t)$ as below:
\begin{equation}
\begin{aligned}
    \frac1T \sum_{t = 0}^{T - 1} \E\qty[\norm{\nabla F(\vtheta_t)}] \leq& \frac1T \sum_{t = 0}^{T - 1} \E\qty[\norm{\nabla F_\mu(\vtheta_t)}] + \frac1T \sum_{t = 0}^{T - 1} \E\qty[\norm{\nabla F_\mu(\vtheta_t) - \nabla F(\vtheta_t)}] \\
    \leq& \sqrt{\frac{2}{\beta_1 \qty(1 - \beta_2)}} \qty(1 + \beta_1) \epsilon^2 + \qty(\sqrt[4]{\zeta} + \sqrt{\Xi}) \epsilon + \mu L_1 \sqrt{d} + B_2 \ ,
\end{aligned}
\end{equation}
which completes the proof.
\end{proof}

\section{Experiments Setup}\label{appx:exp_setup}

In this section, we first introduce the baselines used in our experiments (Sec. \ref{appx:baselines}), and then we provide experimental details on synthetic functions (Sec. \ref{subsec:synthetic_details}), black-box adversarial attack (Sec. \ref{subsec:adversarial_details}), and memory-efficient LLM fine-tuning (Sec. \ref{subsec:mezo_details}).

\subsection{Baselines}\label{appx:baselines}
First of all, we claims that our experiments compare only the differing gradient estimation methods among all baselines and \ours{}. Consequently, all baselines and \ours{} share the same update rule, such as ZO-AdaMM and $\gR$-AdaZO. Below, we introduce the three baselines used in our study.
\begin{itemize}
[topsep=0pt,leftmargin=10pt,itemsep=0pt]
    \item \textbf{Vanilla ZOO}. This zeroth-order optimization algorithm employs the gradient estimator in \eqref{eq:fd}. When paired with the Adam update rule, it is denoted ZO-AdaMM \citep{zo-adamm}; when paired with the $\gR$-AdaZO update rule, it is referred to as $\gR$-AdaZO \citep{radazo}.
    \item \textbf{ReLIZO} \citep{relizo}. ReLIZO is zeroth-order gradient estimation algorithm, which reuses queries from previous iterations through a quadratically constrained linear program, and effectively decouples sample size from variable dimension.
    \item \textbf{ZOO with HiStorical gradient (ZoHS)}. On the basis of the Vanilla ZOO framework, ZoHS integrates historical gradient information into the gradient estimation procedure. Specifically, the gradeint estimator for ZoHS is formally defined as:
    \begin{equation}
        \hat{\nabla}_{\text{ZoHS}} F(\vtheta_{t - 1}) \triangleq \frac{1}{N} \sum_{n = 1}^N \hat{\nabla} F(\vtheta_{t - n}) \ ,
    \end{equation}
    where $\hat{\nabla} F(\vtheta_{t - n})$ is the gradient estimator of Vanilla ZOO at iteration $t - n$.
\end{itemize}

\subsection{Synthetic Functions}\label{subsec:synthetic_details}

All experiments are conducted in $d = 10000$ dimensions and run for $T = 20000$ iterations. For a fair comparison, all experiments share the same initialization and hyperparameters: the step size $\eta = 0.001$, the number of queries $K = 10$, the smoothing radius parameter $\mu = 0.05$, and the number of histories $N = 6$. The analytical forms of the synthetic functions used in our experiments are as follows:

\begin{itemize}[leftmargin=10pt]
\item \textbf{Ackley Function:}
\begin{equation}
    f(\vtheta) = -20 \exp\qty(-0.2 \sqrt{\frac{1}{d} \sum_{i=1}^D \theta_i^2}) - \exp\qty(\frac{1}{d} \sum_{i=1}^d \cos(2\pi \theta_i)) + 20 + e \ .
\end{equation}

\item \textbf{Levy Function:}
\begin{equation}
    f(\vtheta) = \sin^2(\pi w_1) + \sum_{i=1}^{d-1} \qty(w_i - 1)^2 \qty[1 + 10 \sin^2(\pi w_i + 1)] + \qty(w_d - 1)^2 \qty[1 + \sin^2(2\pi w_d)] \ ,
\end{equation}
where $w_i = 1 + \frac{\theta_i-1}{4}$.

\item \textbf{Quadratic Function:}
\begin{equation}
    f(\vtheta) = \half \sum_{i=1}^d \theta_i^2 \ .
\end{equation}

\item \textbf{Rosenbrock Function:}
\begin{equation}
    f(\vtheta) = \sum_{i=1}^{d-1} \qty[100 (\theta_{i + 1} - \theta_i^2)^2 + (1 - \theta_i)^2] \ .
\end{equation}
\end{itemize}
Note that all four functions have the same optimal solution of zero.

\subsection{Black-box Adversarial Attack}\label{subsec:adversarial_details}
For the black-box adversarial attack, we use the same model as in \citep{relizo}: a simple two-layer CNN trained on the MNIST dataset. To ensure a fair comparison, all experiments utilize the same initialization and the following hyperparameters: step size $\eta = 0.01$, number of queries $K = 2$, smoothing parameter $\mu = 0.5$, and number of histories $N = 6$.

\subsection{Memory-Efficient LLM Fine-Tuning}\label{subsec:mezo_details}
For the memory-efficient fine-tuning of large language models, we select OPT-1.3B and OPT-13B \citep{zhang2022opt} as the pretrained models, and fine-tune them with LoRA adapters on the SST-2 and COPA datasets from the GLUE benchmark \citep{wang2019glue}. All experiments are conducted using the same initialization and hyperparameters: step size $\eta = 0.00005$, number of queries $K = 2$, smoothing parameter $\mu = 0.01$, and history lengths $N = \{15, 50\}$. The batch size is fixed at $16$ for both datasets.

\section{Additional Experiments}\label{sec:additional_experiments}
\subsection{The Equivalence between ZOO and \rein{}}
To empirically validate our core theoretical finding that the Gaussian-smoothed ZOO shares the same convergence as the single-step REINFORCE with baseline (Cor.~\ref{cor:equiv-conv}), we conduct comparison on four synthetic functions. Fig.~\ref{fig:synthetic-baseline} illustrates these comparisons using two baselines: the standard ZOO single-point baseline ($b_t = f(\vtheta_{t-1}; \xi)$, red curves) and an averaged baseline (green curves) proposed for \ours{} in \eqref{eq:baseline-avg}. The results in Fig.~\ref{fig:synthetic-baseline} clearly deliver two key points. First, for any given baseline strategy (either single-point or averaged), the convergence trajectories of ZOO and REINFORCE are virtually indistinguishable across all four synthetic functions. This provides strong numerical evidence supporting our theoretical equivalence. Second, the averaged baseline (green curves) consistently and significantly outperforms the single-point baseline (red curves) for both ZOO and REINFORCE. This manifests as faster convergence and a lower final optimality gap, underscoring the effectiveness of the PO-inspired averaged baseline in reducing variance and improving optimization performance, a central premise of our \ours{}.
\begin{figure*}[t]
    \centering
    \includegraphics[width=1.0\textwidth]{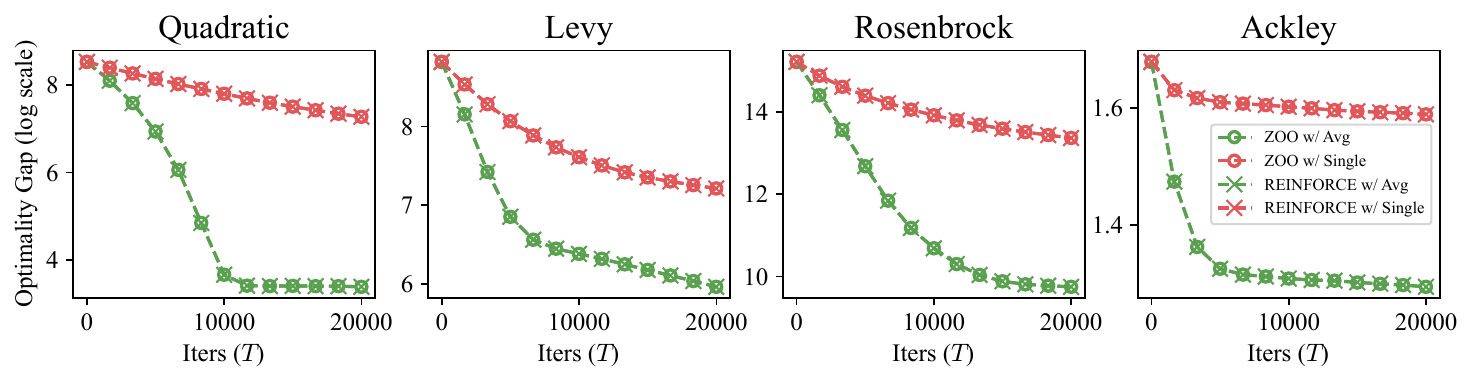}
    \vspace{-7mm}
    \caption{Equivalence of ZOO and \rein{} with two different baselines. The $y$-axis denotes the gap between the current function value and the optimal function value. The green curves denote to the average baseline defined in \eqref{eq:baseline-avg}, while the red curves denote to the single-point baseline $b_t = f(\vtheta_{t-1}; \xi)$. All curves are averaged over 5 independent runs.}
    \label{fig:synthetic-baseline}
    \vspace{-3mm}
\end{figure*}

\subsection{Synthetic Functions Optimization under $\gR$-AdaZO}\label{subsec:synthetic_radazo}
Consistent with the experiments in Section \ref{sec:synthetic}, we further conducted evaluations on four synthetic functions—Ackley, Levy, Quadratic, and Rosenbrock—utilizing the $\gR$-AdaZO update rule. The results are presented in Figure \ref{fig:synthetic_radazo}. Notably, the performance of Vanilla ZOO and ZoHS is highly similar, which indicates that ZoHS does not confer any additional advantage within the $\gR$-AdaZO setting. Furthermore, the performance of \ours{} w/ and w/o historical information is closely comparable, suggesting that \ours{} w/o history is sufficiently effective for practical application under the $\gR$-AdaZO framework.

\begin{figure}[t]
    \centering
    \includegraphics[width=1.0\textwidth]{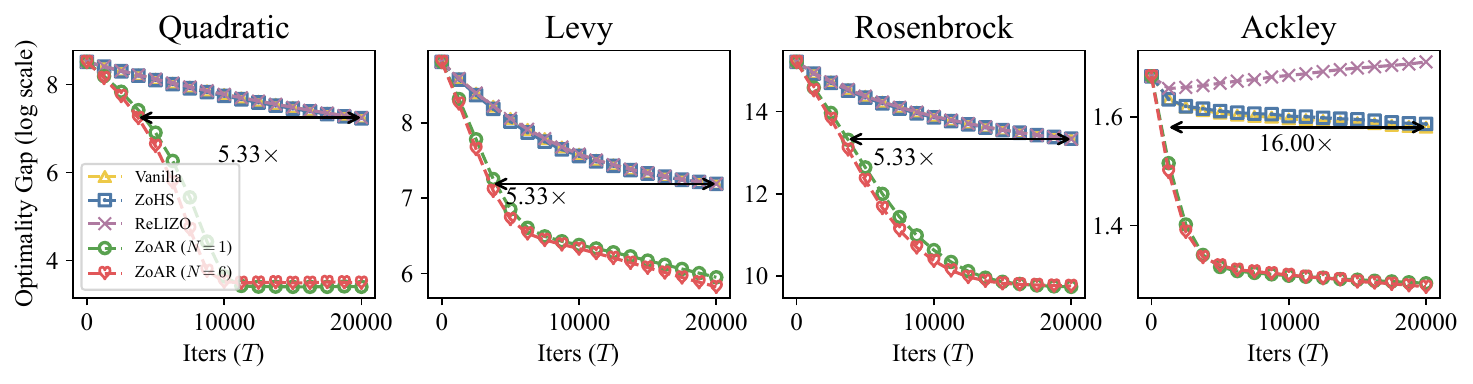}
    \vspace{-7mm}
    \caption{Comparison of convergence among different zeroth-order optimization algorithms on four synthetic functions under $\gR$-AdaZO setting. The $x$-axis denotes the number of iterations, and the $y$-axis denotes the gap between the current function value and the optimal function value, i.e. $F(\vtheta) - \min_{\vtheta'} F(\vtheta')$. All curves are averaged over 5 independent runs.}
    \label{fig:synthetic_radazo}
\end{figure}

\subsection{Memory-Efficient LLM Fine-Tuning}\label{sec:mezo}
The pursuit of memory-efficient fine-tuning for large language models (LLMs) has recently incorporated zeroth-order optimization techniques (\citep{mezo}). However, conventional zeroth-order optimization methods typically exhibit increased variance in gradient estimation, which can adversely affect the convergence of LLM fine-tuning. To mitigate this variance, \ours{} reuses historical information without incurring the additional cost of new queries. In this section, we fine-tunes the OPT-1.3B and OPT-13B models on the SST2 and COPA datasets, respectively, employing the $\mathcal{R}$-AdaZO update rule (refer to \ref{subsec:mezo_details} for more details). \ours{} is compared against the vanilla Zeroth-Order Optimization (ZOO) method, which served as the baseline. The results, presented in Figure \ref{fig:mezo_radazo}, demonstrate that \ours{} outperforms the vanilla ZOO method, particularly for the smaller OPT-1.3B model. Furthermore, the convergence rate of \ours{} incorporating historical information surpasses that of the variant without historical information, suggesting the beneficial role of historical data in LLM fine-tuning.

\begin{figure}[t]
    \centering
    \includegraphics[width=1.0\textwidth]{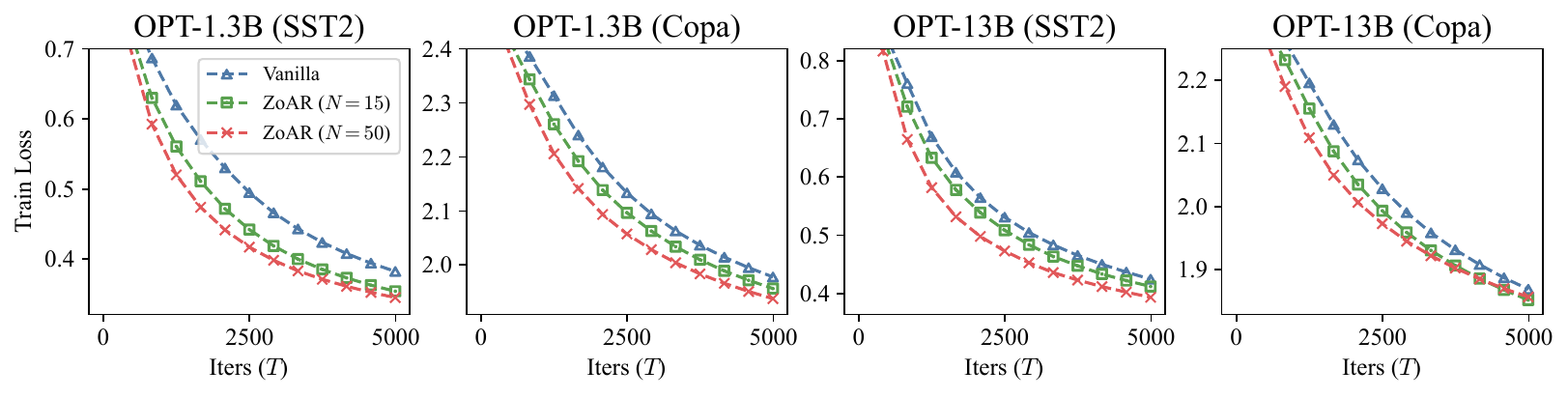}
    \vspace{-7mm}
    \caption{Training loss comparison between Vanilla ZOO and \ours{} for the LLM fine-tuning under different model sizes on SST2 and Copa datasets. Each curve is averaged over 3 independent runs.}
    \label{fig:mezo_radazo}
\end{figure}

\section{Limitations and Broader Impact}\label{appx:limit}

\ours{} is an excellent variance reduction ZOO method, which can not only reduce the memory cost, but also increase the convergence rate. Therefore, \ours{} is suitable for many variance dominate tasks, especially for LLM fine-tuning. Besides, \ours{} works well even with a large smoothing parameter $\mu = 0.01$ or $\mu = 0.1$, which is much larger than the commonly used $\mu = 0.001$ in Vanilla ZOO. This is because \ours{} reuses the queries to smooth the gradient estimation, which is equivalent to using a smaller smoothing parameter $\mu$. This suggests that \ours{} is suitable to some non-smoothness objective function, such as quantized function in quantization aware training (QAT) field. Recent study  \citep{zhou2025quzo} have combined the ZOO with QAT to avoid the inaccuracy occured by straight through estimator (STE). However, quantized function is actually a multiple step function, where small smoothing parameter $\mu$ would not sufficiently change the quantized function value, especially for ultra-low precision (such as FP4), and often leads to worse convergence. \ours{} would be a good choice for ultra-low precision QAT, since it can use a large smoothing parameter $\mu$ to smooth the quantized function, which is left for future work.

Besides, despite its effectiveness, \ours{} presents several limitations. First, similarly with some variance reduction techniques, such as \citep{radazo}, \ours{} reuse historical queries, which introduce additional bias (Thm. \ref{thm:bias-variance-decomp}), potentially leading to inaccurate descent directions. However, the extra bias is proptotional to the length of the historical gradient, and hence we can introduce linear schudule to dynamically adjust the history length, aiming to reduce or even eliminate the bias. This can be left for future work. Moreover, \ours{} retrive the historical samples from random seed storage, which may cost extra computation when history length is large. This can be solved by utilizing parallel computing techniques or employing dynamic scheduling of history length to improve computational efficiency.

\end{document}